\newif\ifcoltshort
\title{Efficient Convex Optimization Requires Superlinear Memory
}
\author{%
   Annie Marsden \\
   Stanford University \\
  \texttt{marsden@stanford.edu}
  \and
   Vatsal Sharan \\
   USC\thanks{Part of the work was done while the author was at MIT.} \\
  \texttt{vsharan@usc.edu}
  \and
   Aaron Sidford \\
   Stanford University \\
  \texttt{sidford@stanford.edu}
   \and
   Gregory Valiant \\
   Stanford University \\
  \texttt{valiant@stanford.edu}
}
\begin{document}
\date{}
\maketitle
\thispagestyle{empty}
\begin{abstract}
 We show that any memory-constrained, first-order algorithm which minimizes $d$-dimensional, $1$-Lipschitz convex functions over the unit ball to $1/\poly(d)$ accuracy using at most $d^{1.25 - \delta}$ bits of memory must make at least $\tilde{\Omega}(d^{1 + (4/3)\delta})$ first-order queries (for any constant $\delta \in [0, 1/4]$). Consequently, the performance of such memory-constrained algorithms are at least a polynomial factor worse than the optimal $\tilde{O}(d)$ query bound for this problem obtained by cutting plane methods that use $\tilde{O}(d^2)$ memory. This resolves one of the open problems in the COLT 2019 open problem publication of Woodworth and Srebro.
\end{abstract}

\section{Introduction}

Minimizing a convex objective function $f: \R^d \to \R$ given access to a first-order oracle---that returns the function evaluation and (sub)gradient $(f(\x),\nabla f(\x))$ when queried for point $\x$---is a canonical problem and fundamental primitive in machine learning and optimization.   
There are methods that, given any $1$-Lipschitz, convex $f : \R^d \rightarrow \R$ accessible via a first-order oracle, compute an $\epsilon$-approximate minimizer over the unit ball with just $\bigo(\min\{\epsilon^{-2}, d \log(1/\epsilon)\})$ queries to the oracle. This query complexity is worst-case optimal~\citep{NemirovskiYu83} and foundational in optimization theory. $\bigo(\epsilon^{-2})$ queries is achievable using subgradient descent; this is a simple, widely-used, eminently practical 
 method that solves the problem using a total of $\bigo(d \epsilon^{-2})$ arithmetic operations on $\bigo(\log(d/\epsilon))$-bit numbers.
 On the other hand, building on the $\bigo(d^2 \log(1/\epsilon))$ query complexity of the well-known ellipsoid method \citep{yudin1976informational,shor1977cut}, different cutting plane methods achieve a query complexity of $\bigo(d \log(1/\epsilon))$, e.g. center of mass with sampling based techniques \citep{levin1965algorithm,bertsimas2004solving},  
 volumetric center \citep{vaidya1989new,atkinson1995cutting}, inscribed ellipsoid \citep{khachiyan1988method,nesterov1989self}; these methods are perhaps less frequently used in practice and large-scale learning and all use at least $\Omega(d^3 \log(1/\epsilon))$-time, even with recent improvements \citep{LeSiWong15, JiLeSoWong20}.\footnote{This arises from $\Omega(d \log(1/\epsilon))$ iterations of working in different change of basis or solving a linear system, each of which takes $\Omega(d^2)$-time naively.
 }

Though state-of-the-art cutting plane methods have larger computational overhead and are sometimes regarded as impractical, for small enough $\epsilon$, they give the state-of-the-art query bounds. Further, in different theoretical settings, e.g.,  
semidefinite programming \citep{anstreicher2000volumetric,kartik12sdp,LeSiWong15}, submodular optimization \citep{mccormick2005submodular,grotschel2012geometric,LeSiWong15,jiang2021minimizing} and equilibrium computation \citep{papadimitriou2008computing,jiang2011polynomial}, cutting-plane-methods have yielded state-of-the-art runtimes at various points of time. This leads to the natural question of what is needed of a method to significantly outperform gradient descent and take advantage of the improved query complexity enjoyed by cutting plane methods? Can we design methods that obtain optimal query complexities while maintaining the practicality 
of gradient descent methods?

The COLT 2019 open problem ~\cite{woodworth2019open} proposed tackling this question through the lens of memory: to what extent is large memory necessary to achieve improved query complexity?  {Any algorithm for the problem must use $\Omega\left(d \log(1/\eps\right)$ memory irrespective of its query complexity, since that much memory is needed to represent the answer \citep{woodworth2019open}.
Subgradient descent matches this lower bound and can be implemented with only $\bigo(d \log(1/\epsilon))$-bits of memory \citep{woodworth2019open}. However,}  all known methods that achieve a query complexity significantly better than gradient descent, e.g. cutting plane methods, use $\Omega(d^2 \log(1/\epsilon))$ bits of memory. Understanding the trade-offs in memory and query complexity could inform the design of future efficient optimization methods.

In this paper we show that memory \emph{does} play a critical role in  attaining an optimal query complexity for convex optimization. Our main result is the following theorem which shows that any algorithm whose memory usage is sufficiently small (though potentially, still superlinear) must make polynomially more queries to a first-order oracle than cutting plane methods. Specifically, any algorithm that uses significantly less than $d^{1.25}$ bits of memory requires a polynomial factor more first-order queries than the optimal $\bigo(d \log(d))$ queries achieved by quadratic memory cutting plane methods. {This establishes the first non-trivial tradeoff between memory and query complexity for convex optimization with access to a first-order oracle.}

\begin{figure}
    \centering
    \includegraphics[width=0.6\linewidth]{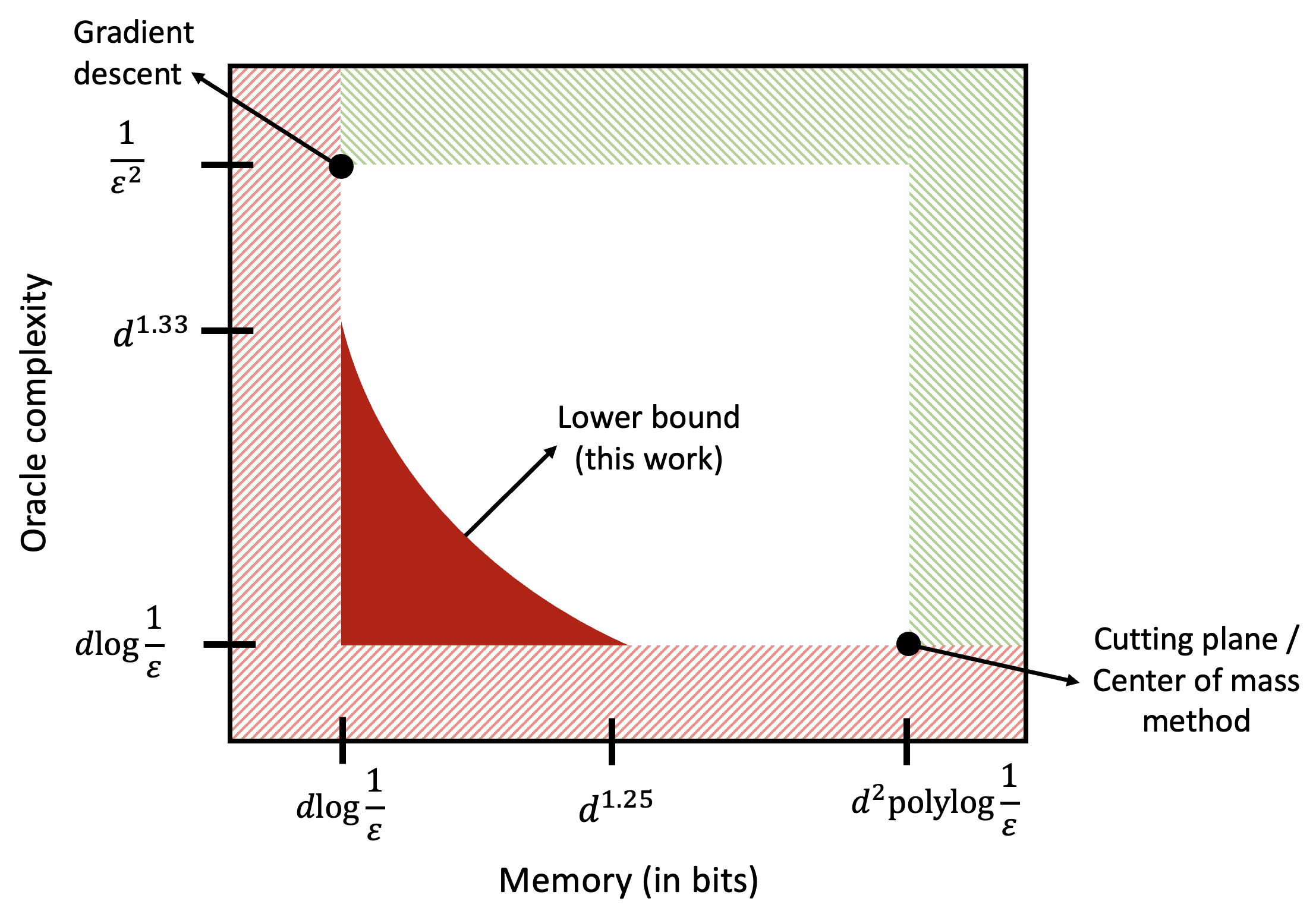}
    \caption{Tradeoffs between available memory and first-order oracle complexity for minimizing 1-Lipschitz convex functions over the unit ball (adapted from \cite{woodworth2019open}). The dashed red 
    region corresponds to information-theoretic lower bounds on the memory and query-complexity. The dashed green 
    region corresponds to known upper bounds. This work shows that the solid red region is not achievable for any algorithm.
    }
    \label{fig:tradeoff}
\end{figure}

\begin{theorem}
\label{thm:clean}
 For some {$\epsilon \ge 1/d^7$} and any $\delta \in [0,\sfrac{1}{4}]$, any (potentially randomized) algorithm which outputs an $\epsilon$-optimal point with  probability at least $\sfrac{2}{3}$ given first-order oracle access to any $1$-Lipschitz convex function must use either at least $d^{1.25 - \delta}$ bits of memory or make $\tilde{\Omega}(d^{1+ \frac{4}{3}\delta})$ first-order queries (where the $\tilde{\Omega}$ notation hides factors poly-logarithmic in $d$).
\end{theorem}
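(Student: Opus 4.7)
The plan is to recast Theorem~\ref{thm:clean} as the single tradeoff $M \cdot T^{3/4} = \tilde{\Omega}(d^2)$ (whose endpoints $(M,T)=(d^{5/4},d)$ and $(M,T)=(d,d^{4/3})$ recover the two extremes of the stated bound), apply Yao's principle to reduce to a deterministic algorithm against a hard random input distribution, and then use an information-theoretic argument to show that a memory-constrained algorithm cannot extract enough ``effective'' information per first-order query to recover a near-minimizer. Throughout I would assume the algorithm is deterministic, work with a hidden random object $A$ whose identity determines the objective $f_A$, and aim to show that any $\epsilon$-optimal output forces $\tilde{\Omega}(d^2)$ bits of mutual information between the algorithm's final state and $A$.

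For the hard family I would use a Nemirovski-style piecewise-linear construction: draw hidden vectors $a_1,\ldots,a_n \in \mathbb{R}^d$ independently and uniformly from $\{\pm 1/\sqrt{d}\}^d$ with $n = \Theta(d \log d)$, and let $f_A(x) = \max_{i \in [n]} \langle a_i, x\rangle - \eta_i$ for a carefully chosen decreasing sequence $\{\eta_i\}$, possibly augmented by a mild quadratic regularizer to pin the minimizer inside the unit ball. A first-order query at $x$ is answered adversarially by returning $(f_A(x), a_{i^*})$ where $i^*$ is chosen from the near-maximizers to reveal as little as possible. The construction needs three properties: $f_A$ is $1$-Lipschitz; any $\epsilon$-optimal $\hat x$ for $\epsilon = 1/\poly(d)$ must satisfy $\langle a_i, \hat x\rangle \le -1 + 1/\poly(d)$ for essentially every $i \in [n]$; and by anti-concentration of $\pm 1/\sqrt{d}$ vectors this near-orthogonality constraint forces $\hat x$ to encode $\tilde{\Omega}(d^2)$ bits about $A$.

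The core is an accounting of conditional entropy $H_t := H(A \mid S_t, \Pi_{\le t})$, where $S_t$ is the memory state and $\Pi_{\le t}$ the transcript of queries and responses after round $t$. The naive bound is that each query response is a single $d$-bit vector $a_{i^*}$, yielding only $H_0 - H_T \le Td$ and therefore the weak tradeoff $M + Td \gtrsim d^2$. The refinement I would seek is a saturation lemma showing that, with $M$ bits of memory, the algorithm can steer queries to fresh informative rows only with amortized efficiency $\tilde{O}(M^\alpha d^\beta)$ per query for appropriate exponents, because targeting an unexplored $a_i$ requires retaining nontrivial context about earlier queries. Iterating this bound across $T$ rounds and comparing to the $\tilde{\Omega}(d^2)$ information lower bound on $\hat x$ should give $M \cdot T^{3/4} \ge \tilde{\Omega}(d^2)$.

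The main obstacle is proving precisely this refined per-query information bound with the correct $3/4$-exponent, rather than the trivial $d$ bits per query. I expect this to require a chain-rule decomposition of the transcript in the spirit of Raz-style memory-sample lower bounds, adapted to the adaptive, continuous, geometric setting of first-order optimization: specifically, one would argue that the subset of query points which ``hit'' an as-yet-unseen $a_i$ forms a low-measure region whose identification itself consumes memory bits, and that the adversarial oracle can enforce this by breaking ties among candidate subgradients in favor of already-revealed rows. Handling the adaptivity of queries, the continuous geometry (so that standard discrete communication-complexity compression arguments do not apply out of the box), and the adversarial oracle's freedom all at once is where I expect essentially all the technical difficulty to lie.
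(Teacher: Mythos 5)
Your proposal hits the right final tradeoff ($M\cdot T^{3/4} = \tilde\Omega(d^2)$), but the route you sketch has a fatal flaw in its core claim, and it is missing the structural idea that actually makes the paper's argument go through.

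The claim you want to prove—``any $\epsilon$-optimal output forces $\tilde\Omega(d^2)$ bits of mutual information between the algorithm's final state and $A$''—cannot be true. The final state $S_T$ is by hypothesis at most $M\le d^{5/4}$ bits, so $I(S_T;A)\le H(S_T)\le M$, which is polynomially smaller than $d^2$. Even if you instead meant the \emph{output} $\hat{\x}\in\R^d$: pinning a single $d$-dimensional vector to the $\epsilon$-accuracy needed (with $\epsilon=1/\poly(d)$) determines only $O(d\log d)$ useful bits, and the constraint that $\hat{\x}$ be approximately orthogonal to $n$ random $\pm1$ rows removes only $O(\log d)$ bits of entropy per row by anticoncentration, again totalling $O(n\log d)=\tilde O(d)$ bits, not $d^2$. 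So the information quantity you want to lower-bound is simply too small; there is no ``saturation lemma'' that rescues this, and you acknowledge you have no candidate proof for one anyway. The tradeoff cannot be extracted from a one-shot argument about a single terminal object.

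What the paper does differently, and what is essential: the hard function is \emph{not} just a Nemirovski max of $\Theta(d\log d)$ random linear pieces. It is the maximum of two terms, $F_{f,\A}(\x)=\max\{f(\x),\,\eta\linf{\A\x}-\rho\}$, where $\A$ is a fresh high-entropy $\pm1$ matrix (the ``wall'') and $f$ is a Nemirovski function with far fewer terms, $N=\tilde O((d^2/M)^{1/3})$. Progress on $f$ requires observing each of its $N$ informative subgradients; the wall forces every informative query to be near-orthogonal to all rows of $\A$; and the carefully scheduled offsets $i\gamma$ in $f$ force consecutive informative queries to be \emph{robustly linearly independent}. These properties let one reduce each block of $k{+}1\approx M/d$ informative queries to an instance of the \orthgame: output a batch of $k$ robustly-independent vectors approximately orthogonal to $\A$. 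This batch (not a single $\hat{\x}$) is the object that provably carries $\Theta(kd)$ bits about $\A$, and the information-theoretic tension—$I(\A';\Y|\tilde\G,R)\le M$ versus $I(\A';\Y|\tilde\G,R)\gtrsim (n-m)\,k$—forces $m\ge d/5$ queries \emph{per block} whenever $M\lesssim kd$. Multiplying $\Omega(d)$ queries per block by $N/(k{+}1)\approx N d/M$ blocks and optimizing the allowable $N$ yields $T\gtrsim(d^2/M)^{4/3}$. The block/game decomposition, the $k$-vector batch as the information carrier, and the two-part function design are the missing ideas; without them your entropy accounting cannot beat the naive $M+Td\gtrsim d^2$.
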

Beyond shedding light on the complexity of a fundamental memory-constrained optimization problem, we provide several tools for establishing such lower bounds. In particular, we introduce a set of properties which are sufficient for an optimization problem to exhibit a memory lower bound and provide an information-theoretic framework to prove these lower bounds. We hope these tools are an aid to future work on the role of memory in optimization.

This work fits within the broader context of understanding fundamental resource tradeoffs for optimization and learning. For many settings, establishing (unconditional) query/time or memory/time tradeoffs is notoriously hard---perhaps akin to P vs NP (e.g. providing time lower bounds for cutting plane methods).  Questions of memory/query and memory/data tradeoffs, however, have a more information theoretic nature and hence seem more approachable.  Together with the increasing importance of memory considerations in large-scale optimization and learning, there is a strong case for pinning down the landscape of such tradeoffs, which may offer a new perspective on the current suite of algorithms and inform the effort to develop new ones.

\subsection{Technical Overview and Contributions}\label{sec:tech_overview}
To prove Theorem~\ref{thm:clean}, we provide an explicit distribution over functions that is hard for any memory-constrained randomized algorithm to optimize. Though the proof requires care and we introduce a variety of machinery to obtain it, this lower bounding family of functions is simple to state. The function is a variant of the so-called ``Nemirovski function,'' which  has been used to show lower bounds for highly parallel non-smooth convex optimization \citep{Nemirovski94, BubeckJiLeLiSi19, BalkanskiSi18}.

Formally, our difficult class of functions for memory size $M$ is constructed as follows: for some $\gamma >0$ and $\nnem = \tildeo \left(( d^2/\memsize )^{1/3}\right)$  let $\v_1,\ldots,\v_N$ be unit vectors drawn i.i.d. from the $d$ dimensional scaled hypercube $\v_i \simiid \mathsf{Unif} \left( d^{-1/2}  \mathcal{H}_d \right)$ and let $\vec{a}_1,\ldots,\vec{a}_{\lfloor d/2 \rfloor}$ be drawn i.i.d. from the hypercube, $\vec{a}_j \sim \mathsf{Unif} \left( \mathcal{H}_d\right)$ where $\alpha \mathcal{H}_d \defeq \left \{ \pm \alpha \right \}^d$. Let $\mat{A} = \left( \vec{a}_1, \dots, \vec{a}_{\lfloor d/2 \rfloor} \right)$ and 
\begin{equation}
    F(\x) = (1/d^6) \max \left \{  d^5 \linf{\A \x} - 1, \max_{i \in [\nnem]} \v_i^{\top} \x  - i \gamma \right \}.\label{eq:hard_function}
\end{equation}

We describe the key components of this function and the intuition behind their role in Section~\ref{sec:proof_strategy}.  At a high level, we  show that any algorithm that optimizes this function must, with high probability, observe each of the $\v_1,\ldots,\v_N$ and will  observe $\v_i$ before $\v_{i+1}$ due to the $i \gamma$ term.  The intuition for the memory-query tradeoff is that after seeing $\v_{i}$, in order to see $\v_{i+1}$, the algorithm must query a vector near the nullspace of $\mat{A}$ but with a significant component in the direction of $\v_{i}$. Since  $\v_{i}$ is random, this requires finding a ``new'' vector in the nullspace of $\mat{A}$. This implies that for some appropriately chosen $k$ and a block of $k$ Nemirovki vectors $\{\v_i,\v_{i+1},\ldots,\v_{i+k-1}\}$, to see $\{\v_{i+1},\v_{i+2},\ldots,\v_{i+k}\}$ after seeing $\v_{i}$ the algorithm must query $k$ sufficiently independent vectors in the null space of $\mat{A}$. We show that an algorithm with insufficient memory to store $\mat{A}$ cannot find such a large set of sufficiently independent vectors in the nullspace of $\mat{A}$, and must essentially ``re-learn'' $\mat{A}$ for every such block of $k$ Nemirovski vectors.

Rather than give a direct proof of \cref{thm:clean} using the explicit function in Eq. \ref{eq:hard_function} we provide a more abstract framework which gives broader insight into which kinds of functions could lead to non-trivial memory-constrained lower bounds, and which might lead to tighter lower bounds in the future. To that end we introduce the notion of a \emph{\functionclassname} which delineates the key properties of a distribution over functions that lead to memory-constrained lower bounds.  We show that for such functions, the problem of memory-constrained optimization is at least as hard as the following problem of finding a set of vectors which are approximately orthogonal to another set of vectors. The problem is motivated by the proof intuition discussed above, where we argued that minimizing Eq. \ref{eq:hard_function} requires finding a large number of sufficiently independent vectors in the null space of $\mat{A}$.

\begin{definition}[Informal version of the \orthgame]\label{def:informal_game}
Given $\A \in \{\pm 1\}^{d/2 \times d}$, \playerone's objective is to return a set of $\nreturn$ vectors $\{\y_1,\dots, \y_{\roundlen}\}$ which satisfy
\begin{enumerate}
    \item 
    $\forall  i \in [\nreturn], \y_i$ is approximately orthogonal to all the rows of $\A: \linf{\mat{A} \y_i}/\norm{\y_i}_2 \leq \thetathreshold$.
    \item The set of vectors $\{\y_1,\dots, \y_{\roundlen}\}$ is \emph{robustly linearly independent}:  denoting $S_0=\emptyset, S_i=\spcommand(\y_1,\dots, \y_{i})$, $\norm{\proj_{S_{i-1}}(\y_i)}_2/\norm{\y_i}_2 \le 1-1/d^2$,
\end{enumerate}
where the notation $\proj_S(\x)$ denotes the vector in the subspace $S$ which is closest in $\norm{\cdot}_2$ to $\x$.
The game proceeds as follows: \capsplayerone\ first observes $\A$ and stores an $M$-bit $\messages$ about $\A$. She does not subsequently have free access to $\A$, but can adaptively make up to $m$ queries as follows: for $i \in [m]$, based on $\messages$ and all previous queries and their results, she can request any row $i \in [d/2]$ of the matrix $\A$. Finally, she  outputs a set of $\nreturn$  vectors as a function of  $\messages$ and all $m$ queries and their results.  
\end{definition}
 Note that \playerone\ can trivially win the  game for $M\ge {\Omega}(d\nreturn), m=0$ (by just storing a satisfactory set of $\nreturn$ vectors in the $\messages$) and for $M=0, m=d/2$ (by querying all rows of $\A$). We show a lower bound that this is essentially all that is possible: \emph{for $\A$ sampled uniformly at random from $\pmset^{d/2 \times d}$, if $M$ is a constant factor smaller than $d\roundlen$, then \playerone\ must make at least $d/5$ queries to win with probability at least $2/3$}.  Our analysis proceeds via an intuitive information-theoretic framework, which could have applications for showing query lower bounds for memory-constrained algorithms in other optimization and learning settings. We sketch the analysis in Section \ref{sec:entropy_sketch}.
\subsection{Related Work}

\paragraph{Memory-sample tradeoffs for learning}

There is a recent line of work to understand learning under information constraints such as limited memory or communication constraints \citep{balcan2012distributed,duchi2013local,ZhangDuJoWa13,garg2014communication, shamir2014fundamental, arjevani2015communication, SteinhardtDu15, SteinhardtVaWa16, BravermanGaMaNgWo16, DaganSh18, dagan2019space, woodworth2021min}. Most of these results obtain lower bounds for the regime when the available memory is less than that required to store a single datapoint (with the notable exception of \cite{DaganSh18} and \cite{dagan2019space}). However the breakthrough paper \cite{Raz17} showed an exponential lower bound on the number of random examples needed for learning parities with memory as large as quadratic. Subsequent work extended and refined this result to multiple learning problems over finite fields \citep{MoshkovitzMo17,BeameGhOvXi18,MoshkovitzMo18,KolRaTa17,Raz18,GargRaTa18}.

Most related to our line of work is \cite{sharan2019memory}, which considers the continuous valued learning/optimization problem of performing linear regression given access to randomly drawn examples from an isotropic Gaussian. They show that any sub-quadratic memory algorithm for the problem needs $\Omega(d\log\log(1/\eps)))$ samples to find an $\eps$-optimal solution for $\eps \le 1/d^{\Omega(\log d)}$, whereas in this regime an algorithm with memory $\Tilde{\bigo}(d^2)$ can find an $\eps$-optimal solution with only $d$ examples. Since each example provides an unbiased estimate of the expected regression loss, this translates to a lower bound for convex optimization given access to a stochastic gradient oracle. However the upper bound of $d$ examples is not a generic convex optimization algorithm/convergence rate but comes from the fact that the linear systems can be solved to the required accuracy using $d$ examples.

There is also significant work on memory lower bounds for streaming algorithms, e.g. \citep{alon1999space,bar2004information,clarkson2009numerical,dagan2019space}, where the setup is that the algorithm only gets  a single-pass over a data stream.
The work of \cite{dagan2019space} mentioned earlier uses an Approximate Null-Vector Problem (ANVP) to show memory lower bounds, which shares some similarities with the Orthogonal Vector Game used in our proof (informally presented in Definition~\ref{def:informal_game}). In the ANVP, the objective is to find a single vector approximately orthogonal to a stream of vectors. The goal of the ANVP is similar to the goal of the Orthogonal Vector Game, which is to find a set of such vectors. However, the key difference is that the ANVP is in the (one-pass) streaming setting, whereas the Orthogonal Vector Game allows for stronger access to the input in two senses. First, the Orthogonal Vector Game algorithms see the entire input and can store a $M$-bit $\messages$ about the input, and second, the algorithms adaptively query the input. The main challenge in our analysis (discussed later in Section~\ref{sec:entropy_sketch}) is to bound the power of these adaptively issued queries in conjunction with the $\messages$.
\paragraph{Lower bounds for convex optimization}
Starting with the early work of \cite{NemirovskiYu83}, there is extensive literature on lower bounds for convex optimization. Some of the key results in this area include classic lower bounds for finding approximate minimizers of Lipschitz functions with access to a subgradient oracle \citep{NemirovskiYu83,nesterov2003introductory,braun2017lower}, including recent progress on lower bounds for randomized algorithms \citep{woodworth2016tight,woodworth2017lower,simchowitz2018tight,simchowitz2018randomized,braverman2020gradient,sun2021querying}. There is also work on the effect of parallelism on these lower bounds \citep{Nemirovski94,BalkanskiSi18,diakonikolas2019lower,BubeckJiLeLiSi19}. For a broader introduction to the oracle complexity of optimization, we refer the reader to surveys such as \cite{nesterov2003introductory} and \cite{Bubeck14}.
\paragraph{Memory-limited optimization algorithms}
While the focus of this work is lower bounds, there is a long line of work on developing memory-efficient  optimization algorithms, including methods that leverage second-order structure via first-order methods. Such methods include Limited-memory-BFGS \citep{Nocedal-80,Liu.Nocedal-MP89} and the conjugate gradient (CG) method for solving linear systems \citep{Hestenes.Stiefel-52} and various non-linear extensions of CG \citep{Fletcher.Reeves-64,Hager.Zhang-06} and methods based on subsampling and sketching the Hessian \citep{pilanci2017newton,xu2020newton,roosta2019sub}. A related line of work is on communication-efficient optimization algorithms for distributed settings (see \cite{jaggi2014communication,shamir2014communication,jordan2018communication,alistarh2018convergence,ye2018communication} and references therein).

\subsection{Subsequent and Future Work}

Our work is a first step towards establishing optimal memory/query trade-offs for optimization. An immediate open problem suggested by our work is that of improving our bounds, both to match the quadratic memory upper bound of cutting plane methods, establish tight bounds for all values of the desired accuracy, $\epsilon$, and fully resolve the open problems of Woodworth and Srebro \cite{woodworth2019open}. Since the initial publication of this work, there have been several advances in this direction. 

First, recent work of  Blanchard, Zhang and Jaillet \cite{blanchard2023quadratic} built upon our proof framework to show that quadratic memory is indeed necessary for deterministic algorithms to obtain near-optimal query complexities. In our construction in Eq. \ref{eq:hard_function}, we  have $\nnem = \tildeo \left( \left( d^2/\memsize\right)^{1/3} \right)$  Nemirovski vectors $\v_1,\ldots,\v_N$, however, packing more Nemirovski vectors would lead to a stronger lower bound. We discuss this further in Remark~\ref{rem:whyd43}. In the case of deterministic algorithms, \cite{blanchard2023quadratic} provide a clever adaptive strategy which packs $N=\Omega(d)$ Nemirovski vectors and obtains a quadratic memory lower bound.

A second recent paper, Chen and  Peng~\cite{chen2023memory}, established that quadratic memory is necessary to achieve the optimal convergence rate even for randomized algorithms, at least in the parameter regime where $\epsilon$ is quasipolynomially small in $d$.  Their proof leverages the same hard distribution of functions as we describe (Eq. \ref{eq:hard_function}), but introduces a variant of our Orthogonal Vector Game and develops an elegant analysis for it which allows them to show that a significant number of queries must be expended to observe \emph{each} new Nemirovski vector $v_i$.  In contrast, our proof blocks together groups of such vectors.

Another recent work of Blanchard \cite{blanchard2024gradient} considers the problem of finding a feasible point in a given set with access to a separation oracle. They construct an oracle which requires the algorithm to find robustly linearly independent vectors which are orthogonal to a sequence of subspaces. They build upon our proof framework and develop a novel recursive lower bound technique which allows them to show that gradient descent is almost optimal for the feasibility problem among linear memory algorithms. Their result also implies an exponential lower bound for sub-quadratic memory algorithms to solve the feasibility problem to sufficiently small error.

On the upper bound side, \cite{blanchard2023memory} develops a family of recursive cutting-plane algorithms which use a memory that can vary between $\tilde{O}(d)$ and $\tilde{O}(d^2)$. Previously, algorithms with established query complexities used either $\tilde{O}(d)$ memory or $\tilde{\Omega}(d^2)$.  This work establishes new upper bounds on memory-constrained query complexity for any memory constraint that falls between linear and quadratic. In the regime when $\epsilon\le d^{-\Omega (d)}$, their algorithm matches the memory usage of gradient descent while improving on gradient descent's query complexity.

Even in light of these advances, multiple open problems remain.
For example, relaxing the type of functions and oracles for which we can prove memory-constrained lower bounds is a natural further direction. Interestingly, our proofs (and those of ~\cite{blanchard2023quadratic,chen2023memory}) rely on the assumption that we are optimizing a Lipschitz but non-smooth function, and that at a point queried we only observe the function's value and a single subgradient (rather than the set of all subgradients or all information about the function in a neighborhood) and consequently standard smoothing techniques do not readily apply \citep{diakonikolas2019lower,carmon2021thinking}. Proving lower bounds for smooth functions and beyond could illuminate the necessity of larger memory footprints for prominent memory-intensive optimization methods with practical appeal (e.g., interior point methods and quasi-Newton methods). {Additionally, finding algorithms with new tradeoffs between space usage and query complexity would be very interesting. In particular, improving the memory versus query tradeoffs for more structured functions, e.g., piecewise linear functions, is an interesting question for future work.}

\section{Setup and Overview of Results}
\label{section:prelims}
We consider optimization methods for minimizing convex, Lipschitz-continuous functions $F: \R^d \to \R$ over the unit-ball\footnote{By a blackbox-reduction in \cref{appendix:constrainwlog} our results extend to unconstrained optimization while only losing $\poly(d)$ factors in the accuracy $\epsilon$ for which the lower bound applies.} with access to a first-order oracle. Our goal is to understand how the (oracle) query complexity of algorithms is affected by restrictions on the memory available to the algorithm. Our results apply to a rather general definition of \emph{memory-constrained} first-order algorithms given in the following Definition~\ref{def:memoryalgorithm}. Such algorithms include those that use arbitrarily large memory at query time but can save at most $\memsize$ bits of information in between interactions with the first-order oracle. More formally we have:

\begin{definition} [$\memsize$-bit memory-constrained deterministic algorithm]
\label{def:memoryalgorithm}
An \emph{$\memsize$-bit (memory-constrained) deterministic algorithm} with first-order oracle access, $\algdet$, is the iterative execution of a sequence of functions $\left \{ \phi_{\textrm{query},t}, \phi_{\textrm{update},t} \right \}_{t \geq 1}$, where $t$ denotes the iteration number. The function $\phi_{\textrm{query},t}$ maps the $\nth{(t-1)}$ memory state of size at most $\memsize$ bits to the $\nth{t}$ query vector, $\phi_{\textrm{query},t}(\memory_{t-1}) = \x_t$. The algorithm is allowed to use an arbitrarily large amount of memory to execute $\phi_{\textrm{query},t}$. Upon querying $\x_t$, some first-order oracle returns $F(\x_t)$ and a subgradient $\vec{g}_{\x_t} \in \partial F(\x_t)$. The second function maps the first-order information and the old memory state to a new memory state of at most $\memsize$ bits, $\memory_{t} = \phi_{\textrm{update},t}\left(\x_t, F(\x_t), \g_{\x_t}, \memory_{t-1} \right) $. Again, the algorithm may use unlimited memory to execute $\phi_{\textrm{update},t}$.
\end{definition}
Note that our definition of a memory-constrained algorithm is equivalent to the  definition given by \cite{woodworth2019open}. Our analysis also allows for randomized algorithms which will often be denoted as $\algrand$. 
\begin{definition} [$\memsize$-bit memory-constrained randomized algorithm]
\label{def:memoryalgorithmrand}
An \emph{$\memsize$-bit (memory-constrained) randomized algorithm} with first-order oracle access, $\algrand$, is a deterministic algorithm with access to a string $R$ of uniformly random bits which has length $2^d$.\footnote{We remark that $2^d$ can be replaced by any finite-valued function of $d$.}
\end{definition}
In what follows we use the notation $\oracle_F(
\x)$ to denote the first-order oracle which, when queried at some vector $\x$, returns the pair $(F(\x), \g_{\x})$ where $\g_{\x} \in \partial F(\x)$ is a subgradient of $F$ at $\x$. We will also refer to a sub-gradient oracle and use the overloaded notation $\g_F(\x)$ to denote the oracle which simply returns $\g_{\x}$. It will be useful to refer to the sequence of vectors $\x$ queried by some algorithm $\alg$ paired with the subgradient $\g$ returned by the oracle. 
\begin{definition}[Query sequence]
\label{def:querseq}
Given an algorithm $\alg$ with access to some first-order oracle $\oracle$ we let $\seq(\alg, \oracle) = \left \{ (\x_i, \g_i) \right \}_{i}$ denote the \emph{query sequence} of vectors $\x_i$ queried by the algorithm paired with the subgradient $\g_i$ returned by the oracle $\oracle$. 
\end{definition}
\subsection{Proof Strategy}
\label{sec:proof_strategy}

\begin{figure}
    \centering
    \includegraphics[width=0.8\linewidth]{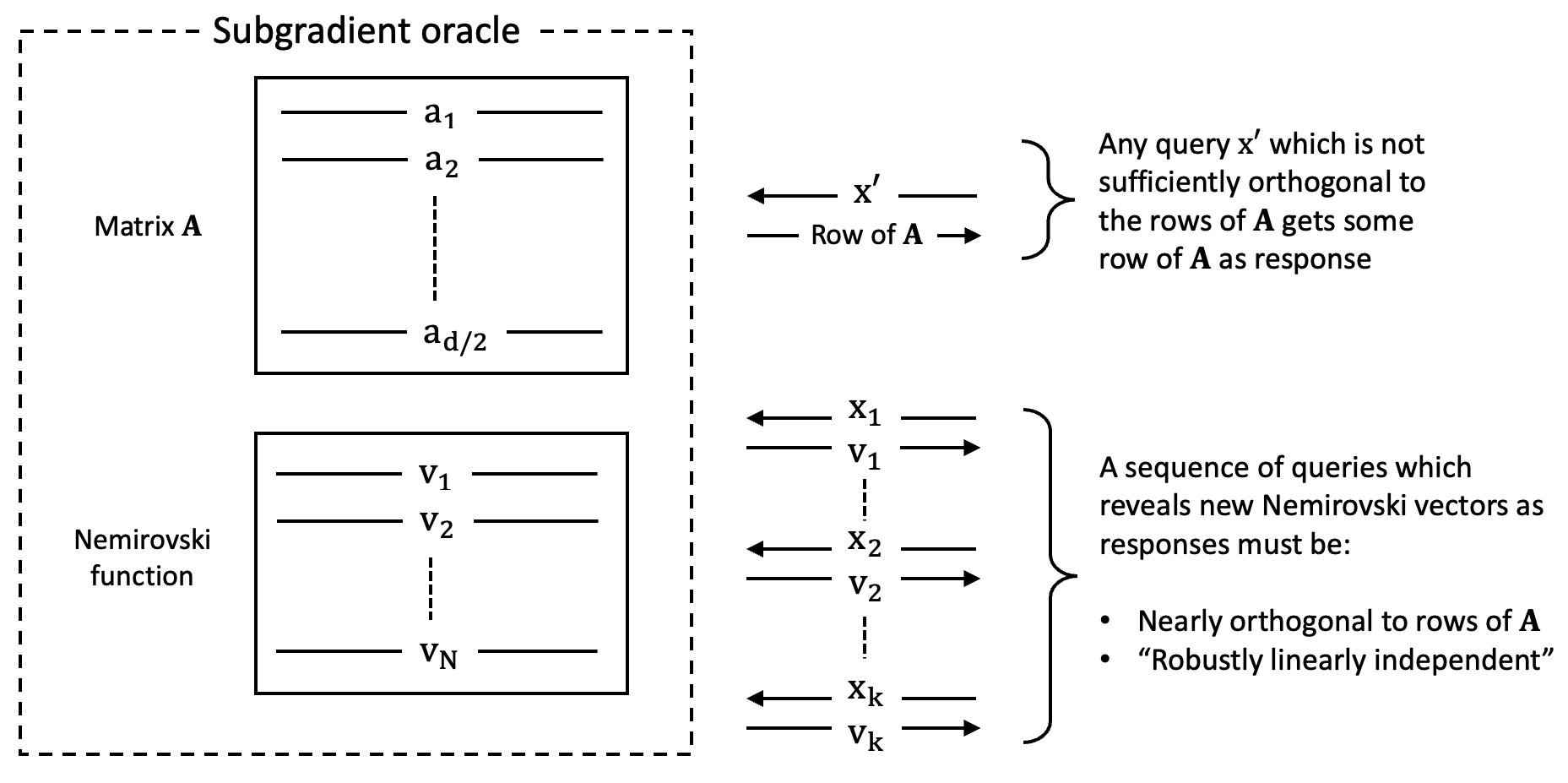}
    \caption{A high-level overview of our proof approach. The rows of $\A$ and the Nemirovski vectors $\{\v_1,\dots, \v_N\}$ are sampled uniformly at random from the hypercube. $\x_i$ is the first query such that the oracle returns $\v_i$ as the response. We show that this function class is ``memory-sensitive" (Definition \ref{def:memorysensitive}) and has the following properties: (1) to successfully minimize the function, the algorithm must see a sufficiently large number of Nemirovski vectors, (2) to reveal new Nemivoski vectors, an algorithm must make queries which are robustly linearly independent and orthogonal to $\A$. Using these properties, we show that minimizing the function is at least as hard as winning the Orthogonal Vector Game (Definition \ref{def:informal_game})  $\approx N/k$ times.  Specifically, we show memory-query tradeoffs for the Orthogonal Vector Game where the goal is to obtain $k$ vectors.  This tradeoff is then leveraged $\approx N/k$ times to obtain the memory-query tradeoff for the optimization problem.
    }
    \label{fig:proof_pic}
\end{figure}

We describe a broad family of optimization problems which may be sensitive to memory constraints (also see Fig. \ref{fig:proof_pic} for an overview). As suggested by the \orthgame\ (Definition \ref{def:informal_game}), the primitive we leverage is that finding vectors orthogonal to the rows of a given matrix requires either large memory or many queries to observe the rows of the matrix. 
With that intuition in mind, let $f : \R^d \to \R$ be a convex function, let $\A \in \R^{n \times d}$, let $\eta$ be a scaling parameter, and let $\rho$ be a shift parameter; define $F_{f, \A, \eta, \rho}(\x)$ as the maximum of $f(\x)$ and $ \eta \linf{\A \x} - \rho $:
\begin{equation}
    \label{eqn:intuitionfunction}
  F_{f, \A, \eta, \rho}(\x) \defeq \max  \left \{ f(\x), \eta \linf{\A \x} - \rho    \right \}.
\end{equation}
We often drop the dependence of $\eta$ and $\rho$ and write $F_{f, \A, \eta, \rho}$ simply as $F_{f, \A}$.  Intuitively, for large enough scaling $\eta$ and appropriate shift $\rho$, minimizing the function $F_{f, \A}(\x)$ requires minimizing $f(\x)$ close to the null space of the matrix $\A$. Any algorithm which uses memory $\Omega(nd)$ can learn and store $\A$ in $\bigo(d)$ queries so that all future queries are sufficiently orthogonal to $\A$; thus this memory rich algorithm can achieve the information-theoretic lower bound for minimizing $f(\x)$ roughly constrained to the nullspace of $\A$. 

However, if $\A$ is a random matrix with sufficiently large entropy then $\A$ cannot be compressed to fewer than $\Omega(n d)$ bits. Thus, for $n = \Omega(d)$, an algorithm which uses only memory $o(d^{2-\delta})$ bits for some constant $0 < \delta \leq  1$ cannot remember all the information about $\A$. Suppose the function $f$ is such that in order to continue to observe new information about the function, it is  insufficient to submit queries that belong to some small dimensional subspace of the null space of $\A$. Then a memory-constrained algorithm must relearn enough information about $\A$ in order to find new vectors in the null space of $\A$ and make queries which return new information about $f$. 

To show our lower bound, we set $f$ in \eqref{eqn:intuitionfunction} to be the Nemirovski function:
\begin{align}
    f(\x) = \max_{i \in [\nnem]} \left(\v_i^{\top} \x  - i \gamma \right)
\end{align}
where the vectors $\v_1,\ldots,\v_N$ are unit vectors drawn i.i.d. from the $d$ dimensional scaled hypercube $\v_i \simiid \mathsf{Unif} ( d^{-1/2}  \mathcal{H}_d )$ and we refer to them as ``Nemirovski vectors.''
With this choice of $f$, the overall function $F_{f, \A}(\x)$ has certain ``memory-sensitive" properties. In particular, to reveal new Nemirovski vectors an algorithm cannot make queries which lie in a low-dimensional subspace. In addition, because of the  $\linf{\A \x}$ term in the definition of $F_{f, \A}(\x)$, queries which reveal new Nemirovski vectors must also be sufficiently orthogonal to $\A$. Together, these properties imply that a sequence of queries which reveals a new set of Nemirovski vectors must also be winning queries for the Orthogonal Vector Game (Definition~\ref{def:informal_game}).  This allows us to leverage our information-theoretic memory-query tradeoffs for the Orthogonal Vector Game. We show that the  algorithm must reveal a sufficiently large number of Nemirovski vectors to optimize $F_{f, \A}(\x)$, therefore we can repeatedly apply the  lower bound for the Orthogonal Vector Game to show our final lower bound (also see Fig. \ref{fig:proof_pic} for a visual overview of the approach).

\subsection{Proof Components}
In summary of \cref{sec:proof_strategy}, there are two critical features of $F_{f,\A}$ which lead to a difficult optimization problem for memory-constrained algorithms. First, the distribution underlying random instances of $\A$ must ensure that $\A$ cannot be compressed. Second, the function $f$ must be such that queries which belong to a small dimensional subspace cannot return too many informative subgradients. We formalize these features by defining a \emph{memory-sensitive base} and a \emph{\functionclassname}. We show that if $\A$ is drawn uniformly at random from a memory-sensitive base and if $f$ and a corresponding first-order oracle are drawn according to a {\functionclassname} then the resulting distribution over $F_{f, \A}(\x)$ and its first-order oracle provides a difficult instance for memory-constrained algorithms to optimize.
\begin{definition}[$(\roundlen$, $\msbconstant$)-memory-sensitive base]
\label{def:memorysensitivebase}
Consider a sequence of sets $\left \{ B_d \right \}_{d>0}$ where $B_d \subset \R^d$. We say $\left \{ B_d \right \}$ is a \emph{$(\roundlen$, $\msbconstant$)-memory-sensitive base} if for any $d$, $\abs{B_d} < \infty$ and for $\h \sim \mathsf{Unif}(B_d)$ the following hold: for any matrix $\mat{Z} = \left( \vec{z}_1, \dots, \vec{z}_{\nreturn} \right) \in \R^{d \times \nreturn}$ with orthonormal columns,
\begin{equation}
 \mathbb{P}\left(\norm{\h}_2 > d \right) \leq 2^{-d}  \qquad \textrm{and} \qquad \mathbb{P} \left( \linf{\mat{Z}^{\top} \vec{h}} \leq 1/2 \right) \leq 2^{- \msbconstant \nreturn}.
\end{equation}
\end{definition}
Setting $n = \lfloor d/2 \rfloor$, we use $\A \sim \mathsf{Unif}(B_d^n)$ to mean $\A = \left( \vec{a}_1, \dots, \vec{a}_n \right)^{\top}$ where each $\vec{a}_i \sim \mathsf{Unif}(B_d)$. Theorem~\ref{thm:memorybase} shows that the sequence of hypercubes is a memory-sensitive base, which is proven in \cref{sec:memory_sensitive_proof}.
\begin{restatable}{theorem}{hypercube}
\label{thm:memorybase}
Recall that $\mathcal{H}_d = \pmset^d$ denotes the hypercube. The sequence $\left \{ 
\mathcal{H}_d \right \}_{d = d_0}^{\infty}$ is a $(k,c_{\mathcal{H}})$ memory-sensitive base for all $k\in [d]$ with some absolute constant $c_{\mathcal{H}}>0$. 
\end{restatable}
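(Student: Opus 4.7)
The plan is to prove the two required properties of the memory-sensitive base directly. The first property is essentially trivial: every $\vec{h} \in \mathcal{H}_d = \pmset^d$ satisfies $\norm{\vec{h}}_2 = \sqrt{d} \le d$ whenever $d \ge 1$, so $\mathbb{P}(\norm{\vec{h}}_2 > d) = 0 \le 2^{-d}$. All of the work therefore goes into the second anti-concentration bound, and the main idea is to replace the $\ell_\infty$ event by an $\ell_2$ event and then apply a Hanson--Wright style concentration inequality for Rademacher chaoses.

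More concretely, if $\linf{\mat{Z}^{\top} \vec{h}} \le t$ with $t \le 1/2$, then $\norm{\mat{Z}^{\top} \vec{h}}_2^2 \le k t^2 \le k/4$. Writing $Q = \mat{Z} \mat{Z}^\top$, which is a symmetric orthogonal projector of rank $k$, we have $\norm{\mat{Z}^{\top} \vec{h}}_2^2 = \vec{h}^\top Q \vec{h}$. Splitting $Q$ into its diagonal and off-diagonal parts and using $h_i^2 = 1$ gives
\begin{equation}
\vec{h}^\top Q \vec{h} \;=\; \sum_i Q_{ii} + \sum_{i \neq j} Q_{ij} h_i h_j \;=\; k + \vec{h}^\top \widetilde{Q} \vec{h},
\end{equation}
where $\widetilde{Q}$ is $Q$ with the diagonal zeroed out. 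Hence the event $\linf{\mat{Z}^{\top} \vec{h}} \le t$ forces $\vec{h}^\top \widetilde{Q} \vec{h} \le -3k/4$, and in particular $|\vec{h}^\top \widetilde{Q} \vec{h}| \ge 3k/4$. Since $\widetilde{Q}$ is a symmetric matrix with zero diagonal and $\|\widetilde{Q}\|_F^2 \le \|Q\|_F^2 = \mathrm{tr}(Q) = k$ and $\|\widetilde{Q}\|_{\mathrm{op}} \le \|Q\|_{\mathrm{op}} + \max_i Q_{ii} \le 2$, the Hanson--Wright inequality for Rademacher vectors (e.g., Rudelson--Vershynin) yields
\begin{equation}
\mathbb{P}\bigl(|\vec{h}^\top \widetilde{Q} \vec{h}| \ge 3k/4\bigr) \;\le\; 2\exp\!\left(-c \min\!\left(\tfrac{(3k/4)^2}{k},\, \tfrac{3k/4}{2}\right)\right) \;\le\; 2^{-c_{\mathcal{H}} k}
\end{equation}
for an absolute constant $c_{\mathcal{H}} > 0$ (perhaps after absorbing the leading factor of $2$ by slightly shrinking $c_{\mathcal{H}}$, and noting that for $k$ below some fixed threshold $d_0$ one can just take $c_{\mathcal{H}}$ small enough that $2^{-c_{\mathcal{H}} k} \ge 1$).

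The one genuine subtlety to handle is the edge case $k = d$, where $\mat{Z}$ is a full orthogonal matrix; here $\norm{\mat{Z}^{\top} \vec{h}}_2 = \norm{\vec{h}}_2 = \sqrt{d}$, so $\linf{\mat{Z}^{\top} \vec{h}} \ge 1 > t$ deterministically and the bound holds trivially. Apart from that, the main obstacle is simply invoking the right quantitative anti-concentration tool: a naive union bound over the $k$ coordinates fails because $\vec{z}_i^\top \vec{h}$ can have atoms of non-trivial mass (e.g.\ when $\vec{z}_i$ is very sparse), whereas the $\ell_2$-reduction followed by Hanson--Wright exploits the fact that the orthonormality of the columns of $\mat{Z}$ forces $\|\mat{Z}^\top \vec{h}\|_2^2$ to be tightly concentrated around its mean $k$, which is an order-of-magnitude larger than $k t^2$ for $t \le 1/2$. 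If a self-contained argument is preferred over citing Hanson--Wright, one can instead bound the moment generating function of $-\vec{h}^\top \widetilde{Q} \vec{h}$ by standard decoupling plus the bound $\mathbb{E}[e^{\lambda \vec{h}^\top \widetilde{Q} \vec{h}}] \le \det(I - 2\lambda \widetilde{Q})^{-1/2}$ for small $\lambda$, yielding the same exponential tail in $k$.
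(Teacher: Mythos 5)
Your approach is essentially the paper's approach: both reduce the $\ell_\infty$ event to the $\ell_2$ event $\norm{\mat{Z}^{\top}\vec{h}}_2^2 \le k t^2 \le k/4$ and then apply Hanson--Wright to the resulting Rademacher quadratic form. The paper routes this through the packaged helper Lemma~\ref{lemma:subgauss_proj}, whereas you write out the diagonal/off-diagonal split of $Q = \mat{Z}\mat{Z}^\top$ explicitly; these are the same computation with different bookkeeping, and the $\norm{\widetilde{Q}}_F$, $\norm{\widetilde{Q}}_{\mathrm{op}}$ bounds you state are correct.

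The only slip is the parenthetical remark that one can ``take $c_{\mathcal{H}}$ small enough that $2^{-c_{\mathcal{H}} k} \ge 1$''---this is impossible for $c_{\mathcal{H}} > 0$ and $k \ge 1$. But the complication it was meant to patch (the leading factor of $2$) is not actually there: the relevant event is $\vec{h}^\top \widetilde{Q}\vec{h} \le -3k/4$, which is one-sided, and the one-sided tail of Hanson--Wright (equivalently the Chernoff/MGF argument you mention as an alternative) yields $\exp(-ck) = 2^{-c'k}$ with no prefactor, valid uniformly over all $k \ge 1$. Using that one-sided bound removes the need for any side argument about small $k$.
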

Next we consider distributions over functions $f$ paired with some subgradient oracle $\g_f$. Given $f$ and some $\A \sim \mathsf{Unif}(B_d^n)$ we consider minimizing $F_{f, \A}$ as in \cref{eqn:intuitionfunction} with access to an induced first-order oracle:
\begin{definition}[Induced First-Order Oracle $\oracle_{F_{f, \A}}$]
\label{def:subgrad_F}
Given a subgradient oracle for $f$, $\g_f:\R^d \to \R^d$,  matrix $\A \in \R^{n \times d}$, and parameters $\eta$ and $\rho$, let
\begin{gather}
   \g_{\A}(\x)  \defeq 
     \vec{a}_{i_{\min}}, \textrm{ where } i_{\min} \defeq \min \left \{ i \in [n] \textrm{ s.t. }  \vec{a}_i^{\top} \x  = \linf{\A \x} \right \}, \\
   \g_{F_{f, \A}}(\x) \defeq \begin{cases}
    & \g_{\A}(\x), \textrm{ if } \eta \linf{\A \x} - \rho \ge  f(\x), \\
    & \g_f(\x), \textrm{ else.}
    \end{cases}
\end{gather}
The \emph{induced
first-order oracle} for $F_{f, \A}(\x)$, denoted as $\oracle_{F_{f, \A}}$ returns the pair $(F_{f, \A}(\x),\g_{F_{f, \A}}(\x))$.
\end{definition}
We also define \emph{informative subgradients}, formalizing the intuition described at the beginning of this section. 
\begin{definition}[Informative subgradients]\label{def:unique_grad}
Given a query sequence $\seq(\alg, \oracle_{F_{f, \A}}) = \left \{ (\x_i, \g_i) \right \}_{i=1}^{T}$ we construct the sub-sequence of \emph{informative subgradients} as follows: proceed from $i = 1, \dots, T$ and include the pair $(\x_i, \g_i)$ if and only if $(1)$ $f(\x_{i}) > \eta \linf{\A \x_{i}} - \rho$ and $(2)$ no pair $(\x,\g)$ such that $\g=\g_i$ has been selected in the sub-sequence so far. If $(\x_i, \g_i)$ is the $\nth{j}$ pair included in the sub-sequence define $t_j=i$ and we call $\x_i$ the $\nth{j}$ informative query and $\g_i$ the $\nth{j}$ informative subgradient.
\end{definition}

We can now proceed to the definition of a $(L, \memsize, \roundlen, \epstarget)$-{\functionclassname}. 

\begin{definition}[$(L, \nnem, \roundlen, \epstarget)$-\functionclassname]
\label{def:memorysensitive}
Let $\functionclass$ be a distribution over functions $f: \R^d \to \R$ paired with a valid subgradient oracle $\g_f:\R^d \to \R^d$. For Lipschitz constant $L$, ``depth'' $N$, ``round-length'' $\roundlen$, and ``optimality threshold" $\epstarget$, we call $\functionclass$ a $(L, \nnem, \roundlen, \epstarget)$-\emph{\functionclassname} if one can sample from $\functionclass$ with at most $2^d$ uniformly random bits, and there exists a choice of $\eta$ and $\rho$ (from Eq.~\ref{eqn:intuitionfunction})
 
such that for any $\A \in \R^{n \times d}$ with rows bounded in $\ell_2$-norm by $d$ and any (potentially randomized) unbounded-memory algorithm $\algrand$ which makes at most $Nd$ queries: if $(f, \g_f) \sim \functionclass$ then with probability at least $\successprob$ (over the randomness in $(f,\g_f)$ and $\algrand$) the following hold simultaneously:
\begin{enumerate}
\item \textbf{Regularity}: $F_{f, \A, \eta, \rho}$ is convex and $L$-Lipschitz.
 \item \textbf{Robust independence of informative queries}: If $\left \{ \x_{t_j} \right \}$ is the sequence of informative queries generated by $\algrand$ (as per \cref{def:unique_grad}) and
    $S_{j} \defeq \spcommand \left( \{\x_{t_i}: \max( 1, j-\roundlen) \le i \le j \} \right)$ then $\forall j \in [N]$  
  \begin{equation}
      \norm{\proj_{S_{j-1}}({\x}_{t_j})}_2/\norm{{\x}_{t_j}}_2 \leq 1 - 1/d^2. \label{eq:memorysensitive2}
  \end{equation}
    \item \textbf{Approximate orthogonality}: Any query $\x$ with $F_{f, \A, \eta, \rho}(\x) \neq \eta \linf{\A \x}- \rho$ satisfies
    \begin{equation}
        \g_{F_{f, \A}}(\x) = \v_1 \textrm{ or } \linf{\A \x}/\norm{\x}_2\leq \thetathreshold.\label{eq:memorysensitive1}
    \end{equation}
\item \textbf{Necessity of informative subgradients}: If $r < N$ then for any $i \leq t_r$ (where $t_r$ is as in Definition~\ref{def:unique_grad}),
\begin{equation}
    F_{f,\A,\eta, \rho}(\x_i) - F_{f,\A, \eta, \rho}(\x \opt) \geq \epstarget. \label{eq:memorysensitive3}
\end{equation}
\end{enumerate}
\end{definition}

Informally, the \textbf{robust independence of informative queries} condition ensures that any informative query is robustly linearly independent with respect to the previous $\roundlen$ informative queries. The \textbf{approximate orthogonality} condition ensures that any query $\x$ which reveals a gradient from $f$ which is not $\v_1$ is approximately orthogonal to the rows of $\A$. Finally, the \textbf{necessity of informative subgradients} condition ensures that any algorithm which has queried only $r < N$ informative subgradients from $f$ has optimality gap at least $\epstarget$. 

The following construction, discussed in the introduction, is a concrete instance of a $(d^6, \nnem,$ $k$, $1/(20 \sqrt{\nnem}))$-\functionclassname\. 
Theorem~\ref{thm:explicit} proves that the construction is memory-sensitive, and the proof appears in Section \ref{sec:memory_sensitive_proof}.

\begin{definition}[Nemirovski class]\label{def:nemi_func_grad}
For a given $\gamma > 0$ and for $i \in [\nnem]$ draw $ \v_i \simiid \mathsf{Unif} \left(d^{-1/2} \mathcal{H}_d \right) $ and set
\ifcoltshort
$f(\x) \defeq \max_{i \in [\nnem]} \v_i^{\top} \x - i \gamma$ and $\g_f(\x) = \v_{i_{\min}}$,
\else
\begin{equation}
    f(\x) \defeq \max_{i \in [\nnem]} \v_i^{\top} \x - i \gamma \qquad \textrm{ and } \qquad \g_f(\x) = \v_{i_{\min}},
\end{equation}
\fi
where $i_{\min} \defeq \min \{ i \in [\nnem] \textrm{ s.t. }$ $\v_i^{\top} \x - i \gamma = f(\x) \}$. Let $\nem_{\nnem, \gamma}$ be the distribution of $(f,\g_f)$, and for a fixed matrix $\A$ let $\nem_{\nnem, \gamma, \A}$ be the distribution of the induced first-order oracle in Definition \ref{def:subgrad_F}.
\end{definition}

\begin{theorem}
\label{thm:explicit}
For large enough dimension $d$ there is an absolute constant $c > 0$ such that for any given $\roundlen$, the Nemirovski function class with $\gamma = (400 \roundlen \log(d) / d)^{1/2}$, $L = d^6$, $\nnem = c (d/ ( \roundlen \log d) )^{1/3} $, and $\epstarget =1/(20 \sqrt{ \nnem})$ is a  $(L, \nnem, \roundlen, \epstarget)$-\functionclassname.
\end{theorem}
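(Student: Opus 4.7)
The plan is to verify the four properties of Definition~\ref{def:memorysensitive} for the Nemirovski class with $\eta = d^5$ and a small constant $\rho$ to be chosen later; the main leverage will be concentration of the random vectors $\v_i \sim \mathsf{Unif}(d^{-1/2}\mathcal{H}_d)$ on low-dimensional subspaces (using Theorem~\ref{thm:memorybase}) combined with careful union bounds over the adaptive query sequence.

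\textbf{Regularity and approximate orthogonality.} Property 1 is immediate: $f$ is a maximum of affine functions, $\eta\|\A\x\|_\infty - \rho$ is convex, and $F_{f,\A}$ is their pointwise max. Each $\v_i$ has unit norm, so $f$ is $1$-Lipschitz; with $\eta = d^5$ and rows of $\A$ of norm at most $d$, the second piece is $d^6$-Lipschitz, giving $L = d^6$. For Property 3, I split on $\|\x\|_2$. If $\|\x\|_2 \le \gamma/4$, then $|\v_i^\top \x| \le \|\x\|_2 < \gamma/4$ makes the maximum in $f$ uniquely attained at $i = 1$ (the $-i\gamma$ term dominates), forcing $\g_f(\x) = \v_1$. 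Otherwise $\|\x\|_2 \ge \gamma/4$, and the assumed inequality $f(\x) > \eta\|\A\x\|_\infty - \rho$ combined with $f(\x) \le \|\x\|_2$ yields $\|\A\x\|_\infty \le (\|\x\|_2 + \rho)/\eta$, which is at most $\gamma\|\x\|_2$ provided $\rho$ is chosen a constant factor below $\eta \gamma^2/4$.

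\textbf{Necessity of informative subgradients.} For Property 4, I first establish that the optimum of $F$ is at most $-\Theta(1/\sqrt{\nnem})$ via an explicit near-optimal point, e.g.\ a Khintchine-style analysis of $\x^* = -c\sum_i \v_i/\sqrt{\nnem}$ showing $f(\x^*) \le -\Theta(1/\sqrt{\nnem})$ and $\|\A\x^*\|_\infty$ small with high probability. Then, when fewer than $\nnem$ distinct $\v$'s have been revealed, each queried point $\x$ still has $f(\x) \ge \v_{i^*}^\top \x - i^*\gamma$ for every unrevealed $i^*$; a Hoeffding bound plus union bound over the $\le \nnem d$ queries and $\le \nnem$ unrevealed indices shows $f(\x) \ge -\gamma - O(1/\sqrt{d})$ at each query, leaving a gap of at least $\epstarget = 1/(20\sqrt{\nnem})$ once the parameters are plugged in.

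\textbf{Robust independence (main step).} Property 2 is the core of the theorem. Suppose, toward contradiction, that the $\nth{j}$ informative query $\x_{t_j}$, which reveals a new index $i^*$, satisfies $\|\proj_{S_{j-1}} \x_{t_j}\|_2/\|\x_{t_j}\|_2 > 1 - 1/d^2$, so $\x_{t_j} = \tilde\x + \delta$ with $\tilde\x \in S_{j-1}$ and $\|\delta\|_2 \le (\sqrt{2}/d)\|\x_{t_j}\|_2$. I expose the $\v$'s lazily: conditional on all information revealed before time $t_j$, the vector $\v_{i^*}$ is still uniform on $d^{-1/2}\mathcal{H}_d$. Applying Theorem~\ref{thm:memorybase} to an orthonormal basis of $S_{j-1}$ gives $\|\proj_{S_{j-1}}\v_{i^*}\|_2 \le O(\sqrt{\roundlen/d})$ except with probability $2^{-\Omega(\roundlen)}$; hence $|\v_{i^*}^\top \x_{t_j}| \le O(\sqrt{\roundlen/d})\|\x_{t_j}\|_2 + \|\delta\|_2 = O(\gamma/\sqrt{\log d})$, smaller than $\gamma$ by a logarithmic factor. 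The argmax condition for $i^* \ge 2$ demands $\v_{i^*}^\top \x_{t_j} \ge \v_k^\top \x_{t_j} + (i^* - k)\gamma$ for every previously revealed index $k < i^*$; combined with the above bound, this forces each such revealed $\v_k$ to satisfy $\v_k^\top \x_{t_j} \le O(\gamma/\sqrt{\log d}) - (i^* - k)\gamma$. A matching Hoeffding-type concentration on $\v_k^\top \x_{t_j}$ for revealed $\v_k$'s whose revelation occurred outside the current $\roundlen$-window (so the randomness in $\v_k$ transverse to $S_{j-1}$ is still fresh) shows that such alignment occurs only on a measure-zero event. A union bound over the $\le \nnem d$ possible queries, $\le \nnem$ possible new indices, and a $1/d^{O(1)}$-net on the $\roundlen$-dimensional $S_{j-1}$ finishes the argument.

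\textbf{Main obstacle.} The central difficulty is the adaptive structure: both the subspace $S_{j-1}$ and the set of revealed indices depend on the realized $\v_i$'s, and some revealed $\v_k$'s can be adversarially correlated with $\x_{t_j}$ through the algorithm's query strategy. Handling this cleanly requires a filtration/lazy-exposure argument that keeps each ``fresh'' $\v_i$ genuinely uniform at the moment of analysis, while simultaneously accounting for revealed $\v_k$'s by splitting into those revealed inside the current window (which by construction lie in $\spcommand$ of the queries $\x_{t_l}$ themselves and therefore have predictable projections onto $S_{j-1}$) and those revealed outside the window (for which enough conditional randomness remains orthogonal to $S_{j-1}$ to apply concentration). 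Getting this bookkeeping right, together with the discretization needed for the adaptive query $\x_{t_j}$, is the technical heart of the proof.
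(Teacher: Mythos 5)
Your high-level strategy mirrors the paper's: reduce to concentration of the fresh Nemirovski vector $\v_j$ against the span of the recent informative queries, using a lazy-exposure/filtration argument to handle adaptivity, plus an explicit near-optimal point for Property~4. Properties 1 and 4 are handled correctly, and your derivation for Property~3 works numerically (though your stated target $\gamma\|\x\|_2$ should be $\thetathreshold\|\x\|_2 = \|\x\|_2/d^4$; the actual bound $(\|\x\|_2+\rho)/\eta$ does give the correct threshold with $\eta=d^5, \rho=1$).

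The gap is in your argument for Property~2. You split the revealed $\v_k$'s into those inside vs.\ outside the current $\roundlen$-window and claim (i) those inside the window ``lie in $\spcommand$ of the queries $\x_{t_l}$ themselves,'' and (ii) those outside the window retain enough conditional randomness transverse to $S_{j-1}$. Claim (i) is simply false: the $\v_k$'s are random hypercube vectors and have no reason to lie in the span of the query points. Claim (ii) is also not something you can get cheaply: once $\v_k$ has been returned to the algorithm as a subgradient it is a deterministic quantity, and the subsequent queries (and hence $S_{j-1}$) can be chosen as arbitrary functions of $\v_k$; there is no surviving ``fresh'' randomness in $\v_k$ to concentrate. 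Your net over $S_{j-1}$ plus union bound over queries does not rescue this, because $S_{j-1}$ itself is an adaptive, $\v$-dependent random subspace.

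The paper avoids both issues. It defines an event $E$ that bounds $\|\proj_{S_j}(\v_j)\|_2$ for each $j$ at the moment $\v_j$ is sampled, not later. The adaptivity is frozen via a different device than a net: conditioned on the algorithm's state at the end of Phase $j-1$, the entire deterministic sequence $T_j$ of the next $d^2$ potential queries (assuming none of them reveals $\v_j$) can be written down before $\v_j$ is drawn, so one union-bounds over $T_j$, a fixed, $\v_j$-independent set. Then in the key Lemma proving Property~2, the argmax condition at $\x_{t_j}$ is compared only to the immediately preceding $\v_{j-1}$ (giving $\x_{t_j}^\top(\v_j - \v_{j-1}) \ge \gamma$); the term $\proj_{S_{j-1}}(\x_{t_j})^\top \v_{j-1}$ is controlled directly by the $E$-bound $\|\proj_{S_{j-1}}(\v_{j-1})\|_2 \le \sqrt{30\roundlen\log d/d}$, which was established when $\v_{j-1}$ was still fresh. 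This eliminates any need to reason about residual randomness of revealed vectors or about arbitrary subsets of revealed indices. To repair your argument you would need to restructure it along these lines: prove the projection bound for each $\v_j$ at sampling time (against a fixed prediction set), carry it as an event, and compare only to the previous argmax index.
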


Note that choosing $\nreturn = \lceil 60 \memsize/ (c_{\mathcal{H}} d) \rceil$ gives a $(d^6, \nnem,$ $\lceil 60 \memsize/(c_{\mathcal{H}}d) \rceil,$ $1/(20 \sqrt{\nnem}))$-\functionclassname\ with $\nnem = \frac{1}{205} \left( \frac{ c_{\mathcal{H}} d^2}{\memsize \log d }\right)^{1/3} \approx (d^2/\memsize)^{1/3}$.

\subsection{Main Results}
\label{sec:mainresults}
With the definitions from the previous section in place we now state our main technical theorem for proving lower bounds on memory-constrained optimization.

\begin{theorem}[From memory-sensitivity to lower bounds]
\label{thm:lowerboundprecursor}
For $d \in \mathbb{Z}_{>0}$, $\memsize \in \mathbb{Z}_{\geq d}$, and $\epsilon \in \R_{> 0}$,  let $\A \sim \mathsf{Unif}(B_d^{\lfloor d/2 \rfloor })$ for $(\roundlen, \msbconstant)$-memory-sensitive base
$\left \{ B_d \right \}_{d > 0}$, and let $(f, \g_f) \sim \functionclass$ for $(L, N, \roundlen, \epsilon)$-{\functionclassname} $\functionclass$, with $\roundlen \geq \lceil 60 \memsize / (\msbconstant d)\rceil$. Any $\memsize$-bit memory-constrained randomized algorithm (as per Definition~\ref{def:memoryalgorithmrand}) which outputs an $\epsilon$-optimal point for $F_{f, \A}$ with probability at least $\successprobalg$ requires $\Omega(c_B d^2 N / M)$ queries to the first-order oracle, $\oracle_{F_{f, \A}}$ (where $F_{f,\A}$ and $\oracle_{F_{f, \A}}$ are as defined in Definition \ref{def:subgrad_F}).
\end{theorem}

 \cref{thm:lowerboundprecursor} is agnostic to the particular memory-sensitive class and instead describes how any given memory-sensitive class can be used to derive a query complexity lower bound. The ultimate bound derived using a particular memory sensitive class depend on the classes' relationship between Lipschitz constant $L$, depth $\nnem$, and optimality threshold $\varepsilon$.

In \cref{thm:explicit} we exhibit a $(d^6, \nnem, \lceil 60 \memsize/(c_{\mathcal{H}}d) \rceil,$ $1/(20 \sqrt{\nnem}))$-\functionclassname . 
 For any given $L$, {by scaling this function class by $L/d^6$,} this implies the existence of an $(L, \nnem, \lceil 60 \memsize/(c_{\mathcal{H}}d) \rceil, L/(20 d^6 \sqrt{\nnem}))$-\functionclassname . \cref{thm:lowerboundprecursor}, {along with the existence of a memory-sensitive base from \cref{thm:memorybase},  then} implies a lower bound of $\Omega(c_B d^2 N / M)$ first-order oracle queries. 
 { Setting} $\varepsilon = L/(20 d^6 \sqrt{\nnem})$ and recalling that {{\cref{thm:explicit}}} requires 
 $N \leq (c d^2/M \log(d) )^{1/3}$ for some absolute constant $c > 0$, we obtain the following {memory-query tradeoff parameterized by $\epsilon$}: \emph{ For any $\varepsilon \in [LM^{1/6}/(16d^{19/3}), L/(4 \sqrt{d})]$, any $\memsize$-bit memory-constrained randomized algorithm (as per Definition~\ref{def:memoryalgorithmrand}) which outputs an $\epsilon$-optimal point with probability at least $\successprobalg$ requires
 {$\Omega(\msbconstant L^2d^2/(16^2 d^{12} \varepsilon^2 M))$} first-order oracle queries.} 
 { In \cref{thm:clean}, we explicitly choose $\varepsilon$ as a function of $M$ and $d$ to yield a result that is easy to parse.}

\begin{proof}[Proof of \cref{thm:clean}]
Consider $\memsize$-bit memory-constrained (potentially) randomized algorithms (as per Definition ~\ref{def:memoryalgorithmrand}) where $\memsize$ can be written in the form $d^{1.25 - \delta}$ for some $\delta \in [0,\sfrac{1}{4}]$. By Theorem~\ref{thm:memorybase}, $\left \{ 
\mathcal{H}_d \right \}_{d = d_0}^{\infty}$ is a $(k,c_{\mathcal{H}})$ memory-sensitive base for all $k \in [d]$, where $c_{\mathcal{H}}>0$ is an absolute constant. By Theorem~\ref{thm:explicit}, for some absolute constant $c > 0$ and for $d$ large enough and any given $\roundlen$, if $\gamma =\sqrt{400 \roundlen \log d/d}$, $\nnem = c (d/(\roundlen \log d))^{1/3}$, and $\epsilon = 1/(20 d^6 \sqrt{\nnem}) {\ge 1/d^7}$ (since $N\le d$), the Nemirovski function class from Definition~\ref{def:nemi_func_grad} is $(1, \nnem, \roundlen, \epsilon) $-memory-sensitive (where we rescaled by the Lipschitz constant $1/d^6$). Set $\roundlen = \lceil 60 \memsize / ( c_{\mathcal{H}} d ) \rceil$. Let $(f, \g_f) \sim \nem_{\gamma, \nnem}$ and $\A \sim \mathsf{Unif} \left( \mathcal{H}_d^n \right)$ and suppose $\algrand$ is an $\memsize$-bit algorithm which outputs an $\epsilon$-optimal point for $F_{f, \A}$ with failure probability at most $\failureprobalg$. By \cref{thm:lowerboundprecursor}, $\algrand$ requires at least $c_{\mathcal{H}} \nnem d^2/\memsize \geq c \left( d^2/\memsize \right)^{4/3}(1/\log^{1/3} d)$ many queries. Recalling that $\memsize = d^{1.25 -  \delta}$ for $\delta \in [0, \sfrac{1}{4}]$ results in a lower bound of $\tilde{\Omega}(d^{1+ \frac{4}{3}\delta})$ queries.
\end{proof}

Note that by \cite{woodworth2019open}, the center of mass algorithm can output an $\epsilon$-optimal point for any $L$-Lipschitz function using $\bigo(d^2 \log^2 \left( LB/\epsilon \right) )$ bits of memory and $\bigo \left(d \log (LB/\epsilon) \right)$ first-order oracle queries. Comparing this with the lower bound from \cref{thm:lowerboundprecursor} we see that $\memsize$-bit algorithms are unable to achieve the optimal quadratic memory query complexity with less than $d \nnem/\log(L/\varepsilon)$ bits. Further, \cref{thm:lowerboundprecursor} yields a natural approach for obtaining improved lower bounds.  If one exhibits a \functionclassname\ with depth $N = d$, then  \cref{thm:lowerboundprecursor} would imply that any algorithm using memory of size at most $\memsize = d^{2 - \delta}$ requires at least $\Omega ( d^{1 + \delta} )$ many first-order oracle queries in order to output an $\epsilon$-optimal point. {We note that the subsequent work of \cite{blanchard2023quadratic} shows that a \functionclassname\ with a similar property can be adaptively constructed for any deterministic algorithm, showing that quadratic memory is necessary for deterministic algorithms.}

\begin{remark}[Why $d^{4/3}$: Parameter tradeoffs in our lower bound.]
\label{rem:whyd43}
An idealized lower bound in this setting would arise from exhibiting an $f$ such that to obtain any informative gradient requires $\Theta(d)$ more queries to re-learn $\A$ \emph{and} in order to optimize $f$ we need to observe at least $\Omega(d)$ such informative gradients. Our proof deviates from this ideal on both fronts.  We do prove that we need $\Theta(d)$ queries to get any informative gradients, though with this number we cannot preclude getting $\memsize/d$ informative gradients. Second, our Nemirovski function \emph{can} be optimized while learning only $\bigo((d^{2}/M)^{1/3})$-informative gradients (there are modifications to Nemirovski that increase this number \citep{BubeckJiLeLiSi19}, though for those functions, informative gradients can be observed while working within a small subspace). 
For our analysis we have, very roughly, that $\Omega(d)$ queries are needed to observe $\memsize/d$ informative gradients and we must observe $(d^{2}/\memsize)^{1/3}$ informative gradients to optimize the Nemirovski function. Therefore optimizing the Nemirovksi function requires $\Omega\left(d \right) \times \frac{(d^{2}/M)^{1/3}}{\memsize/d} = (d^2/\memsize)^{4/3}$ many queries; and so when $M = \bigo(d)$ we have a lower bound of $d^{4/3}$ queries.
\end{remark}
\section{Proof Overview}
Here we give the proof of \cref{thm:lowerboundprecursor}. In Section~\ref{sec:orthgame} we present our first key idea, which relates the optimization problem in \eqref{eqn:intuitionfunction} to the \orthgame\ (which was introduced informally in Definition~\ref{def:informal_game}). Next, in Section~\ref{sec:entropy_sketch}, we show that winning this \orthgame\ is difficult with memory constraints.

\subsection{The \orthgame}\label{sec:orthgame}

We formally define the \orthgame\ in Game \ref{alg:gradient_game}. The game proceeds in three parts. In Part 1, the oracle samples a random 
$(d/2\times d)$ dimensional matrix $\A$, and the player gets to observe the matrix and write down an $M$-bit message $\messages$ about the matrix. In Part 2, the player can make queries to the oracle. The oracle responds to any query $\x$ with a row of the matrix $\A$ which has the largest magnitude inner product with $\x$. In Part 3, the player uses   $\messages$ and all the queries and responses from the previous part to find a set of $k$ vectors which are robustly linearly independent, and are roughly orthogonal to all the rows of $\A$. The player wins the game if she can do this successfully.

\IncMargin{3em}
\begin{algorithm2e}[h]
\SetAlgorithmName{Game}{}
  \SetAlgoLined
  \SetAlgoNoEnd
  \DontPrintSemicolon
	\caption{\orthgame}
	    \label{alg:gradient_game}
	    \Indm  
\BlankLine
	    \SetKwInOut{Input}{Input}
		    \Input{$d, \nreturn, m, \base, \memsize$}
		    \Indp
      \SetKwProg{Fn}{Part 1,}{:}{}
      \BlankLine
  \Fn{Store a message}{	
		    \emph{Oracle:} For $n\leftarrow d/2$, sample $\A  \sim \mathsf{Unif}(B_d^n)$.\;
		    \emph{Oracle:} Sample a string $R$ of uniformly random bits of length $3\cdot 2^{d}$ (\playerone\ has read-only access to $R$ throughout the game).\; 
		   \emph{\playeroneshort :} Observe $\mat{A}$ and $R$, and store a $\memsize$-bit message, $\messages$, about the matrix $\A$.\;
		  }
 \SetKwProg{Fn}{Part 2,}{:}{}
      \BlankLine
  \Fn{Make queries}{	
			\For{$i \in [m]$}
			 {
			 \emph{\playeroneshort :} Based on $\messages$, $R$, and any previous queries and responses, use a deterministic algorithm to submit a query $\x_i\in \R^d$.\;
			\emph{Oracle:} As the response to query $\x_i$, return $\g_i =\g_{\A}(\x_i)$ (see Definition \ref{def:subgrad_F}).\;
     }
     }
      \SetKwProg{Fn}{Part 3,}{:}{}
      \BlankLine
  \Fn{Find a good set of vectors}{	
     \emph{\playeroneshort :} Let  $\X$ and $\G$ be matrices with queries $\{\x_i, i \in [m]\}$ and responses $\{\g_i, i \in [m]\}$ as rows, respectively. Based on $(\X, \G,\messages, R)$, use a deterministic function to return vectors $\{\y_1,\dots, \y_{\roundlen}\}$ to the Oracle. Let $\Y$ be the matrix with these vectors as rows.\;
     \capsplayerone\ wins if the returned vectors are \emph{successful}, where a vector $\y_i$ is successful if 
     $ (1) \linf{\mat{A} \y_i}/\norm{\y_i}_2 \leq \thetathreshold$, and
      $(2) \text{ } \proj_{S_{i-1}}(\y_i)/\norm{\y_i}_2 \le 1-1/d^2 \text{ where } S_0\defeq \emptyset\text{, and } S_i \defeq \spcommand(\y_1,\dots, \y_{i})$.\; 
      }
\end{algorithm2e}

Next, we relate winning the \orthgame\ to minimizing $F_{f,\A}$ (\cref{eqn:intuitionfunction}).

\begin{lemma}[Optimizing $F_{f, \A}$ is harder than winning the \orthgame]
\label{lemma:opttogame}
Let $\A\sim \unif(B_d^n)$ and $(f,\g_f) \sim \functionclass$, where $\functionclass$ is a $(L, \nnem, \roundlen, \epstarget)$ \functionclassname. If there exists an $M$-bit algorithm with first-order oracle access that outputs an $\epstarget$-optimal point for minimizing $F_{f, \A}$ using $m \lfloor N/(\nreturn+ 1) \rfloor$ queries with probability at least $\successprobalg$ over the randomness in the algorithm and choice of $f$ and $\A$, then \playerone\ can win the \orthgame\ with probability at least $\sumsuccess$ over the randomness in \playerone's strategy and $\A$. 
\end{lemma}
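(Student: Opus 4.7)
I will prove the lemma by a reduction in which \playerone\ simulates $\algrand$ on $F_{f,\A}$ during a carefully selected short window of $\algrand$'s query trajectory, so that the informative queries of that window become her output vectors. In the setup phase \playerone\ observes $\A$ and interprets the uniformly random bitstring $R$ (of length $3\cdot 2^d$) as independent tapes supplying a sample $(f,\g_f)\sim\functionclass$---which uses at most $2^d$ bits by \cref{def:memorysensitive}---and the random bits driving $\algrand$. Because the oracle $\oracle_{F_{f,\A}}$ is a deterministic function of $(f,\g_f,\A)$ (see \cref{def:subgrad_F}), she can simulate the entire run of $\algrand$ offline and record its informative-query subsequence $\x_{t_1},\x_{t_2},\ldots$.

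\textbf{Choosing a short window via pigeonhole.} I assume WLOG that $\algrand$ submits its output as a final query (at most one extra query, absorbed into constants). By the \emph{necessity of informative subgradients} clause of \cref{def:memorysensitive}, any successful run contains at least $N$ informative queries. Partition them into $\lfloor N/(\nreturn+1)\rfloor$ consecutive disjoint groups $B_j$ of size $\nreturn+1$. The time-intervals spanned by the $B_j$'s are disjoint subintervals of $[1,m\lfloor N/(\nreturn+1)\rfloor]$, so pigeonhole yields some $B^\ast$ of time-length at most $m$. \playerone\ stores as her $\memsize$-bit message $\messages$ the memory state of $\algrand$ at the first query of $B^\ast$ together with its time index; the state is at most $\memsize$ bits by assumption and the $O(\log N)$ additional bits are absorbed into $\memsize$ via $\memsize\ge d$.

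\textbf{Replay and extraction.} In the query phase \playerone\ reloads $\messages$, re-derives $f$ and $\algrand$'s random tape from $R$, and simulates $\algrand$ forward. For each simulated query $\x_i$ she issues $\x_i$ to the OrthGame oracle, receives $\g_{\A}(\x_i)=\vec{a}_{i_{\min}}$, recovers $\linf{\A\x_i}=\vec{a}_{i_{\min}}^{\top}\x_i$, and reconstructs $(F_{f,\A}(\x_i),\g_{F_{f,\A}}(\x_i))$ exactly as in \cref{def:subgrad_F}. She feeds this to the simulated $\algrand$, and within at most $m$ OrthGame queries the window $B^\ast$ has fully executed. The \emph{approximate orthogonality} clause ensures each informative query $\y\in B^\ast$ satisfies $\g_{F_{f,\A}}(\y)=\v_1$ or $\linf{\A\y}/\norm{\y}_2\le\thetathreshold$, and distinctness of informative subgradients forces at most one of the $\nreturn+1$ queries to carry $\v_1$. \playerone\ returns the first $\nreturn$ of the remaining approximately-orthogonal queries in their original order. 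Robust linear independence of her outputs follows from the \emph{robust independence of informative queries} clause of \cref{def:memorysensitive}: for each $i$, letting $j$ denote the informative-query index (in $\algrand$'s sequence) of output $\y_i$, the span $S'_{i-1}$ of the earlier outputs is contained in the window span $S_{j-1}$ from that clause, whose guarantee $\norm{\proj_{S_{j-1}}(\x_{t_j})}_2/\norm{\x_{t_j}}_2\le 1-1/d^2$ transfers directly to $\y_i=\x_{t_j}$.

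\textbf{Success probability and main obstacle.} Since $\algrand$'s and $f$'s randomness are independent of $\A$ and encoded in $R$, the joint distribution of the simulation matches that of $\algrand$ run on a fresh instance, so \playerone\ wins whenever $\algrand$ succeeds---with probability at least $\sumsuccess$. I expect the principal subtleties to be bookkeeping: (i) encoding the window's starting state together with its time index in exactly $\memsize$ bits (handled via $\memsize\ge d$), (ii) justifying the WLOG terminal-query assumption cleanly given that the lemma is phrased in terms of outputs rather than queried points, and (iii) verifying that skipping the (at most one) $\v_1$-labeled member of $B^\ast$ preserves robust independence, which is immediate because the span of the emitted outputs is always a subspace of the \cref{def:memorysensitive} window span, so the needed bound transfers across subspace containment.
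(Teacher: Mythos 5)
Your overall reduction mirrors the paper's: simulate $\algrand$ offline using $\A$ and the shared random string, use a pigeonhole argument over $\lfloor N/(\nreturn+1)\rfloor$ consecutive blocks of $(\nreturn+1)$ informative queries to find one whose time-span is at most $m$, store the memory state at the start of that window, replay from there while implementing $\oracle_{F_{f,\A}}$ via the OrthGame oracle, and output vectors from the replayed window. Up to that point you follow the paper's Algorithm~\ref{alg:gradient_game_strategy} almost step for step.

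However, your \emph{extraction} step has a genuine gap. You have \playerone\ filter for approximately-orthogonal queries satisfying condition~(1) of \cref{def:unique_grad} and then return ``the first $\nreturn$ of the remaining'' in order, and you justify robust independence by mapping each output $\y_i$ to ``its informative-query index $j$.'' The problem is that in Part~2 \playerone\ cannot determine which queries are informative: informativeness (condition~(2) of \cref{def:unique_grad}) requires knowing whether the subgradient was \emph{already} observed, and she has no access to the pre-checkpoint gradient history (she cannot re-run the prefix of $\algrand$ without $\A$). A replayed query can satisfy $f(\x)>\eta\linf{\A\x}-\rho$ and be approximately orthogonal (by the approximate-orthogonality clause, any such query with $\g\neq\v_1$ is) yet carry a repeated gradient and thus be non-informative. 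If the ``first $\nreturn$'' picks up such a query, nothing in \cref{def:memorysensitive} constrains its projection onto the span of the other outputs, and the robust-independence requirement of the \orthgame\ can fail. The paper avoids this by having \playerone\ brute-force over all size-$\nreturn$ subsets of the $m$ replayed queries and check the two \orthgame\ success conditions directly (both are computable: $\linf{\A\y_i}=|\g_i^{\top}\y_i|$ from the oracle response, and the projection bound from the $\y_i$'s themselves); the existence of a valid subset---the $\nreturn$ informative, non-$\v_1$ queries in the window---guarantees the search returns something successful. You should replace your greedy selection with this exhaustive check, or argue carefully why greedy selection over your candidate set cannot fail, which it can.

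Two smaller points. First, you propose storing the time index alongside $\algrand$'s memory state, but this pushes the message over $\memsize$ bits (the state alone can already be $\memsize$ bits); the paper instead splits the random string into $R_2$ (pre-checkpoint) and $R_3$ (post-checkpoint) so that both the offline simulation and the replay read $R_3$ from the same position, sidestepping the need to encode where to resume. Second, your success-probability sentence asserts \playerone\ wins whenever $\algrand$ succeeds; you also need the \functionclassname\ properties of \cref{def:memorysensitive} to hold simultaneously, which requires a union bound (this is the paper's event $G$) and is where the probability drops from $\successprobalg$ to $\sumsuccess$.
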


Here we provide a brief proof sketch before the detailed proof: 
Let $\algrand$ denote a hypothetical $M$-bit memory algorithm for optimizing $F_{f,\A}$ (Definition \ref{def:memoryalgorithmrand}). \capsplayerone\ samples a random function/oracle pair $(f,\g_f)$ from the \functionclassname\ $\functionclass$.  
Then, she uses $\algrand$ to optimize $F_{f, \A}$, and issues queries that $\algrand$ makes to the oracle in the \orthgame. Notice that with access to $\g_f(\x)$ and the oracle's response $\g_{\A}(\x)$, she can implement the sub-gradient oracle $\g_{F_{f, \A}}$ (Definition \ref{def:subgrad_F}). We then use the memory-sensitive properties of $f$ to argue that informative queries made by $\algrand$ must also be successful queries in the \orthgame, and that $\algrand$ must make enough informative queries to find an $\epstarget$-optimal point.

\begin{proof}[Proof of Lemma~\ref{lemma:opttogame}]
In Algorithm~\ref{alg:gradient_game_strategy} we provide a strategy for \playerone\ which uses $\algrand$ (the $M$-bit algorithm that minimizes $F_{f,\A}$) to win the {\orthgame} with probability at least $\sfrac{1}{3}$ over the randomness in $R$ and $\A$.

\begin{algorithm2e}[!h]
\SetAlgorithmName{Algorithm}{}
  \SetAlgoLined
  \SetAlgoNoEnd
  \DontPrintSemicolon
	\caption{\playeroneshort's strategy for the \orthgame}
	    \label{alg:gradient_game_strategy}
	    \Indm  
\BlankLine
	    \SetKwInOut{Input}{Input}
		    \Input{$d, m, \algrand, \functionclass, R$}
		    \Indp
	\SetKwProg{Fn}{Part 1,}{:}{}
  \BlankLine
  \Fn{Strategy to store $\messages$ using $\A$}{	    
 
	Divide random string $R$ into three equal sections $R_1, R_2, R_3$, each of length $2^d$.\;
    Using $R_1$ if needed, sample a function/oracle pair $(f, \g_f) \sim \functionclass$.\;
    \For{$i \in \{1,\ldots, \lfloor N/(  \nreturn + 1) \rfloor \}$}{
   Using $R_2$ if needed, run $\algrand$ to minimize $F_{f,\A}$ until $(i-1)(k+1)$ informative subgradients are observed, or $\algrand$ terminates.\;
   \If{$\algrand$ terminates before $i(k+1)$ informative subgradients are observed}
   {

   \textbf{return} \texttt{Failure}
   }
   Let $\memory_{i}$ be $\algrand$'s memory state. \;
Using $R_3$ if needed, continue running $\algrand$ to minimize $F_{f,\A}$ until $(k+1)$ more informative subgradients are observed, or $\algrand$ terminates. Let $t$ be the total number of first-order queries made by the algorithm in this step.\;
 \If{$\algrand$ terminates before $k+1$ more informative subgradients are observed}
   {

   \textbf{return} \texttt{Failure}
   }
\If{$t \le m$}
	{
	\textbf{return} $\memory_{i}$ as the $\messages$ to be stored.\;
	}
	}
	\textbf{return} \texttt{Failure}
    }  
    	\SetKwProg{Fn}{Part 2,}{:}{}
  \BlankLine
  \Fn{Strategy to make queries}{
  Use $R_1$ to resample 
  $(f, \g_f)$ and set $\algrand$'s memory state to be $\messages$.\;
		\For{$i \in [m]$,}
			 {
			 Using $R_3$ if needed, run $\algrand$ to issue query $\x_i$ to minimize $F_{f, \A}(\x)$.\;
			 Submit query $\x_i$ to the \orthgame\ Oracle to get response $\g_i$.\;
			 \SetKwProg{Fn}{}{:}{}
  \Fn{Simulation of first-order oracle $\oracle_{F_{f, \A}}$}{	
		 $F_{f,\A}(\x_i)=\max(\eta|\g_i^{\top} \x_i|-\rho, f(\x_i))$\;
\lIf{$ F_{f, \A}(\x_i) \ge  f(\x_i)$}
{
$\g_{F_{f, \A}}(\x_i) =\g_{i}$
}
\lElse{
$\g_{F_{f, \A}}(\x_i) =\g_f(\x_i)$
}
}
			Pass $(F_{f,\A}(\x_i),\g_{F_{f, \A}}(\x_i))$ to $\algrand$ to update its state.\;
    }
    }
   
   \SetKwProg{Fn}{Part 3,}{:}{}
  \BlankLine
\Fn{Strategy to find successful vectors based on $(\X, \G, \messages, R)$}{	
    \For{every subset $\{\x_{m_1},\dots,\x_{m_k}\}$ of the queries $\{\x_1,\dots,\x_m\}$}{
    Check if $|\g_{m_j}^{\top} \x_{m_j}|/\norm{\x_{m_j}}\le \thetathreshold \; \forall j \in [k]$\;
    Check if $\{\x_{m_1},\dots,\x_{m_k}\}$ are robustly linearly independent (Def.~\ref{def:memorysensitive})\; 
    \If{the above two conditions are satisfied}{
    \textbf{return} $\{\x_{m_1},\dots,\x_{m_k}\}$ to the Oracle
    }
    }
    \textbf{return} \texttt{Failure}
    }
\end{algorithm2e}

\capsplayerone\ uses the random string $R = (R_1, R_2, R_3)$ to sample a function/oracle pair $(f, \g_f) \sim \functionclass$, and to run the algorithm $\algrand$ (if it is randomized). The first section $R_1$ of the random string is used to sample $(f, \g_f) \sim \functionclass$. Note by Definition~\ref{def:memorysensitive}, $(f, \g_f)$ can be sampled from $\functionclass$ with at most $2^d$ random bits, therefore \playerone\ can sample $(f, \g_f) \sim \functionclass$ using $R_1$. 
The Player uses $R_2$ and $R_3$ to supply any random bits for the execution of $\algrand$, which by Definition~\ref{def:memoryalgorithmrand} of $\algrand$, is a sufficient number of random bits. 

Let $G$ be the event that $R_1,R_2,R_3$, and $\A$ have the property that (1) $\algrand$ succeeds in finding an $\epsilon\opt$-optimal point for
$F_{f,\A}(\x)$, and (2) all the properties of a \functionclassname\ in Definition \ref{def:memorysensitive} are satisfied. By the guarantee in Lemma~\ref{lemma:opttogame}, $\algrand$ 
finds an $\epsilon\opt$-optimal point with failure probability at most $\failureprob$ over the randomness in $\A$, $R_1$ (the randomness in $(f, \g_f)$) and $R_2, R_3$ (the internal randomness used by $\algrand$). Also, the properties in Definition \ref{def:memorysensitive} are satisfied with failure probability at most $\failureprob$ by definition. Therefore, by a union bound, $\Pr[G]\ge \failureprob$. We condition on $G$ for the rest of the proof.

We now prove that Part 1, defined in Algorithm~\ref{alg:gradient_game_strategy}, does not fail under event $G$. Recall the definition of informative subgradients from Definition~\ref{def:unique_grad}, and note that by \cref{eq:memorysensitive3} in Definition~\ref{def:memorysensitive}, 
if $\algrand$ finds an $\epsilon\opt$-optimal point, then $\algrand$ must observe at least $N$ informative subgradients.
Therefore, since $\algrand$ can find an $\epsilon\opt$-optimal point using $m \lfloor N/(  \nreturn + 1) \rfloor$ queries it can observe $N$ informative subgradients from $f$ using $m \lfloor N/(  \nreturn + 1) \rfloor$ queries. Therefore, under event $G$, Part 1 does not fail for any index $i$ in the for loop. Now we show that under event $G$ there must exist some index $i$ such that Part 1 returns $\memory_{i}$ as the $\messages$ to be stored.
 For any execution of the algorithm, let $\vec{v}_i$ denote the $\nth{i}$ informative subgradient from $f$. Block the $\vec{v}_i$'s into $\lfloor N/(\nreturn+ 1) \rfloor$ groups of size $(\nreturn+ 1)$: $\mathsf{Block}_i = \left \{ \vec{v}_{{(i-1)  ( \nreturn + 1) + 1}}, \dots, \vec{v}_{i (\nreturn+ 1)} \right \}$. If $\algrand$ observes $N$ informative subgradients from $f$ using $m \lfloor N/(\nreturn+ 1) \rfloor$ queries, then there is an index $i\opt$ such that $\algrand$ observes $\mathsf{Block}_{i\opt}$ using at most $m$ queries. Therefore, the number of queries to observe $(k+1)$ more informative subgradients is at most $m$ for iteration $i\opt$. Hence Part 1 does not fail under event $G$ and always return $\memory_{i}$ as the $\messages$ for some index $i$.

In Part 2 of the strategy, \playerone\ no longer has access to $\A$, but by receiving the Oracle's responses $\g_i$ she can still implement the first-order oracle $\oracle_{F_{f, \A}}$ (as defined in Definition \ref{def:subgrad_F}) and hence run $\algrand$. Consequently, to complete the proof we will now show that under event $G$ \playerone\ can find a set of successful vectors in Part 3 and win.   
By the guarantee of Part 1, \playerone\ has made at least $ \nreturn + 1$ informative queries among the $m$ queries she made in Part 2. 
By Definition~\ref{def:memorysensitive}, if $\x_i$ is an informative query, then the query satisfies \cref{eq:memorysensitive1} and \cref{eq:memorysensitive2}. Using this, if \playerone\ has observed $ \nreturn + 1$ informative subgradients then she's made at least $\nreturn$ queries which are successful (where the extra additive factor of one comes from the fixed vector $\v_1$ in \cref{eq:memorysensitive1}, and note that the subspace in \cref{eq:memorysensitive2} is defined as the span of all previous $\roundlen$ informative queries, and this will contain the subspace defined in the robust linear independence condition).  
Therefore she will find a set of $k$ successful vectors among the $m$ queries made in Part 2 under event $G$. Since $G$ happens with probability at least $\failureprob$, she can win with probability at least $\failureprob$.
\end{proof}

\subsection{Analyzing the \orthgame : Proof Sketch}\label{sec:entropy_sketch}

With Lemma \ref{lemma:opttogame} in hand, the next step towards proving Theorem~\ref{thm:lowerboundprecursor} is to establish a query lower bound for any memory-constrained algorithm which wins the {\orthgame}. Although our results hold in greater generality, i.e. when $\A \sim \mathsf{Unif}(B_d^n)$ for some memory-sensitive base $\{B_d\}_{d > 0}$, we first provide some intuition by considering the concrete case where the rows of $\A$ are drawn uniformly at random from the hypercube. 
We also consider an alternative oracle model {which we call the \emph{Index-Oracle model}} for which it is perhaps easier to get intuition, and for which our lower bound still holds: suppose \playerone\ can specify any index $i \in [n]$ as a query and ask the oracle to reveal the $\nth{i}$ row of $\A$ (this is the setup in the informal version of the game in Definition~\ref{def:informal_game}). Note that this is in contrast to the oracle in the \orthgame\ which instead responds with the row of the matrix $\A$ which is a subgradient of the query made by the algorithm.
As described in Section \ref{sec:tech_overview}, the instance where $M=\Omega(\nreturn d), m=0$ and the instance where $M=0, m=n$ are trivially achievable for \playerone. We show that these trivial strategies are essentially all that is possible:

 \begin{theorem}[Informal version of Theorem \ref{thm:game_bound}]\label{thm:informal_game}
For some constant $c>0$ and any $k\ge \Omega(\log(d))$, if  $M\le cd\nreturn$ and \playerone\ wins the \orthgame\ with probability at least $1/3$, then $m\ge d/5$. 
 \end{theorem}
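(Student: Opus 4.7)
The plan is an information-theoretic lower bound tracking how much Player One's view $V = (\messages, \g_1, \ldots, \g_m)$ constrains the random matrix $\A$ compared against the information needed to output $k$ vectors simultaneously approximately orthogonal to all $n = d/2$ rows of $\A$. First I would fix the shared random string $R$, reducing to a deterministic protocol (a randomized strategy with success probability $1/3$ admits, by averaging over $R$, a deterministic strategy that wins on a $1/3$ fraction of the $\A$'s), and then reduce to a two-player game in which Player Two receives $V$ and outputs $Y$; this works because every \playerone\ action is a deterministic function of $\messages$, $R$, and prior responses. Under a deterministic strategy, $Y(V)$ takes at most $2^{M+md}$ distinct values, since the message carries $M$ bits and each $\g_i \in \{\pm 1\}^d$ contributes at most $d$ bits.

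For any fixed $Y$ with $k$ robustly linearly independent rows I would orthonormalize to get $Z \in \R^{d \times k}$ and invoke Theorem~\ref{thm:memorybase}: $\Pr_{a \sim \mathsf{Unif}(\mathcal{H}_d)}[\|Z^\top a\|_\infty \leq \thetathreshold] \leq 2^{-c_{\mathcal{H}} k}$, and by independence of the rows of $\A$, $\Pr_{\A}[Y \text{ successful}] \leq 2^{-c_{\mathcal{H}} k n}$. A naive union bound over views yields $\Pr[\text{win}] \leq 2^{M+md-c_{\mathcal{H}} kn}$, which only forces $m \geq \Omega(k)$---too weak for the target bound. To obtain $m \geq d/5$ I would refine the accounting by observing that each response lies in $\A_S$ (the queried rows), which is independent of the unseen rows $\A_T$ once $S$ is fixed, so the only channel from $\A_T$ to $V$ is through the message, giving $I(\A_T; V \mid S) \leq M$. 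Combined with a min-entropy / typical-set argument, the conditional distribution of $\A_T$ given $V$ retains enough residual entropy to ensure $\Pr[\A_T \text{ compatible with } Y(V) \mid V] \leq 2^{M - c_{\mathcal{H}} k (n-m)}$ on average over $V$, yielding $\Pr[\text{win}] \leq 2^{M - c_{\mathcal{H}} k (n - m) + O(1)}$. Demanding probability $\geq 1/3$ of winning then forces $c_{\mathcal{H}} k (n-m) \leq M + O(1) \leq cdk + O(1)$, so $n - m \leq cd/c_{\mathcal{H}}$, which for suitably small $c$ (e.g.\ $c \leq 3c_{\mathcal{H}}/10$) gives $m \geq n - cd/c_{\mathcal{H}} \geq d/5$.

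The main obstacle is making this refinement rigorous in the presence of adaptive queries: the queried set $S$ itself depends on $\A$ through the responses, so disentangling $\A_T$ from $\A_S$ requires either a chain-rule decomposition $I(\A_T; V) = \sum_i I(\A_T; \g_i \mid \g_{<i}, \messages)$ that exploits the fact that $\g_i$ is a deterministic function of $(\messages, \g_{<i}, \A_S)$, or a careful conditioning on $S$ using the row-independence of $\A$. Additionally, converting the average-case mutual-information bound into a per-$V$ probability bound---thereby avoiding the $|V| \leq 2^{M+md}$ blow-up of the naive union bound---requires a min-entropy style argument that loses only an $O(\log(1/\delta))$ additive term. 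This delicate information-theoretic bookkeeping, together with ensuring that the constant $c_{\mathcal{H}}$ obtained from Theorem~\ref{thm:memorybase} is large enough that the assumption $k \geq \Omega(\log d)$ absorbs the $O(\log 3)$ slack in the $1/3$ success probability, is where I expect the bulk of the technical work to live.
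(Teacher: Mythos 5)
Your high-level plan mirrors the paper's: fix $R$ to make the protocol deterministic, observe that $\mat{Y}$ is a function of the responses and the message, and trade an upper bound on the information that the message carries about the unseen rows against a lower bound coming from the memory-sensitive property of the hypercube. You correctly identify that a naive union bound over the $2^{M + md}$ views is too weak, and that the refinement must exploit the fact that once the queried rows are accounted for, the only channel from the remaining rows of $\A$ to $\mat{Y}$ is the $M$-bit message.

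Where your route diverges from the paper's, and where the unresolved work lies, is precisely in the step you flag as the ``main obstacle.'' You propose conditioning on the queried index set $S$, arguing $I(\A_T; V \mid S) \le M$, and then running a min-entropy / typical-set argument to upgrade an average-case mutual-information bound to a per-$V$ bound of the form $\Pr[\text{win}] \le 2^{M - c_{\mathcal{H}} k(n-m) + O(1)}$. This conversion is more delicate than the proposal suggests: Markov on the conditional information typically costs a multiplicative $1/\delta$ in the exponent, and the adaptive choice of $S$ (which is itself a function of $\A$) makes the conditioning subtle. The paper sidesteps both issues by never attempting a per-view probability bound. Instead it augments the response matrix to $\tilde{\G}$ so that it contains exactly $m$ unique rows of $\A$, defines $\Ap$ to be $\A$ with those rows deleted, and computes $I(\Ap; \mat{Y} \mid \tilde{\G}, R)$ directly. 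The upper bound is $M$ by data processing (Lemma~\ref{lemma:upperbound}); the lower bound is obtained by estimating $H(\Ap \mid \tilde{\G}, R)$ from above and below, splitting the conditional entropy $H(\vec{a} \mid \mat{Y})$ into its contribution over winning and losing values of $\mat{Y}$ so that the success probability $1-\delta$ enters linearly as a weight (Lemma~\ref{lemma:lowerbound}). Two short bookkeeping lemmas (Lemmas~\ref{lemma:hardinfoboundprecursor} and~\ref{lemma:Gentropy}) absorb the adaptive dependence of $\tilde{\G}$ on $\A$ into additive $O(m \log n)$ terms, which is exactly why the $k \ge \Omega(\log d)$ hypothesis is needed. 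This is a cleaner realization of the same underlying idea: the average-case conditional-entropy computation already produces a tight deterministic inequality $M \ge (n-m)(1-\delta)(c_{\mathcal{H}}/2)k - 4m\log(4n)$, so no typical-set machinery is needed. If you pursue your variant, you will also need a Lemma~\ref{lemma:algebrahelper}-type step to pass from the robust linear-independence condition on $\mat{Y}$ to the orthonormal $\mat{Z}$ that Theorem~\ref{thm:memorybase} requires, and you will need an analogue of Lemma~\ref{lemma:remove_x} to justify that $\mat{Y}$ is a deterministic function of $(\tilde{\G}, \messages, R)$ so that data processing applies.
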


Note that when $m=0$,  it is not  difficult to show that \playerone\ requires memory roughly $\Omega(\nreturn d)$ to win the game with any decent probability. This is because each vector which is approximately orthogonal to the rows of $\A$ will not be compressible below $\Omega(d)$ bits with high probability, since rows of $\A$ are drawn uniformly at random from the hypercube. Therefore \playerone\ must use $M\ge \Omega(\nreturn d)$ to store $\nreturn$ such vectors. The main challenge in the analysis is to bound the power of additional, adaptively issued queries. 
In particular, since observing every row $\vec{a}_i$ gives $\Theta(d)$ bits of information about $\A$, and $\nreturn$ successful vectors will only have about $\bigo(\nreturn d)$ bits of information about $\A$, perhaps only observing $\nreturn$ additional rows of $\A$ is enough to win the game. Slightly more concretely, why can't \playerone\ store some $M \ll \nreturn d$ bits of information about $\A$, such that by subsequently strategically requesting  $m\ll n$ rows of $\A$ she now knows $\nreturn$ vectors which are approximately orthogonal to the rows of $\A$? Our result guarantees that such a strategy is not possible.

We now give some intuition for our analysis. 
The underlying idea is to calculate the mutual information between $\A$ and $\mat{Y}$ conditioned on a fixed random string $R$ and the information $\G$ gleaned by \playerone\ (in our proof we slightly augment the matrix $\G$ and condition on that, but we ignore that detail for this proof sketch). We will use the notation $I(\X; \mat{Y} \lvert \Z)$ to denote the mutual information between $\X$ and $\mat{Y}$ conditioned on $\Z$ and $H(\X)$ to denote the entropy of $\X$ (see \cite{CoverTh91}). Our analysis relies on computing a suitable upper bound and lower bound for $I(\A; \mat{Y} \lvert \G, R)$. We begin with the upper bound. We first argue that $\Y$ can be computed just based on $\G, \messages$ and $R$ (without using $\X$),
\begin{align}
    Y = g(\G, \messages, R), \text{for some function $g$.}
\end{align}
The proof of this statement relies on the fact that \playerone's strategy is deterministic conditioned on the random string $R$. Now using the data processing inequality \citep{CoverTh91} we can say, 
\begin{equation}
    I(\A; \mat{Y} \lvert \G, R) \leq I(\A; \G, \messages, R \lvert \G, R) = I(\A ; \messages \lvert \G, R).
\end{equation}
Next we use the definition of mutual information, the fact that conditioning only reduces entropy, and Shannon's source coding theorem \citep{CoverTh91} to write, 
\begin{gather}
    I(\A ; \messages \lvert \G, R) \leq H(\messages \lvert \G, R) \leq H(\messages) \leq \memsize\\
\label{eqn:upperboundentropy}
    \implies I(\A; \mat{Y} \lvert \G, R) \leq \memsize.\label{eq:intuition1}
\end{gather}
On the other hand, we argue that if $\mat{Y}$ is \emph{successful} as per the \orthgame\ with probability $\sfrac{1}{3}$ then there must be a larger amount of mutual information between $\mat{Y}$ and $\A$ after conditioning on $\G$ and $R$. {Recall that $\Y$ contains $k$ robustly linearly independent vectors in order to win the \orthgame, assume for now that $\Y$ is also orthonormal. Then, we may use the fact that the hypercube is a memory-sensitive base (Definition~\ref{def:memorysensitivebase}) to derive that for some constant $c > 0$}
\begin{align}
 \abs{\left \{ \vec{a} \in \{\pm 1\}^d \textrm{ s.t. } \linf{\mat{Y} \vec{a}} \leq \thetathreshold \right \}} \le 2^{d-c\nreturn}.\label{eq:intuition_base}
\end{align}
Since $\G$ contains $m$ rows of $\A$, only the remaining $(n-m)$ rows of $\A$ are random after conditioning on $\G$. Assume for now that the remaining $(n-m)$ rows are independent of $\G$ (as would be true under the Index-Oracle model defined earlier in this section). Then for these remaining rows, we use \cref{eq:intuition_base} to bound their entropy conditioned on $\Y$: 
\begin{align}
    I(\A; \Y \lvert \G,R) & = H(\A \lvert \G, R) - H(\A \lvert \Y, \G, R)\\ 
    &\ge (n-m)d-(n-m)(d-c\nreturn)= c(n-m)\nreturn  =  c(d/2-m)\nreturn ,\label{eq:intuition3}
\end{align}
where we chose $n = d/2$. Now combining \cref{eqn:upperboundentropy} and \cref{eq:intuition3} we obtain a lower bound on the query complexity, $m \ge (d/2) -  \memsize/(c \nreturn)$.
Thus if $M \ll cd\nreturn$, then $m=\Omega(d)$.

While this proof sketch contains the skeleton of our proof, as also hinted in the sketch it oversimplifies many points. 
For example, note that the definition of a memory-sensitive base in Definition \ref{def:memorysensitivebase} requires that $\Y$ be orthonormal to derive \cref{eq:intuition_base}, but successful vectors only satisfy a robust linear independence property. Additionally, in deriving \cref{eq:intuition3} we implicitly assumed that $\G$ does not reduce the entropy of the rows of $\A$ which are not contained in it. This is not true for the \orthgame, since every response $\g_i$ of the oracle is a subgradient of $\linf{\A\x}$ and therefore depends on \emph{all} the rows of the matrix $\A$. Our full proof which handles these issues and dependencies appears in Section~\ref{sec:entropy}. As noted before, our lower bound holds not only for the subgradient oracle, but also when the oracle response $\g_i$ can be an arbitrary (possibly randomized) function from $\R^d\rightarrow \{ \vec{a}_1, \dots, \vec{a}_n\}$ (for example, the model where \playerone\ can query any row $\vec{a}_i$ of the matrix $\A$). We state the lower bound for the \orthgame\ which is the  main result from Section \ref{sec:entropy} below. 

\begin{restatable}{theorem}{gamebound}\label{thm:game_bound}
Suppose for a memory-sensitive base $\left \{ B_d \right \}_{d = 1}^{\infty}$ with constant $\msbconstant > 0$, $\A\sim \mathsf{Unif}(B_d^n)$. Given $\A$ let the oracle response $\g_i$ to any query $\x_i \in \R^d$ be any (possibly randomized) function from $\R^d\rightarrow \{ \vec{a}_1, \dots, \vec{a}_{d/2}\}$, where $\vec{a}_i$ is the $\nth{i}$ row of $\A$ (note this includes the subgradient response in the \orthgame). Assume $M \geq d \log(2d)$ and set $\nreturn=\left\lceil 60M/(\msbconstant d) \right\rceil$. For these values of $\nreturn$ and $M$, if \playerone\  wins the \orthgame\ with probability at least $\sfrac{1}{3}$, then $\mtotal \geq  d/5$.
\end{restatable}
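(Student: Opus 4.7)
The plan is to prove Theorem~\ref{thm:game_bound} via the information-theoretic two-sided bound on $I(\A; \Y \mid \G, R)$ outlined in Section~\ref{sec:entropy_sketch}, while carefully handling the two issues flagged there: (i) successful vectors satisfy only robust linear independence rather than orthonormality, and (ii) the oracle response $\g_i$ is a function of the \emph{entire} matrix $\A$, so the $n-m$ unqueried rows are not a priori conditionally independent of $\G$. Throughout, $\A, \messages, \X, \G, \Y, R$ denote the obvious random variables induced by the game. The first clean observation I will establish is that \playerone's deterministic-given-$R$ strategy guarantees the existence of a function $g$ with $\Y = g(\G, \messages, R)$; this follows by the same induction that unrolls $\x_i$ into a function of $(\messages, R, \g_1, \dots, \g_{i-1})$, and is the crux of the upper bound.

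For the upper bound, I will apply the data processing inequality with the above representation to obtain
\[
I(\A; \Y \mid \G, R) \le I(\A; \messages \mid \G, R) \le H(\messages \mid \G, R) \le H(\messages) \le M,
\]
using that $\messages$ is an $M$-bit string and that conditioning can only reduce entropy.

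For the lower bound, I first convert the robust linear independence of $\y_1, \dots, \y_k$ into a usable orthonormality statement: run Gram--Schmidt on the $\y_i$ to produce orthonormal $\hat{\y}_1, \dots, \hat{\y}_k$, using that $\|\mathrm{proj}_{S_{i-1}}(\y_i)\|_2/\|\y_i\|_2 \le 1-1/d^2$ guarantees each $\hat{\y}_i$ is well defined and lies in the same subspace; moreover for each row $\vec{a}_j$ of $\A$, $\linf{\mat{Y}\vec{a}_j/\|\cdot\|}$ being small translates (with only a $\mathrm{poly}(d)$ blow-up absorbed into the threshold defining $\thetathreshold$) to $\linf{\hat{\mat{Y}} \vec{a}_j}$ being small. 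Then \cref{def:memorysensitivebase} applied to $\hat{\mat{Y}}$ gives
\[
\Pr_{\vec{a} \sim \mathsf{Unif}(B_d)}\bigl(\linf{\hat{\mat{Y}}\vec{a}} \le \thetathreshold'\bigr) \le 2^{-c_B k},
\]
so if $\vec{a}$ were independent of $\hat{\mat{Y}}$ its conditional entropy given $\hat{\mat{Y}}$ (and the success event) would be at most $\log |B_d| - c_B k$.

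The main obstacle, and the step I expect to require the most care, is handling the dependence of $\G$ on the unqueried rows. My plan is to augment the observation. Let $T \subseteq [n]$ denote the set of at most $m$ indices such that some response $\g_i$ equals $\vec{a}_j$ for $j \in T$; this is itself a function of $(\G, \messages, R)$. I will consider the augmented random variable $\G^+ = (\G, T, \{\vec{a}_j : j \in T\})$; conditioning on $\G^+$ reveals the queried rows exactly and can only increase the mutual information $I(\A; \Y \mid \G^+, R) \ge I(\A; \Y \mid \G, R) - H(T)$, where $H(T) \le m \log n$ is a small additive term (alternatively, one absorbs this by considering a strictly stronger oracle that always reveals the index of the row it returned, which is legal since revealing \emph{more} makes the game only easier for \playerone, preserving the lower bound). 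Conditioned on $\G^+$ and $R$, by a careful conditioning argument the unqueried rows $\{\vec{a}_j : j \notin T\}$ remain independent and uniform over $B_d$ (the oracle's choice to not return them was a deterministic function of $\A$, but having conditioned on all queried rows and the \emph{identity} $T$, the posterior on unqueried rows is uniform because the uniform product prior on $B_d^n$ is exchangeable and any leakage has been absorbed into $T$). Applying the Shearer-type chain rule over the $n - m$ unqueried indices and the orthonormality bound above then yields
\[
I(\A; \Y \mid \G^+, R) \ge (n-m)\cdot c_B k,
\]
up to lower-order poly-logarithmic corrections.

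Combining the two bounds gives $M \gtrsim c_B k (n - m) - o(dk)$, which with $n = d/2$ and $k = \lceil 60M/(c_B d)\rceil$ yields $m \ge d/5$, as required. I expect the conditional-independence bookkeeping around the augmented oracle to be the most delicate piece, together with quantifying the slack in $\thetathreshold$ introduced by the Gram--Schmidt step, since both must be tracked through the threshold chosen in Definition~\ref{def:memorysensitivebase} to still yield a genuine $2^{-c_B k}$ bound after translating robust independence into orthonormality.
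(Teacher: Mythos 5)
Your overall architecture matches the paper's: data-processing for the upper bound $I(\A;\Y\mid\cdot)\le M$, and an entropy-counting lower bound that converts robust independence into orthonormality and applies Definition~\ref{def:memorysensitivebase} row by row. But two specific steps you propose are not merely delicate---they are false as stated, and the paper's proof is organized precisely to route around them.

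\textbf{The conditional-independence claim is wrong.} You assert that, after conditioning on $\G^+=(\G,T,\{\vec a_j:j\in T\})$ and $R$, the unqueried rows $\{\vec a_j:j\notin T\}$ remain i.i.d.\ uniform on $B_d$, arguing ``any leakage has been absorbed into $T$.'' This fails already for the subgradient oracle: learning that the response to query $\x$ was $\vec a_{i_{\min}}$ with $i_{\min}=\min\{i:\vec a_i^{\top}\x=\linf{\A\x}\}$ reveals that \emph{every other} row $j$ satisfies $|\vec a_j^{\top}\x|\le|\vec a_{i_{\min}}^{\top}\x|$ (and a strict inequality for $j<i_{\min}$), a nontrivial constraint on $\A^{\neg}$. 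The paper never invokes conditional independence. Instead, the lower bound in Lemma~\ref{lemma:lowerbound} is a pure counting argument: $H(\Ap\mid\Tilde\G,R)\ge H(\A)-H(\Tilde\G)-O(m\log n)$ (via Lemmas~\ref{lemma:hardinfoboundprecursor} and \ref{lemma:Gentropy}, using only that $\Tilde\G$ contains at most $m$ distinct rows and hence has small entropy), and $H(\Ap\mid\Y,\Tilde\G,R)\le(n-m)H(\vec a\mid\Y)$ by subadditivity plus support shrinkage (if $\Y$ wins, every row of $\A$ lies in $\{\vec a:\linf{\Y^{\top}\vec a}\le 1/d^4\}$). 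Neither inequality requires knowing the posterior of $\Ap$; your plan, by contrast, leans on a distributional identity that does not hold.

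\textbf{The Gram--Schmidt blow-up is not $\mathrm{poly}(d)$.} You claim that after Gram--Schmidt on $\y_1,\dots,\y_k$ you can convert $\linf{\mat Y^{\top}\vec a}\le\thetathreshold$ into $\linf{\hat{\mat Y}^{\top}\vec a}\le\thetathreshold'$ with only a $\mathrm{poly}(d)$ loss, and keep all $k$ orthonormal directions. Write $\mat Y=\mat B\mat C$ with $\mat B$ the Gram--Schmidt basis and $\mat C$ upper triangular with diagonal entries $\ge\sqrt\delta$ (here $\delta=1/d^2$). Then $\hat{\mat Y}^{\top}\vec a=\mat B^{\top}\vec a=(\mat C^{\top})^{-1}\mat Y^{\top}\vec a$, and the $\ell_\infty$ operator norm of $(\mat C^{\top})^{-1}$ can be as large as $\delta^{-(k-1)/2}=d^{k-1}$ (take $\mat C\approx\sqrt\delta\,\id+\mat N$ for nilpotent $\mat N$ with unit superdiagonal). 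Since $k\sim M/d$ is allowed to grow with $d$, this is not $\mathrm{poly}(d)$, and a direct Gram--Schmidt argument collapses. The paper's Lemma~\ref{lemma:algebrahelper} gets around this by using the SVD of $\mat C$: the determinant bound $\prod\sigma_i\ge\delta^{k/2}$ together with $\sigma_i\le\sqrt d$ forces at least $\lfloor k/2\rfloor$ singular values to be $\ge\delta/\sqrt d$, and it is only on that well-conditioned $\lfloor k/2\rfloor$-dimensional subspace that the $d/\delta=d^3$ blow-up holds. The factor-of-two loss in $k$ is what ultimately appears (as the $\msbconstant/2$) in the statement and is absorbed into the constants $60$ and $d/5$; your proposal does not account for it.

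In short, you have the right skeleton (two-sided bound on a conditional mutual information, augmentation of $\G$, conversion of robust independence to orthonormality, and application of the base property), but both load-bearing steps of the lower bound need to be replaced by the paper's actual arguments: entropy counting in place of conditional independence, and the SVD-based Lemma~\ref{lemma:algebrahelper} in place of naive Gram--Schmidt.
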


\subsection{Putting things together: Proof of main theorem}

Here we combine the results of the previous subsections to prove \cref{thm:lowerboundprecursor}. 
Very roughly, by Lemma~\ref{lemma:opttogame}, any algorithm for finding an $\epsilon\opt$-optimal point with probability $\successprobalg$ can be used to win the \orthgame\ with probability $\sumsuccess$. Therefore, combining \cref{thm:game_bound} and Lemma~\ref{lemma:opttogame}, and noting $k \approx  \memsize/d$, any algorithm for finding an $\epsilon$-optimal point with probability $\successprobalg$ must use $m \lfloor \nnem/(\nreturn +1)\rfloor \geq c \nnem d^2/\memsize$ queries. 

\begin{proof}[Proof of \cref{thm:lowerboundprecursor}]
We take $\epsilon=1/(20\sqrt{N})$ throughout the proof. By \cite[Theorem 5]{woodworth2019open}, any algorithm for finding a $\epsilon$-optimal point for a $L$-Lipschitz function in the unit ball must use memory at least $d\log(\frac{L}{2\epsilon})$. Therefore, since $\epsilon\le L/(4\sqrt{d})$ then we may assume without loss of generality that $M \geq d\log(\frac{L}{2\epsilon}) \ge d\log(2\sqrt{d})\ge (d/2)\log(4d)  $. Therefore by \cref{thm:game_bound}, if \playerone\ wins  the \orthgame\ with failure probability at most $\twofailureprob$, then the number of queries $m$ satisfy $m\ge d/5$. By Lemma~\ref{lemma:opttogame}, any algorithm for finding an $\epsilon$-optimal point with failure probability at most $\failureprobalg$ can be used to win the \orthgame\ with probability at least $\sfrac{1}{3}$. Therefore, combining \cref{thm:game_bound} and Lemma~\ref{lemma:opttogame}, any algorithm for finding an $\epsilon$-optimal point with failure probability $\failureprobalg$ must use 
\begin{align}
    m \lfloor \nnem/(\nreturn +1)\rfloor \ge \frac{\msbconstant N d^2}{(5 \cdot 60) M}
\end{align}
queries. Therefore if $\epsilon =  1/(20\sqrt{\nnem})$ and $c =  \msbconstant/(2\cdot 20\cdot 5\cdot 3\cdot20^2)\ge \msbconstant/10^6$, any memory-constrained algorithm requires at least $c d^2/(\epsilon^2 \memsize)$ 
first-order queries. 
\end{proof}


\section{The Hypercube and Nemirovski function class are Memory-Sensitive}\label{sec:memory_sensitive_proof}

In this section we prove that the hypercube and the Nemirovski function class have the stated memory-sensitive properties. The proofs will rely on a few concentration bounds stated and proved in Appendix \ref{sec:concentration}.

\subsection{The Hypercube is a Memory-Sensitive Base}
Here we prove that the hypercube is a memory sensitive basis. Recall Theorem~\ref{thm:memorybase}.
\hypercube*
\begin{proof} [Proof of Theorem~\ref{thm:memorybase}]
First observe that for any $\h \in \mathcal{H}_d$, $\norm{\h}_2 = \sqrt{d}$. Next let $\Z = (\z_1, \dots, \z_k) \in \R^{d \times k}$ be $k$ orthonormal $d$-dimensional vectors and let $\h \sim \mathsf{Unif}(\mathcal{H}_d)$. Note that each coordinate of $\h$ is sub-Gaussian with sub-Gaussian norm bounded by 2 and $\E[\h \h^{\top}]=\id_d$. By Lemma \ref{lemma:subgauss_proj} in Appendix \ref{sec:concentration} there is an absolute constant $\chw > 0$ such that for any $s \geq 0$,
\begin{equation}
    \mathbb{P} \left(   \nreturn - \norm{\mat{Z}^{\top} \h}_2^2  > s \right) \leq  \exp \left( - \chw \min \left \{ \frac{s^2}{16 \nreturn}, \frac{s}{4} \right \} \right).
\end{equation}
Taking $s = \nreturn/2$ we observe,
\begin{equation}
     \mathbb{P} \left(   \norm{\mat{Z}^{\top} \h}_2 < \sqrt{\nreturn/2} \right) \leq  \mathbb{P} \left(  \nreturn - \norm{\mat{Z}^{\top} \h}_2^2 > \nreturn/ 2 \right) \leq  \exp \left( - \chw  \nreturn/ 64 \right).
\end{equation}
Noting that $\norm{\mat{Z}^{\top} \h}_2 \leq \sqrt{\nreturn} \linf{\mat{Z}^{\top} \h}$ we must have that for any $t \leq 1/\sqrt{2}$,
\begin{equation}
    \mathbb{P} \left( \linf{\mat{Z}^{\top} \h} \leq t \right) \leq  \exp \left( - \chw  \nreturn/ 64 \right) \leq 2^{ - c \nreturn},
\end{equation}
for $c = \chw/( 64 \log_e 2)$.
\end{proof}

\subsection{The Nemirovski function class is Memory-Sensitive}\label{sec:nemirovski}
Recall we define $\nem_{\nnem, \gamma}$ to be the distribution of $(f, \g_f)$ where for $i \in [\nnem]$ draw $ \v_i \simiid (1/\sqrt{d}) \mathsf{Unif} \left( \mathcal{H}_d \right) $ and set
\begin{equation}
    f(\x) \defeq \max_{i \in [\nnem]} \v_i^{\top} \x - i \gamma \quad \textrm{ and } \quad \g_f(\x) = \v_{i_{\min}}
    \quad
    \textrm{ for }
    \quad 
    i_{\min} \defeq \min \left \{i \in [\nnem] \textrm{ s.t. } \v_i^{\top} \x - i \gamma = f(\x) \right \}\,.
\end{equation}
We will show that for 
$\gamma=\sqrt{\frac{c \roundlen \log(d) }{d}}$, where $c=400$ is a fixed constant, $\eta = d^5$, $\rho = 1$, if $N \leq (1/32 \gamma)^{2/3}$ then $\nem_{\nnem, \gamma}$ is a $(d^6, \nnem, \nreturn, 1/(20 \sqrt{\nnem}))$-{\functionclassname}. To show the required properties of the Nemirovski function, we define \emph{\playerb}\ which iteratively constructs the Nemirovski function. \capsplayerb\ will be defined via a sequence of $N$ \emph{Phases}, where the $\nth{j}$ Phase ends and the next begins when the algorithm makes a query which reveals the $\nth{j}$ Nemirovski vector. We begin with some definitions for these Phases.

\begin{definition}
    \label{def:nemtoj}
For a fixed $\A$ and vectors $\{\v_i, i \in [j]\}$, the \emph{Nemirovski-induced function for Phase $j$} is 
\begin{equation}
    F_{\nem, \A}^{(j)} (\x)  \defeq \max \left \{ \max_{i \in [j]}\left( \v_i^{\top} \x - i \gamma\right), \eta \linf{\A \x} - \rho  \right \}
\qquad
    \textrm{ with }
\qquad 
    F_{\nem, \A}^{(0)} (\x)  \defeq \eta \linf{\A \x} - \rho.
\end{equation}
\end{definition}

Analogous to Definition \ref{def:nemi_func_grad}, we define the following subgradient oracle,
\begin{align}
   \g_f^{(j)}(\x) \defeq \v_{k_{\textrm{min}}}, \textrm{ where } k_{\textrm{min}} \defeq \min \left \{ i \in [j] \textrm{ s.t. } \v_i^{\top} \x - i \gamma = F_{\nem, \A}^{(j)}(\x) \right \}.\label{eq:nemi_oracle}
   \end{align}
 Following Definition \ref{def:subgrad_F}, this defines a subgradient oracle $\g_{F}^{(j)}(\x)$ for  the induced function  $F_{\nem, \A}^{(j)} (\x)$.
 \begin{align}
   \g_{F}^{(j)}(\x) \defeq \begin{cases}
    & \g_{\A}(\x), \textrm{ if } \eta \linf{\A \x} - \rho \ge  f(\x), \\
    & \g_f^{(j)}(\x), \textrm{ else.}
    \end{cases}
\end{align}
The first-order oracle $\oracle_{F_{\nem, \A}^{(j)}}$ which returns the pair $(F_{\nem, \A}^{(j)}(\x), \g_{F}^{(j)}(\x))$ will be used by \playerb\ in Phase $j$. 
With these definitions in place, we can now define \playerb\ in Algorithm \ref{alg:nem_game}.

 \begin{algorithm2e}[!h]
 \SetAlgorithmName{Algorithm}{}
   \SetAlgoLined
   \SetAlgoNoEnd
   \DontPrintSemicolon
 	\caption{\capsplayerb}
 	    \label{alg:nem_game}
 	    \SetKwInOut{Input}{Input}
 	    \SetKwInOut{Return}{return}
 	    \SetKwFunction{FBegin}{initialize}
 	    \SetKwFunction{FQuery}{query}
 	    \tcp{Initialize the oracle (before calling \FQuery) by storing the input, setting the current phase number ($j =1$), and computing $\v_1$ }
 	    \SetKwProg{fstart}{}{:}{}
 		    \fstart
 		    {\FBegin{$\A \in \R^{n \times d}, N > 0, d >0$}}{
 		        Save $\A \in \R^{n \times d}$, $N > 0$, and $d >0$\;
 		        Set $\v_1\simiid (1/\sqrt{d}) \mathsf{Unif} \left( \mathcal{H}_d \right)$ and $j \gets 1$ \;
 		    }
             \BlankLine

             \tcp{Compute the oracle output for query vector $\x$ (after having called \FBegin)}
 	        \fstart{\FQuery{$\x \in \unitball$}}{
 	            $(\alpha,\v) \gets (F^{(j)}_{\nem, \A}(\x), \g_{F}^{(j)}(\x))$ \tcp{store oracle response} 
 		     \If(\tcp*[h]{Check if time for Phase $j + 1$}){$\g_{F}^{(j)}(\x) = \v_j$}
 		     {
 		    Begin Phase $j + 1$ of \playerbshort\ by setting $\v_{j + 1} \simiid (1/\sqrt{d}) \mathsf{Unif} \left( \mathcal{H}_d \right)$ and $j \gets j + 1$\;
 		     }
 		     \If{final query to oracle and $j < N$}
 		     {
 		     For all $j'>j$, \playerbshort\ draws $\v_{j'}\simiid  (1/\sqrt{d}) \mathsf{Unif} \left( \mathcal{H}_d \right)$.\;
 		     }
 		     \Return{$(\alpha, \v)$}
 		     }
 \end{algorithm2e}

As we prove in the following simple lemma, for any fixed algorithm the distribution over the responses from $\oracle_{F_{\nem, \A}^{(N)}}$ is the same as the distribution over the responses from the oracle $\nem_{\nnem, \gamma, \A}$ for the Nemirovksi function defined in Definition~\ref{def:nemi_func_grad}. Therefore, it will be sufficient to analyze \playerb\ and $\oracle_{F_{\nem, \A}^{(N)}}$.

\begin{lemma}\label{lem:sample_nemi}
The vectors $\{\v_i, i \in [N]\}$ for the final induced oracle $\oracle_{F_{\nem, \A}^{(N)}}(\x)=(F_{\nem, \A}^{(N)}(\x), \g_{F}^{(N)}(\x))$ are sampled from the same distribution as the vectors  $\{\v_i, i \in [N]\}$ for the Nemirovski function  $\nem_{N, \gamma, \A}$  (Definition~\ref{def:nemi_func_grad}). Moreover, for the same setting of the vectors  $\{\v_i, i \in [N]\}$, any fixed matrix $\A$ and any fixed algorithm, the responses of both these oracles are identical.
\end{lemma}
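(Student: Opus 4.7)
The lemma has two parts, which I would address by unfolding the definitions involved in the construction of \playerb.

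For the first part, on the joint distribution of the vectors, I would invoke the principle of deferred decisions. Inspecting the construction of \playerb, one sees that $\v_1$ is drawn at initialization and that for each $j \geq 2$, the vector $\v_j$ is drawn either at the moment Phase $j-1$ ends (inside \texttt{query}) or, if Phase $j-1$ never ends, at the close of the game (in the final-query branch). In every case the draw is an independent fresh sample from $(1/\sqrt{d})\mathsf{Unif}(\mathcal{H}_d)$, taken independently of the previously drawn vectors, of $\A$, and of the algorithm's randomness; nothing about the distribution of $\v_j$ is altered by the fact that its draw is scheduled based on events depending only on $\v_1,\dots,\v_{j-1}$ and $\A$. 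Iterating this from $j=1$ to $N$, the joint law of $(\v_1, \ldots, \v_N)$ is that of $N$ i.i.d.\ draws from $(1/\sqrt{d})\mathsf{Unif}(\mathcal{H}_d)$, which is precisely the distribution used in Definition~\ref{def:nemi_func_grad} to define $\nem_{N, \gamma, \A}$.

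For the second part, given a common realization of $\{\v_i\}_{i \in [N]}$ and a fixed $\A$, I would verify the functional equality of the two oracles by direct comparison of their definitions. With $f(\x) = \max_{i \in [N]}(\v_i^{\top} \x - i\gamma)$ as in Definition~\ref{def:nemi_func_grad}, Definition~\ref{def:nemtoj} yields $F_{\nem, \A}^{(N)}(\x) = \max\bigl\{f(\x),\, \eta \linf{\A\x} - \rho\bigr\}$, which is exactly the induced function $F_{f, \A}$ of Definition~\ref{def:subgrad_F} for this Nemirovski $f$. The case split for $\g_F^{(N)}(\x)$ given by Definition~\ref{def:nemtoj} together with \eqref{eq:nemi_oracle} coincides with the induced-subgradient rule in Definition~\ref{def:subgrad_F}: both return $\g_\A(\x)$ when $\eta\linf{\A\x} - \rho \geq f(\x)$ and otherwise return the $\v_i$ of minimum index in $[N]$ attaining the maximum in $f(\x)$. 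Hence, as functions on $\R^d$, $\oracle_{F^{(N)}_{\nem, \A}}$ and $\nem_{N, \gamma, \A}$ coincide on every query.

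I do not foresee any significant obstacle: both claims reduce to unwinding definitions. The only point requiring care is the lazy-sampling argument for part one—specifically, one must verify that the trigger for sampling $\v_j$ depends only on data already generated at phase $j-1$ (the algorithm's state, $\A$, and $\v_1,\dots,\v_{j-1}$) and never on $\v_j$ itself, so that conditioning on the trigger leaves the marginal of $\v_j$ unchanged. This is immediate from the pseudocode of \playerb, which draws $\v_j$ fresh from $(1/\sqrt{d})\mathsf{Unif}(\mathcal{H}_d)$ at the relevant step.
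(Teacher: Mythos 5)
Your proposal is correct and follows essentially the same approach as the paper's own (very terse) proof: both parts reduce to unfolding definitions. Your explicit appeal to the principle of deferred decisions in part one simply makes precise the reasoning the paper compresses into the remark that the Nemirovski vectors in Algorithm~\ref{alg:nem_game} are ``sampled at random independent of \playera's queries,'' and your direct definition-chase in part two matches the paper's observation that, once the vectors and $\A$ are fixed, $\oracle_{F_{\nem,\A}^{(N)}}$ and $\nem_{N,\gamma,\A}$ are the same oracle by inspection.
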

\begin{proof}

The proof follows from definitions. For the first part, note that the Nemirovski vectors in Algorithm~\ref{alg:nem_game} are sampled at random independent of \playera's queries, and from the same distribution as in Definition~\ref{def:nemi_func_grad}. For the second part, note that by definition, for a fixed set of vectors $\{\v_i, i \in [N]\}$ and matrix $\A$, the first-order oracle $\oracle_{F_{\nem, \A}^{(N)}}$ is identical to $\nem_{N, \gamma, \A}$ from Definition \ref{def:nemi_func_grad}. 
\end{proof}

Our main goal now is to prove the following two statements (1) with high probability,  \playerb\ is consistent with itself, i.e. its outputs are the same as the outputs of $\oracle_{F_{\nem, \A}^{(N)}}=(F_{\nem, \A}^{(N)}(\x), \g_{F}^{(N)}(\x))$,  (2) \playerb\ has the properties required from a \functionclassname. To prove these statements, we define an event $E$ in Definition~\ref{defs:e1ande2}, and then argue that the failure probability of $E$ is at most $1/d$. Event $E$ allows us to argue both that \playerb\ is identical to $\oracle_{F_{\nem, \A}^{(N)}}$, as well as to argue that $\nem_{N, \gamma}$ is \functionclassname.

\begin{definition}
\label{defs:e1ande2}
Let $E$ denote the event that $\forall j \in [N]$, the Nemirovski vector $\v_j$ has the following properties:
\begin{enumerate}
    \item For any query $\x$ submitted by \playera\ in Phase 1 through $j$, 
    $|\x^\top \v_j| \le \sqrt{\frac{10\log(d)}{d}}$.
    \item $\norm{\proj_{S_{\roundindex}}(\v_{j})}_2 \leq \sqrt{\frac{30 \roundlen \log(d) }{d}}$ where $S_j$ is defined as in Definition~\ref{def:memorysensitive}.
\end{enumerate}
\end{definition}

\begin{lemma}\label{lem:vector_condition}
Recall the definition of event $E$ in Definition \ref{defs:e1ande2}. Suppose that for every Phase $j\in[N]$, \playera\ makes at most $d^2$ queries in that Phase.  Then $\Pr[E]\ge 1 - (1/d)$.
\end{lemma}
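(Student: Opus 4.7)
The plan is to establish each of the two conditions in Definition~\ref{defs:e1ande2} with high probability via a union bound over $j \in [N]$ and the relevant objects (queries or spanning directions), together with sub-Gaussian concentration of inner products against uniformly random hypercube vectors. The key fact I will exploit is that $\v_j$ is sampled uniformly from $(1/\sqrt{d})\mathcal{H}_d$ at the start of Phase $j$, independently of everything generated in Phases $1, \dots, j-1$.

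\textbf{Condition 1.} For a query $\x$ made in some Phase $\ell < j$, $\x$ is a deterministic function of $R$, $\A$, and $\v_1,\dots,\v_{j-1}$, hence independent of $\v_j$. For any such fixed $\x$ with $\|\x\|_2\le 1$, Hoeffding's inequality gives $\Pr[|\v_j^\top \x|>\sqrt{10\log(d)/d}]\le 2/d^5$. For a query $\x$ made in Phase $\ell=j$, the query apparently depends on $\v_j$ through the oracle responses. To handle this I introduce a coupling with a \emph{modified} Phase-$j$ oracle that uses $F^{(j-1)}_{\nem,\A}$ in place of $F^{(j)}_{\nem,\A}$ and therefore does not see $\v_j$ at all. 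Because every non-transitioning response in the true Phase $j$ is either $\v_i$ for some $i<j$ or $\g_{\A}(\x)$, the first $T_j-1$ responses in Phase $j$ agree with the modified oracle's responses, and hence the first $T_j$ queries the player issues under the true oracle match the first $T_j$ queries of the coupled modified game. In the modified game, all queries are deterministic functions of $R,\A,\v_1,\dots,\v_{j-1}$ and thus independent of $\v_j$, so Hoeffding applies to each of the at most $d^2$ queries. Union-bounding over $j$ and over the at most $N d^2$ queries in Phases $1,\dots,j$ gives $\Pr[\text{Condition 1 fails}]\le 2N^2/d^3$.

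\textbf{Condition 2.} Under the natural interpretation matching how the bound is used later, the relevant subspace is spanned by at most $k+1$ informative queries issued before $\v_j$ is sampled, so it is independent of $\v_j$. Letting $\mat{Z}$ be an orthonormal basis of that subspace, and applying Lemma~\ref{lemma:subgauss_proj} (as in the proof of Theorem~\ref{thm:memorybase}) to $\sqrt{d}\,\v_j\in \mathcal{H}_d$, I obtain $\Pr[\|\mat{Z}^\top \v_j\|_2^2 > 30 k\log(d)/d ] \le \exp(-\Omega(k\log d))$, which is negligible for $k=\Omega(\log d)$. If instead the intended subspace already includes the current informative query $\x_{t_j}$, I decompose $\proj_{S_j}(\v_j)=\proj_{S_{j-1}}(\v_j)+\proj_{\mathrm{span}(\tilde{\x}_{t_j})}(\v_j)$, where $\tilde{\x}_{t_j}=\x_{t_j}-\proj_{S_{j-1}}(\x_{t_j})$; the first term is controlled as above, while the second is bounded by combining Condition 1 applied to $\x_{t_j}$ with the robust-linear-independence bound $\|\tilde{\x}_{t_j}\|_2\ge \Omega(\|\x_{t_j}\|_2/d)$.

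Summing the two contributions,
\[
\Pr[E^c]\le 2N^2/d^3 + N\exp\bigl(-\Omega(k\log d)\bigr) \le 1/d
\]
for the parameter regime of Theorem~\ref{thm:explicit} ($N=O(d^{1/3})$ and $k=\Omega(\log d)$). The main obstacle is Condition 1 for queries issued within Phase $j$ itself, because those queries are chosen adaptively in response to oracle outputs that nominally depend on $\v_j$. The crucial observation that unlocks the proof is the coupling with the modified oracle that never returns $\v_j$ in Phase $j$: non-transitioning responses are identical under the two oracles, so the entire Phase-$j$ query trajectory up through the transitioning step is in fact a deterministic function of randomness independent of $\v_j$, after which the concentration and union bound become routine.
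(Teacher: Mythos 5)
Your argument for Condition~1 is essentially identical to the paper's: the paper also works with the pre-determined sequence of Phase-$j$ queries (denoted $T_j$), observing that as long as the Phase has not ended, the oracle's function value and subgradient agree with those of $F^{(j-1)}_{\nem,\A}$, so the entire Phase-$j$ query trajectory is a deterministic function of randomness independent of $\v_j$; then it union-bounds over the $\le Nd^2$ candidate queries. Your "coupling with a modified oracle" is the same idea in different dress.

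Your treatment of Condition~2 has a genuine gap. First, the "natural interpretation" is wrong: the subspace $S_j$ in Definition~\ref{def:memorysensitive} (and hence in Definition~\ref{defs:e1ande2}) is $\spcommand\bigl(\{\x_{t_i}: \max(1,j-k)\le i\le j\}\bigr)$, which \emph{includes} $\x_{t_j}$ --- and $\x_{t_j}$ is issued during Phase~$j$, after $\v_j$ is sampled, so $S_j$ is not independent of $\v_j$. Second, your alternative decomposition argument is circular: to lower-bound $\|\tilde{\x}_{t_j}\|_2$ you invoke the robust-linear-independence bound $\|\proj_{S_{j-1}}(\x_{t_j})\|_2/\|\x_{t_j}\|_2\le 1-1/d^2$, but that bound is Lemma~\ref{lem:highrank}, whose proof requires Event~$E$ --- i.e., the very conclusion you are trying to establish here. (Even if you attempted to break the circularity by induction on $j$, you would then be conditioning on $A_j^{(1)}$ and $A_{j-1}^{(2)}$, which changes the distribution of $\v_j$ and invalidates the naive sub-Gaussian concentration.) The paper avoids all of this by reusing the same union bound over the pre-determined set $T_j$: for each candidate $\x\in T_j$, the subspace $S_{j-1,\x}\defeq \spcommand\bigl(\{\x_{t_i}: j-k\le i<j, i>0\},\x\bigr)$ is fixed independently of $\v_j$, so Corollary~\ref{cor:explicitprojectionbound} applies directly; since $\x_{t_j}\in T_j$ and $S_j=S_{j-1,\x_{t_j}}$, the union bound over $T_j$ covers the required event. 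A salvageable version of your decomposition is available --- you do not actually need a lower bound on $\|\tilde{\x}\|_2$, since $\hat{\x}_\x\defeq(\x-\proj_{S_{j-1}}(\x))/\|\x-\proj_{S_{j-1}}(\x)\|_2$ is a fixed unit vector for each $\x\in T_j$, and $|\hat{\x}_\x^\top\v_j|$ concentrates directly --- but as written the reliance on Lemma~\ref{lem:highrank} is a real flaw.
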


\begin{proof}
We prove for any fixed $j \in [N]$, the failure probability of event $E$ in Phase $j$ is at most $1/d^2$ and then apply a union bound over all $j \in [N]$ (where $N \leq d$). 
Define $T_j$ to be the sequence of the next $d^2$ vectors \playera\ would query in the $\nth{j}$ Phase, where the next query is made if the previous query does not get the new Nemirovski vector $\v_{j}$ as the response (i.e., the previous query does not end the $\nth{j}$ Phase). Since we can think of \playera's strategy as deterministic conditioned on its memory state and some random string $R$ (as in Definition \ref{def:memoryalgorithmrand}), the sequence $T_j$ is fully determined by the function $F_{\nem, \A}^{(j-1)}$, $R$ and the state of \playera\ when it receives $\v_{j-1}$ as the response from \playerb\ at the end of the $\nth{(j-1)}$ Phase. Critically, this sequence in independent of $\v_j$. 
Separately, \playerb\ samples $\v_j \simiid \left( (1/\sqrt{d}) \mathsf{Unif} \left( \mathcal{H}_d \right) \right)$, and therefore $\v_j$ is sampled independently of all these $d^2$ queries in the set $T_j$.

The proof now follows via simple concentration bounds. First consider property (1) of event $E$. By the argument above we see that for any any vector $\x$ which is in the set $T_j$ or was submitted by \playera\ in any of the previous $(j-1)$ Phases, by Corollary \ref{cor:xtopvgaussian}, $\x^{\top} \v_j$ is a zero-mean random variable with sub-Gaussian parameter at most $\norm{\x}_2/\sqrt{d} \leq 1/\sqrt{d}$. Then by Fact \ref{fact:subgauss_conc}, for any $t \geq 0$,
\begin{equation}
 \mathbb{P} \left(\abs{\x^{\top} \v_j} \geq t \right) \leq 2 \exp(-d t^2/2).
\end{equation}
We remark that to invoke Corollary \ref{cor:xtopvgaussian}, we do not condition on whether or not vector $\x$ will ultimately be queried by \playera\ (this is critical to maintain the independence of $\x$ and $\v_j$). Picking $t = \sqrt{10 \log(d)/d}$ we have with failure probability at most $2/d^5$,
\begin{equation}
    \abs{\x^{\top} \v_{j}} \leq \sqrt{\frac{10\log(d)}{d}}.
\end{equation}
Since \playera\ makes at most $d^2$ queries in any Phase, there are at most $Nd^2$ vectors in the union of the set $T_j$ of possible queries which \playera\ makes in the $\nth{j}$ Phase, and the set of queries submitted by \playera\ in any of the previous Phases. Therefore by a union bound, property (1) of event $E$ is satisfied by $\v_{j}$ with failure probability at most $2Nd^2/d^5\le 2/d^2$, where we used the fact that $\nnem \leq d$.

We now turn to property (2) of event $E$. Note that $S_{\roundindex}$ depends on $\x_{t_{j}}$, which is the first query for which \playerb\ returns $\v_j$. However, using the same idea as above, we consider the set $T_j$ of the next $d^2$ queries which \playera\ would make, and then do a union bound over every such query. For any query $\x \in T_j$, consider the subspace
\begin{equation}
    S_{j-1,\x} \defeq \spcommand \left(\{\x_{t_i}: j-\roundlen \le i < j, i>0 \}, \x \right).
\end{equation}
Note that $S_j = S_{j-1, \x_{t_j}}$. Recalling the previous argument, $\v_j$ is independent of all vectors in the above set  $S_{j-1,\x}$ . Therefore by Corollary \ref{cor:explicitprojectionbound} there is an absolute constant $c$ such that with failure probability at most $c/d^5$, 
\begin{equation}
     \norm{\proj_{S_{j-1,\x}}(\v_j)}_2 \leq \sqrt{ \frac{30 k \log(d)}{d}}.
\end{equation}

Under the assumption that \playera\ makes at most $d^2$ queries in the $\nth{j}$ Phase, the successful query $\x_{t_{j}}$ must be in the set $T_j$. Therefore, by a union bound over the $d^2$ queries which \playera\ submits,
\begin{equation}
\label{eqn:projectionsmall}
    \norm{ \proj_{S_{\roundindex}} \left( \v_{j} \right) }_2 \leq \sqrt{\frac{30 \roundlen \log(d) }{d}},
\end{equation}
with failure probability at most $c/d^{3} \leq 1/d^2$ (for $d$ large enough). Therefore, both properties (1) and (2) are satisfied for vector $\v_j$ with failure probability at most $1/d^{2}$ and thus by a union bound over all $\nnem$ Nemirovski vectors $\v_j$, $E$ happens with failure probability at most $1/d$. 
\end{proof}

Now that we have justified that event $E$ has small failure probability we will argue that conditioned on $E$, \playerb\ returns  consistent first-order information. This is critical to our analysis because we prove memory-sensitive properties through the lens of \playerb. 

\begin{lemma}
 \label{lem:gradient_consistent}
 Under event $E$ defined in Definition~\ref{defs:e1ande2}, \playerb's responses are consistent with itself, i.e., for any query made by \playera, \playerb\ returns an identical first-order oracle response to the final oracle $(F^{(N)}_{\nem, \A}(\x), \g^{(N)}_F(\x))$.
\end{lemma}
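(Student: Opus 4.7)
The plan is to fix an arbitrary query $\x$ submitted by \playera\ during some Phase $j'\ge 1$ and show that the response \playerb\ gives, namely $(F^{(j')}_{\nem,\A}(\x),\g^{(j')}_F(\x))$, coincides with $(F^{(N)}_{\nem,\A}(\x),\g^{(N)}_F(\x))$. Since $F^{(N)}$ and $F^{(j')}$ differ only in whether the Nemirovski vectors $\v_{j'+1},\dots,\v_N$ (drawn by \playerb\ strictly after $\x$ was queried) contribute to the max, it suffices to prove that for every $i$ with $j' < i \le N$,
\begin{equation*}
\v_i^\top \x - i\gamma \;\le\; F^{(j')}_{\nem,\A}(\x).
\end{equation*}

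To establish this, I would use the trivial lower bound $F^{(j')}_{\nem,\A}(\x)\ge \v_{j'}^\top\x - j'\gamma$ (available because $j'\ge 1$) and then compare the future Nemirovski value to $\v_{j'}^\top\x - j'\gamma$. Since $\x$ was submitted in Phase $j'$ and both $j'$ and $i\ge j'$ satisfy the hypothesis of property (1) of event $E$ (Definition \ref{defs:e1ande2}), both $|\v_{j'}^\top\x|$ and $|\v_i^\top\x|$ are bounded by $\sqrt{10\log(d)/d}$. Hence $(\v_i-\v_{j'})^\top\x \le 2\sqrt{10\log(d)/d}$, and a direct check using $\gamma=\sqrt{400k\log(d)/d}$ with $k\ge 1$ gives $(i-j')\gamma\ge \gamma \ge 2\sqrt{10\log(d)/d}$. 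Rearranging yields $\v_i^\top\x - i\gamma \le \v_{j'}^\top\x - j'\gamma \le F^{(j')}_{\nem,\A}(\x)$, proving $F^{(j')}_{\nem,\A}(\x)=F^{(N)}_{\nem,\A}(\x)$.

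To finish, I would verify agreement of the subgradients in the two natural cases of Definition \ref{def:subgrad_F}. If $\eta\|\A\x\|_\infty-\rho\ge f^{(j')}(\x)$, then \playerb\ returns $\g_\A(\x)$; by the preceding paragraph the Nemirovski max is unchanged when we pass from $j'$ to $N$, so the same inequality continues to hold for $f^{(N)}(\x)$ and $\g^{(N)}_F(\x)=\g_\A(\x)$ as well. Otherwise the response is some $\v_{k_{\min}^{j'}}$ with $k_{\min}^{j'}\le j'$; because $f^{(j')}(\x)=f^{(N)}(\x)$, this index $k_{\min}^{j'}$ still achieves the max over $[N]$, and a short argument using the minimality in both definitions shows that the smallest index achieving the max over $[N]$ must equal $k_{\min}^{j'}$. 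Thus the subgradients also agree and \playerb\ is self-consistent.

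The main (very mild) obstacle is just the bookkeeping: one must be careful that property (1) of event $E$ applies to both $\v_{j'}$ and every future $\v_i$, which is why the property is phrased as a simultaneous statement over all queries in Phases $1$ through $j$. Everything else reduces to the direct comparison of $(\v_i-\v_{j'})^\top\x$ with $(i-j')\gamma$, which is where the specific choice $\gamma=\sqrt{400k\log(d)/d}$ enters.
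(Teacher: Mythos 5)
Your proposal is correct and follows essentially the same approach as the paper's proof: both use property (1) of event $E$ to bound $|\x^\top\v_{j'}|$ and $|\x^\top\v_i|$ by $\sqrt{10\log(d)/d}$ for the current Phase index $j'$ and any future index $i$, then compare $(\v_i-\v_{j'})^\top\x$ against $(i-j')\gamma \ge \gamma$ to conclude that no future Nemirovski term can enter the max. Your treatment of the subgradient agreement is a bit more explicit than the paper's (the paper simply asserts that the subgradients agree once the dominating term is shown unchanged, while you spell out the two cases of Definition \ref{def:subgrad_F} and the minimal-index argument), but this is the same underlying reasoning.
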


\begin{proof}
Assume event $E$ and fix any $j \in [\nnem-1]$. Let $\x$ be any query \playera\ submits sometime before the end of the $\nth{j}$ Phase. Under event $E$ for any $j' > j$, 
$$|\x^\top \v_j| \le \sqrt{\frac{10\log(d)}{d}}, \quad |\x^\top \v_{j'}| \le \sqrt{\frac{10\log(d)}{d}}.$$
Therefore, since $\gamma=\sqrt{c\roundlen\log(d)/d}$, for any $c>40$ and any $j'>j$,
\begin{equation}
\x^\top (\v_{j'}-\v_j) <  \gamma.
\end{equation}
In particular this implies that for $j' > j$
$$ \x^\top \v_j - j\gamma  >  \x^\top \v_{j'} - j'\gamma.$$
Therefore for all $j' > j$, $\v_{j'}$ is not the subgradient $\g^{(\nnem)}_F(\x)$. This implies that in the $\nth{j}$ Phase, \playerb\ always responds with the correct function value $F^{(j)}_{\nem, \A}(\x) = F^{(N)}_{\nem, \A}(\x)$ and subgradient $\g^{(j)}_F(\x) = \g^{(\nnem)}_F(\x)$. 
\end{proof}

Now we address the memory-senstive properties of the function class by analyzing \playerb. Here, event $E$ is crucial since it ensures that the Nemirovski vectors lie outside the span of previous queries which returned a unique gradient.

\begin{remark}
Recall from Definition \ref{def:unique_grad} that $\x_{t_{j}}$ is the query which returns the $\nth{j}$ informative subgradient from the function $f$. In terms of the Nemirovski function and \playerb, we note that  $\x_{t_{j}}$ is the query which ends Phase $j$ of \playerb\ (the first query such that \playerb\ returns  $\v_j$ as the gradient in response).
\end{remark}

\begin{lemma}\label{lem:highrank}
Recall the definition of $\x_{t_j}$ and $S_{\roundindex}$ from  \cref{def:unique_grad,def:memorysensitive} and that $\gamma=\sqrt{\frac{c \roundlen \log(d) }{d}}$. If $c \geq 400$, then under event $E$ from Definition~\ref{defs:e1ande2},
\begin{equation}
    \frac{\norm{\proj_{S_{j-1}}(\x_{t_j} )}_2}{\norm{\x_{t_j} }_2 }\leq 1 - (1/d^{\projpower}).
\end{equation}
\end{lemma}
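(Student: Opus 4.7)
My plan is to exploit the fact that $\x_{t_j}$ is the first query of Phase $j$ whose subgradient under the Nemirovski oracle is $\v_j$. By the oracle definition in Eq.~\ref{eq:nemi_oracle}, this forces the strict inequality $\v_j^{\top} \x_{t_j} - j\gamma > \v_{j-1}^{\top} \x_{t_j} - (j-1)\gamma$, i.e.\ $(\v_j - \v_{j-1})^{\top}\x_{t_j} > \gamma$. This inequality is the only place where we extract nontrivial information from the fact that $\x_{t_j}$ ``discovers'' a new Nemirovski vector, and the rest of the argument consists of bounding each side using event~$E$ from Definition~\ref{defs:e1ande2}.

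Concretely, I decompose $\x_{t_j} = \x^{\parallel} + \x^{\perp}$ where $\x^{\parallel} \defeq \proj_{S_{j-1}}(\x_{t_j})$, and upper-bound each of the three terms in $(\v_j - \v_{j-1})^{\top}\x_{t_j} = \v_j^{\top}\x_{t_j} - \v_{j-1}^{\top}\x^{\parallel} - \v_{j-1}^{\top}\x^{\perp}$. Since $\x_{t_j} \in S_j$, self-adjointness of orthogonal projection gives $|\v_j^{\top}\x_{t_j}| = |(\proj_{S_j}(\v_j))^{\top}\x_{t_j}| \le \sqrt{30 k\log(d)/d}\,\|\x_{t_j}\|_2$ by property~(2) of event~$E$ at index $j$. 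Since $\x^{\parallel} \in S_{j-1}$, the same trick with property~(2) at index $j-1$ gives $|\v_{j-1}^{\top}\x^{\parallel}| \le \sqrt{30k\log(d)/d}\,\|\x^{\parallel}\|_2 \le \sqrt{30k\log(d)/d}\,\|\x_{t_j}\|_2$. Finally, Cauchy--Schwarz and $\|\v_{j-1}\|_2 = 1$ give $|\v_{j-1}^{\top}\x^{\perp}| \le \|\x^{\perp}\|_2$.

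Plugging these bounds into the key inequality and using $\|\x_{t_j}\|_2 \le 1$ gives $\gamma < 2\sqrt{30 k\log(d)/d} + \|\x^{\perp}\|_2$. With $\gamma = \sqrt{400 k\log(d)/d}$, the slack $\sqrt{400} - 2\sqrt{30} > 9$ yields $\|\x^{\perp}\|_2 > 9\sqrt{k\log(d)/d} \ge 9/\sqrt{d}$, so $(\|\x^{\perp}\|_2/\|\x_{t_j}\|_2)^2 \ge 2/d^2$ for $d$ sufficiently large and therefore $\|\x^{\parallel}\|_2/\|\x_{t_j}\|_2 \le \sqrt{1 - 2/d^2} \le 1 - 1/d^2$, which is the claim. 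The main delicacy is the bookkeeping of subspaces: property~(2) of event~$E$ is formulated so that index $j$ controls projection onto the ``post-query'' span $S_j$ (which already contains $\x_{t_j}$) and index $j-1$ controls projection onto exactly the target span $S_{j-1}$, so both upper bounds drop out of the same event. Without this precise match of subspaces one would be forced to control $\|\proj_{S_{j-1}}(\v_j)\|_2$ directly, which is not guaranteed by $E$ and would require more work.
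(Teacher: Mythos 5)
Your proof is correct and follows essentially the same route as the paper: both start from the key inequality $(\v_j - \v_{j-1})^{\top}\x_{t_j} \ge \gamma$, decompose $\x_{t_j}$ into its $S_{j-1}$-parallel and $S_{j-1}^\perp$ components, bound each of the three resulting terms, and extract the robust linear-independence bound from the slack $\sqrt{400} - 2\sqrt{30} > 0$. The only (benign) deviation is in bounding $|\v_j^{\top}\x_{t_j}|$: you use property (2) of event $E$ at index $j$ via self-adjointness of $\proj_{S_j}$ (giving $\sqrt{30k\log d/d}\,\|\x_{t_j}\|_2$), whereas the paper invokes property (1) of $E$ directly (giving $\sqrt{10\log d/d}$); either suffices given the constant $c = 400$.
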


\begin{proof}
We prove Lemma~\ref{lem:highrank} by providing both an upper and lower bound on $ \x_{t_{j}}^{\top}(\v_{j}-\v_{j-1}) $ under event $E$. Towards this end, observe that for any $\x \in S_{j-1}$,
\begin{equation}
    \label{eqn:innerprod}
    \abs{\x^{\top} \v_{j-1}} \le \norm{\x}_2 \norm{\proj_{S_{\roundindex-1}}\left(\v_{j-1}  \right)}_2.
\end{equation}
Therefore, recalling that every query $\x$ satisfies $\norm{\x}_2\le 1$, we see that under $E$,
\begin{equation}
    \abs{\proj_{S_{j-1}}(\x_{t_j})^{\top}\v_{j-1}} \leq  \sqrt{\frac{30 \roundlen \log(d)}{d}}.
\end{equation}
Next, writing $\x_{t_j}$ as  $\x_{t_j} = \proj_{S_{j-1}}(\x_{t_j}) + \proj_{S_{j-1}^{\perp}}(\x_{t_j})$ and recalling that $\norm{\x_{t_j}}_2 \leq 1$ we observe that
  \begin{equation}
      \norm{\proj_{S_{j-1}^{\perp}}(\x_{t_j})}_2^2 = \norm{\x_{t_j}}_2^2 - \norm{\proj_{S_{j-1}}(\x_{t_j})}_2^2 \leq  1 - \norm{\proj_{S_{j-1}}(\x_{t_j})}_2^2/\norm{\x_{t_j}}_2^2.
 \end{equation}
 Recalling that $\norm{\v_j}_2 = 1$,
\begin{equation}
    \abs{\proj_{S_{j-1}^{\perp}}(\x_{t_j})^{\top} \v_{j-1}} \leq \norm{\proj_{S_{j-1}^{\perp}}(\x_{t_j})}_2 \norm{\v_{j-1}}_2 \leq  \left( 1 - \norm{\proj_{S_{j-1}}(\x_{t_j})}_2^2/\norm{\x_{t_j}}_2^2\right)^{1/2}.
\end{equation}
We also note that under event $E$,
 $$|\x_{t_j}^\top \v_j| \le \sqrt{\frac{10\log(d)}{d}}.$$
Therefore, 
\begin{align}
    \x_{t_{j}}^{\top}(\v_{j}-\v_{j-1}) & \le  \abs{\x_{t_{j}}^{\top}\v_{j}} + \abs{ \proj_{S_{j-1}}(\x_{t_j})^{\top}\v_{j-1}} + \abs{\proj_{S_{j-1}^{\perp}}(\x_{t_j})^{\top} \v_{j-1}}  \\
    & \leq \sqrt{\frac{10\log(d)}{d}} +  \sqrt{\frac{30 \roundlen \log(d) }{d}} +  \sqrt{1 - \norm{\proj_{S_{j-1}}(\x_{t_j})}_2^2/\norm{\x_{t_j}}_2^2} \\
    & \le \sqrt{\frac{80 \roundlen \log(d) }{d}} + \sqrt{1 - \norm{\proj_{S_{j-1}}(\x_{t_j})}_2^2/\norm{\x_{t_j}}_2^2}.
\end{align}

Note that $\v_j$ is an informative gradient and therefore $\v_j \in \partial \nem(\x_{t_j})$. Thus
\begin{align}
     \x_{t_{j}}^{\top}\v_{j-1}- (j-1) \gamma &\leq \x_{t_j}^{\top}\v_{j}  - j \gamma, \qquad \textrm{ or equivalently} \qquad \x_{t_j}^{\top}\left( \v_j - \v_{j-1} \right) \geq \gamma.
\end{align}
Therefore,
\begin{align}
    \sqrt{\frac{80 \roundlen \log d }{d}} + \sqrt{1 - \norm{\proj_{S_{j-1}}(\x_{t_j})}_2^2/\norm{\x_{t_j}}_2^2} &\geq \gamma\\
    \implies \sqrt{1 - \norm{\proj_{S_{j-1}}(\x_{t_j})}_2^2/\norm{\x_{t_j}}_2^2} &\ge \gamma-\sqrt{\frac{80 \roundlen \log d }{d}} ,\\
    \implies {1 - \norm{\proj_{S_{j-1}}(\x_{t_j})}_2^2/\norm{\x_{t_j}}_2^2} &\ge \left(\gamma-\sqrt{\frac{80 \roundlen \log d }{d}}\right)^2 ,
\end{align}
where the last step uses the fact that $\gamma=\sqrt{\frac{c \roundlen \log(d) }{d}}$ for $c\geq 400$.
Rearranging terms and using that $(1 - x)^{1/2} \leq 1 - (1/4)x$ for $x \geq 0$ gives us
\begin{equation}
    \norm{\proj_{S_{j-1}}(\x_{t_j})}_2/\norm{\x_{t_j}}_2  \leq \sqrt{1 -  \left( \gamma - \sqrt{\frac{80 \roundlen \log d }{d}} \right)^2} \leq 1 - \frac{1}{4} \left( \gamma - \sqrt{\frac{80 \roundlen \log d }{d}} \right)^2.
\end{equation}
Plugging in $\gamma=\sqrt{\frac{c \roundlen \log(d) }{d}}$, for $c\geq 400$ gives us the claimed result for $d$ large enough,
\begin{equation}
     \norm{\proj_{S_{j-1}}(\x_{t_j})}_2/\norm{\x_{t_j}}_2  \leq 1 - \frac{1}{4} \frac{\roundlen \log d}{d} \left( \sqrt{400} - \sqrt{80} \right)^2 \leq 1 - \frac{1}{d^2}.
\end{equation}
\end{proof}
We next turn our attention to the optimality achievable by an algorithm which has seen a limited number of Nemirovski vectors. Lemma \ref{lem:subopt} lower bounds the best function value achievable by such an algorithm.

\begin{lemma}\label{lem:subopt}
Recall that $\gamma=\sqrt{\frac{c \roundlen \log(d) }{d}}$ and suppose $c > 10$. Assume \playera\ has proceeded to Phase $r$ of \playerb\ and chooses to not make any more queries. Let $Q$ be the set of all queries made so far by \playera. Then under event $E$ defined in 
Definition \ref{defs:e1ande2},
$$\min_{ \x \in Q} F_{\nem, \A}^{(N)}(\x) \ge -(r+1)\gamma.$$
\end{lemma}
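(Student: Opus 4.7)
The plan is to exhibit a single index $i^\star$ in the $\max$ defining $F_{\nem,\A}^{(N)}(\x)$ whose value is at least $-(r+1)\gamma$ for every query $\x \in Q$, using property (1) of event $E$. The natural choice is $i^\star = r$: every query $\x \in Q$ was, by assumption, submitted while \playera\ was in some Phase $s \in [1, r]$, so it falls in the set of queries "submitted in Phase 1 through $r$" to which property (1) of event $E$ applies for the Nemirovski vector $\v_r$. (If $r = 0$ then $Q = \emptyset$ and the claim is vacuous; otherwise $1 \le r \le N$ ensures $\v_r$ is a legitimate term in the max.)

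First, I would note the trivial lower bound from the definition of the Nemirovski-induced function (Definition~\ref{def:nemtoj}):
\[
F_{\nem,\A}^{(N)}(\x) \;\ge\; \v_r^\top \x - r\gamma,
\]
since $r \in [N]$. Next, invoking event $E$ with $j = r$ gives $|\v_r^\top \x| \le \sqrt{10 \log(d)/d}$ for every $\x \in Q$. Combining these two inequalities yields
\[
F_{\nem,\A}^{(N)}(\x) \;\ge\; -\sqrt{\tfrac{10 \log d}{d}} - r\gamma.
\]

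The final step is to compare $\sqrt{10\log d/d}$ with $\gamma$. Since $\gamma = \sqrt{c \roundlen \log(d)/d}$ with $c > 10$ and $\roundlen \ge 1$, we have $\sqrt{10 \log d/d} \le \gamma$, so
\[
F_{\nem,\A}^{(N)}(\x) \;\ge\; -\gamma - r\gamma \;=\; -(r+1)\gamma,
\]
which gives the result after taking the minimum over $\x \in Q$.

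I do not anticipate a real obstacle here; the argument is essentially a one-line application of event $E$ to the specific Nemirovski vector $\v_r$. The only subtlety to double-check is bookkeeping around the phase count, i.e., that property (1) of $E$ in Definition~\ref{defs:e1ande2} indeed applies to every query $\x \in Q$ with $j = r$ (which it does, since all queries made up to \playera\ having proceeded to Phase $r$ were submitted in Phases $1$ through $r$), and that the slack $\sqrt{10\log d/d}$ is dominated by a single factor of $\gamma$ (which uses only $c > 10$, strictly weaker than the $c \ge 400$ needed in Lemma~\ref{lem:highrank}).
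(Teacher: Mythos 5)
Your proof is correct and takes essentially the same route as the paper's: lower bound $F^{(N)}_{\nem,\A}(\x)$ by the single term $\v_r^\top\x - r\gamma$ in the max, invoke property~(1) of event $E$ with $j=r$ (which applies to all of $Q$ since every query was made in Phases $1$ through $r$), and absorb the $\sqrt{10\log(d)/d}$ slack into one extra factor of $\gamma$. The paper additionally passes through the intermediate bound $F^{(N)}_{\nem,\A}(\x)\ge F^{(r)}_{\nem,\A}(\x)$ and remarks on Lemma~\ref{lem:gradient_consistent}, but those observations are not load-bearing for the inequality chain, so your more direct version is fine.
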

\begin{proof}
We first note that 
 $$\min_{ \x \in Q} F_{\nem, \A}^{(N)}(\x) \ge \min_{ \x \in Q} F_{\nem, \A}^{(r)}(\x).$$
Now under event $E$, all queries $\x \in Q$ made by \playera\ satisfy,
 $$|\x^\top \v_r| \le \sqrt{\frac{10\log(d)}{d}}.$$
Assuming event $E$, by Lemma \ref{lem:gradient_consistent} if \playerb\ responds with some Nemirovski vector as the subgradient, since this Nemirovski vector must be some $\v_i$ for $i < r$ and since the returned subgradients are valid subgradients of the final function $F_{\nem, \A}^{(\nnem)}$, the Nemirovski vector $\v_r$ could not have been a valid subgradient of the query. Therefore, all queries $\x\in Q$ must satisfy,
 \begin{align}
     F_{\nem, \A}^{(r)}(\x) \ge \x^\top \v_r -r\gamma \ge -\sqrt{\frac{10\log(d)}{d}} -r\gamma\ge -(r+1)\gamma 
 \end{align}
 where in the last inequality we use the fact that $\gamma\ge \sqrt{\frac{10\log(d)}{d}}$.
\end{proof}
Lemma \ref{lemma:optimalvalue} upper bounds the minimum value of the final function $F_{\nem, \A}^{(N)} (\x)$, and will be used to establish an optimality gap later.
\begin{lemma}
\label{lemma:optimalvalue}
Recall that $\gamma=\sqrt{\frac{c \roundlen \log(d) }{d}}$ and suppose $c > 3$. For any $\A\in \R^{n\times d}$ where $n\le d/2$ and sufficiently large $d$, with failure probability at most $2/d$ over the randomness of the Nemirovski vectors $\{\v_j, j \in [N]\}$,
\begin{equation}
\label{eq:opt}
    \min_{\norm{\x}_2 \le 1} F_{\nem, \A}^{(N)} (\x) \leq  - \frac{1}{8 \sqrt{\nnem}}. 
\end{equation}
\end{lemma}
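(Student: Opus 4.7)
The strategy is to exhibit an explicit witness $\x^* \in \unitball$ achieving the claimed bound. Since $\eta = d^5$ and $\rho = 1$, any $\x^*$ with $\A\x^* = \vec{0}$ satisfies $\eta\|\A\x^*\|_\infty - \rho = -1 \le -1/(8\sqrt{N})$, and since each $i\gamma \ge 0$ we have $\max_{i \in [N]}(\v_i^\top \x^* - i\gamma) \le \max_{i} \v_i^\top \x^*$. It therefore suffices to construct $\x^* \in \mathrm{null}(\A)$ with $\|\x^*\|_2 = 1$ such that $\max_i \v_i^\top \x^* \le -1/(8\sqrt{N})$ with probability at least $1 - 2/d$ over the randomness in the $\v_j$'s.

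Let $P$ denote the orthogonal projector onto $\mathrm{null}(\A)$, which has rank at least $d - n \ge d/2$. Setting $\vec{u} := \sum_{j=1}^N \v_j$ and $\x^* := -P\vec{u}/\|P\vec{u}\|_2$ (well-defined on the good event below) yields $\|\x^*\|_2 = 1$, $\A\x^* = \vec{0}$, and, using $P = P^2 = P^\top$,
\[
-\v_i^\top \x^* \;=\; \frac{\|P\v_i\|_2^2 + \v_i^\top P \vec{u}_{-i}}{\|P\vec{u}\|_2},
\qquad
\vec{u}_{-i} := \sum_{j \ne i} \v_j.
\]
The plan is to bound the numerator from below by $1/4$ and the denominator from above by $\sqrt{2N}$ on a common high-probability event, so that $-\v_i^\top \x^* \ge 1/(4\sqrt{2N}) \ge 1/(8\sqrt{N})$ uniformly in $i$.

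Three concentration estimates are needed. First, $\mathbb{E}[\|P\v_i\|_2^2] = \mathrm{rank}(P)/d \ge 1/2$, and since $\v_i$ has independent sub-Gaussian $\pm 1/\sqrt{d}$ coordinates, Lemma \ref{lemma:subgauss_proj} gives $\|P\v_i\|_2^2 \in [3/8, 5/8]$ with failure probability $e^{-\Omega(d)}$ per $i$. Second, conditional on $\v_i$, the cross term $\v_i^\top P \vec{u}_{-i}$ is mean-zero sub-Gaussian in the $(N-1)d$ independent $\pm 1/\sqrt{d}$ entries of $\vec{u}_{-i}$ with parameter $\|P\v_i\|_2\sqrt{(N-1)/d} = O(\sqrt{N/d})$, so $|\v_i^\top P \vec{u}_{-i}| \le \sqrt{(N\log d)/d} = o(1)$ with failure probability $d^{-\Omega(1)}$ (using $N \le d^{1/3}$ from the parameter setting in Theorem \ref{thm:explicit}). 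Third, expanding $\|P\vec{u}\|_2^2 = \sum_i \|P\v_i\|_2^2 + \sum_{i \ne j} \v_i^\top P\v_j$, the diagonal piece is at most $N(1+o(1))$ by the first estimate, while the off-diagonal double sum is a mean-zero degree-two polynomial in independent Rademachers with variance at most $N(N-1)\,\mathrm{rank}(P)/d^2 \le N^2/d = o(N)$; a Hanson-Wright-type tail bound then gives $\sum_{i \ne j}\v_i^\top P\v_j = o(N)$ and hence $\|P\vec{u}\|_2^2 \le 2N$. A union bound over $i \in [N]$ (with $N \le d$) yields total failure probability at most $2/d$, and on the resulting event the stated inequality holds. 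The main technical obstacle is the tail bound for the off-diagonal quadratic form $\sum_{i \ne j}\v_i^\top P\v_j$; I expect to either invoke a Hanson-Wright inequality directly or to decouple and iteratively condition on $\v_i$ to reduce to the same scalar sub-Gaussian estimate used above for the cross term.
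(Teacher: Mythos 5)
Your plan is essentially the paper's: both take the projection of $-\sum_j \v_j$ onto the null space of $\A$ as the explicit witness, split each $\v_i^\top \x^*$ into a self-term $\|P\v_i\|_2^2$ and a cross-term, lower bound the self-term via Lemma~\ref{lemma:subgauss_proj}, and control the cross-term with a scalar subgaussian tail bound. The main divergence is normalization. The paper fixes the scale up front, $\xopt = -\tfrac{1}{2\sqrt{N}}\Z\Z^\top\sum_j\v_j$, and shows $\|\xopt\|_2\le 1$ by applying Lemma~\ref{lemma:subgauss_proj} once to the aggregated vector $\bar{\v}=-\tfrac{1}{2\sqrt{N}}\sum_j\v_j$, whose coordinates are i.i.d.\ subgaussian with covariance $(1/4d)\id_d$; this gives $\|\Z^\top\bar\v\|_2^2\approx(d-r)/(4d)\le 1/4$ with failure probability $e^{-\Omega(d)}$. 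You normalize to exactly unit norm, which forces the separate denominator bound $\|P\vec u\|_2\le\sqrt{2N}$. The Hanson--Wright expansion you flag as the main obstacle is unnecessary here: apply Lemma~\ref{lemma:subgauss_proj} directly to $\vec u/\sqrt N$ (again i.i.d.\ subgaussian coordinates, covariance $(1/d)\id_d$) to get $\|P\vec u\|_2^2\lesssim N\cdot\mathrm{rank}(P)/d\le N/2$ in one step, exactly as the paper does for $\|\xopt\|_2$. One further small difference: your numerator bound requires the cross-term error $\sqrt{N\log d/d}$ to be $o(1)$, hence $N\lesssim d/\log d$ (which holds in the application via $N\lesssim d^{1/3}$); the paper instead absorbs the error $\sqrt{C\log d/d}$ into the shift $-i\gamma\le-\gamma<-\sqrt{2\log d/d}$, which is precisely where the hypothesis $c>2$ in the lemma statement gets used, and which makes the bound hold with no explicit constraint on $N$. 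Both routes succeed; the paper's is slightly tighter and avoids the quadratic-form tail estimate entirely.
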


\begin{proof}
Let the rank of $\A$ be $r$. Let $\Z \in \R^{d \times (d-r)}$ be an orthonormal matrix whose columns are an orthonormal basis for the null space $A^{\perp}$ of $\A$. We construct a vector $\xopt$ which (as we will show) attains $F_{\nem, \A}(\xopt) \leq -1/8\sqrt{\nnem}$ as follows, 
\begin{equation}
    \xopt = \frac{-1}{2 \sqrt{\nnem }} \sum_{i \in [\nnem]} \pap{i} = \frac{-1}{2 \sqrt{\nnem }} \sum_{i \in [\nnem]} \Z\Z^{\top} \v_i =  \Z\Z^{\top} \left(\frac{-1}{2 \sqrt{\nnem }}\sum_{i \in [\nnem]} \v_i \right).
\end{equation}
Our proof proceeds by providing an upper bound for $F_{\nem, \A}(\xopt)$. First we bound $\norm{\xopt}_2^2$. Let $\bar{\v} =  \left(\frac{-1}{2 \sqrt{\nnem }}\sum_{i \in [\nnem]} \v_i \right)$. Note that
\begin{equation}
   \norm{\xopt}_2^2 =  \norm{\Z^{\top} \bar{\v}}_2^2.
\end{equation}
By Fact \ref{fact:sum+subgauss}, each coordinate of $\bar{\v}$ is sub-Gaussian with sub-Gaussian norm $\subgaussconst/\sqrt{d}$ (where $\subgaussconst$ is the absolute constant  in Fact \ref{fact:sum+subgauss}). Also $\E[\bar{\v} \bar{\v}^{\top}]=(1/4d)\id_d$. Therefore by Lemma \ref{lemma:subgauss_proj}, with failure probability at most $2 \exp(-c_1d) $ for some absolute constant $c_1>0$,
\begin{equation}
    \label{eqn:normbound1}
      \norm{\xopt}_2^2 \leq \frac{d-r}{4d}+ \frac{1}{2} \le 1.
\end{equation}
Therefore $\xopt$ lies in the unit ball with failure probability at most $2 \exp(-c_1d) $.
Next fix any $\v_j \in \left \{ \v_i \right \}_{i \in [\nnem]} $ and consider
\begin{align}
    \v_j^{\top} \xopt & = \left( \v_j^{\top}  \left(\Z\Z^{\top}\left(\frac{-1}{2 \sqrt{\nnem}} \sum_{i \in [\nnem], i \neq j}\v_i \right)\right) \right) - \frac{1}{2 \sqrt{\nnem}} \v_j^{\top} \Z\Z^{\top} \v_j \\
    \label{eqn:vjxbound}
    & =  \left((\Z\Z^{\top} \v_j)^{\top}  \left(\frac{-1}{2 \sqrt{\nnem}} \sum_{i \in [\nnem], i \neq j}\v_i \right) \right) - \frac{1}{2 \sqrt{\nnem}} \norm{\Z^{\top}\v_j}_2^2.
\end{align}
 Let $\bar{\v}_{-j} =  \left(\frac{-1}{2 \sqrt{\nnem }}\sum_{i \in [\nnem], i \neq j} \v_i \right)$. By Fact \ref{fact:sum+subgauss}, each coordinate of $\bar{\v}_{-j}$ is sub-Gaussian with sub-Gaussian parameter $1/\sqrt{4d}$. Also by  Fact \ref{fact:sum+subgauss}, $(\Z\Z^{\top} \v_j)^{\top}\bar{\v}_{-j} $ is sub-Gaussian with sub-Gaussian parameter $\norm{\Z\Z^{\top} \v_j}_2/\sqrt{4d}=\norm{\Z^{\top} \v_j}_2/\sqrt{4d}$. Now using \cref{fact:subgauss_conc}, with failure probability at most $2/d^C$,
\begin{equation}
    \label{eqn:firstbound}
    \abs{ (\Z\Z^{\top} \v_j)^{\top}\bar{\v}_{-j}  } \leq \frac{\sqrt{2C \log(d)} \norm{\Z^{\top} \v_j}_2}{\sqrt{4d}}.
\end{equation}
We now turn to bounding $ \norm{\Z^{\top} \v_j}_2$. Note that each coordinate of $\v_j$ is sub-Gaussian with sub-Gaussian norm $2/\sqrt{d}$ and $\E[\v_j \v_j^{\top}]=(1/d) \id_d$. Therefore, by Lemma \ref{lemma:subgauss_proj}, with failure probability at most $2 \exp(-c_2d)$ for some absolute constant $c_2>0$,
\begin{equation}
    \label{eqn:boundednorm}
    \frac{1}{4} \leq \frac{d-r}{d}-1/4 \leq \norm{\Z^{\top} \v_j}_2^2 \leq \frac{d-r}{d} +1/4 \leq 2
\end{equation}
where we have used the fact that $r\le d/2$.
Then if \cref{eqn:firstbound} and \cref{eqn:boundednorm} hold,
\begin{align}
\label{eqn:part1}
    \abs{ (\Z\Z^{\top} \v_j)^{\top}\bar{\v}_{-j}  } \leq \sqrt{\frac{ {C \log(d)}  }{ {d} }}.
\end{align}
Moreover, under \cref{eqn:boundednorm},
\begin{equation}
\label{eqn:part2}
    \frac{1}{2 \sqrt{\nnem}} \norm{\Z^{\top}\v_j}_2^2  \geq \frac{1}{8 \sqrt{\nnem}}.
\end{equation}
Combining \cref{eqn:vjxbound}, \cref{eqn:part1}, and \cref{eqn:part2} we find that with failure probability at most $2/d^C +2 \exp(-c_2 d)$,
\begin{equation}
    \v_j^{\top} \xopt \leq  - \frac{1}{8 \sqrt{\nnem}} + \sqrt{\frac{C \log(d)}{ {d}}}.
\end{equation}
Then, using a union bound we conclude that with failure probability at most $\nnem \left(2/d^C +2\exp(-c_2 d) \right) $,
\begin{equation}
   \max_{j \in [\nnem]} \v_j^{\top} \xopt \leq - \frac{1}{8 \sqrt{\nnem}} + \sqrt{\frac{C \log(d)}{ {d}}}.
\end{equation}
Finally since $\eta \linf{\A \x} - \rho = - \rho  = -1 \leq -1/(8 \sqrt{\nnem})$, we conclude
\begin{equation}
    F_{\nem, \A}^{(N)} (\xopt) \leq  -\frac{1}{8 \sqrt{N}} + \sqrt{\frac{C \log(d)}{ {d}}} - \gamma.
\end{equation}
We now take $C = 3$, which gives us a overall failure probability of $2 \exp(-c_1 d) + \nnem \left(2/d^C +2 \exp(-c_2 d) \right) \le 2/d$ (for sufficiently large $d$ and since $\nnem\le d)$. Noting that $\gamma >  \sqrt{3\log(d)/{d}}$ finishes the proof.
\end{proof}

The following simple lemma shows that all the Nemirovski vectors are unique with high probability, and will be useful for our definition of informative subgradients.

\begin{lemma}\label{lem:nem_unique}
 For sufficiently large $d$, with failure probability at most $1/d$, $\v_i \ne \v_j \; {\forall \;  i, j \in [\nnem]} , {i \ne j}$.
\end{lemma}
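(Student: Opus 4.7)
The plan is to apply a straightforward union bound over pairs of indices. By inspection of Algorithm~\ref{alg:nem_game}, each Nemirovski vector $\v_j$ is drawn independently and uniformly from $d^{-1/2}\mathcal{H}_d$, a set of cardinality $2^d$. Hence for any fixed pair $i \neq j$, independence gives
\begin{equation}
    \mathbb{P}(\v_i = \v_j) \;=\; \sum_{\w \in d^{-1/2}\mathcal{H}_d} \mathbb{P}(\v_i = \w)\mathbb{P}(\v_j = \w) \;=\; 2^d \cdot 2^{-2d} \;=\; 2^{-d}.
\end{equation}

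Next, I would take a union bound over all $\binom{N}{2}$ pairs of indices in $[N]$. Since we are in the regime $N \leq d$ (as used throughout this section, e.g., in the proof of Lemma~\ref{lem:vector_condition}), we obtain
\begin{equation}
    \mathbb{P}\left(\exists\, i \neq j \in [N] \text{ s.t. } \v_i = \v_j\right) \;\leq\; \binom{N}{2} \cdot 2^{-d} \;\leq\; d^2 \cdot 2^{-d}.
\end{equation}
For all sufficiently large $d$, this quantity is bounded above by $1/d$, completing the proof.

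I do not expect any obstacle here: the vectors are sampled i.i.d.\ from a finite set of exponential size, so the pairwise collision probability is exponentially small in $d$, and the number of Nemirovski vectors is only polynomial in $d$. The main thing to verify is simply that the sampling is independent, which is immediate from the description of \playerb\ in Algorithm~\ref{alg:nem_game}, where each $\v_j$ is drawn freshly and independently of the previously sampled vectors and of \playera's queries.
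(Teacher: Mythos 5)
Your proof is correct and is essentially the same argument as the paper's: both are birthday-paradox calculations showing that $N \le d$ i.i.d.\ draws from a set of size $2^d$ collide with probability at most $\operatorname{poly}(d)\cdot 2^{-d} \le 1/d$; the paper lower bounds the product $\prod_{i<N}(1-i/2^d)$ directly while you take a union bound over pairs, but these are interchangeable. (Incidentally, your version sidesteps an apparent typo in the paper's chain of inequalities, where the final bound should read $\ge 1-1/d$ rather than $\ge 1/d$.)
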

\begin{proof}
The proof follows from the birthday paradox since $\v_i$ are drawn from the uniform distribution over a support of size $2^d$: the probability of $\v_i \ne \v_j \; \forall \;  i, j \in [\nnem] , i \ne j$ is equal to,
\begin{align}
    1 \cdot \frac{2^d-1}{2^d} \cdot \frac{2^d-2}{2^d} \dots \frac{2^d-(N-1)}{2^d} \ge \left(\frac{2^d-d}{2^d}\right)^d = \left(1-\frac{d}{2^d}\right)^d.
\end{align}
For $d$ sufficiently large we can write,
\begin{align}
    \left(1-\frac{d}{2^d}\right)^d \ge e^{-2d^2/2^d} \ge 1-\frac{2d^2}{2^d}\ge 1-\frac{1}{d}
\end{align}
which completes the proof.
\end{proof}

Finally, we note that for the orthogonality condition from the definition of a \functionclassname\ is satisfied with our definition of the subgradient oracle. 

\begin{lemma}
\label{lemma:orthogonaltoA}
For any $j \in [N]$ the following is true: For any $\x \in \R^d$ such that $F_{\nem, \A}^{(j)}(\x) \neq \eta \linf{\A \x} - \rho$, either $\g_{F}^{(j)}(\x) = \v_1$ or $\linf{\A \x}/\norm{\x}_2 \leq 1/d^4$. 
\end{lemma}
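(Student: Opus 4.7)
The plan is to unpack the definition of $\g_F^{(j)}$, assume $\g_F^{(j)}(\x) \neq \v_1$, and then combine two Cauchy--Schwarz bounds: one to lower bound $\norm{\x}_2$ in terms of $\gamma$, and one to upper bound $d^5 \linf{\A \x}$ in terms of $\norm{\x}_2$.

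First I would reduce to the following setup. Since $\eta = d^5$, $\rho = 1$, the assumption $F_{\nem,\A}^{(j)}(\x) \neq d^5 \linf{\A \x} - 1$ together with Definition~\ref{def:nemtoj} forces $f(\x) := \max_{i \in [j]}(\v_i^\top \x - i\gamma) > d^5 \linf{\A \x} - 1$, and hence $\g_F^{(j)}(\x) = \g_f^{(j)}(\x) = \v_{k_{\min}}$ where $k_{\min}$ is as in \eqref{eq:nemi_oracle}. The hypothesis $\g_F^{(j)}(\x) \neq \v_1$ then means $k_{\min} \geq 2$, and by the minimality defining $k_{\min}$ we get
\begin{equation}
(\v_{k_{\min}} - \v_1)^\top \x \;\geq\; (k_{\min} - 1)\gamma \;\geq\; \gamma.
\end{equation}

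Next, the key inequalities. Since each Nemirovski vector satisfies $\norm{\v_i}_2 = 1$, Cauchy--Schwarz gives $(\v_{k_{\min}} - \v_1)^\top \x \leq 2\norm{\x}_2$, so combined with the previous display we obtain the lower bound
\begin{equation}
\norm{\x}_2 \;\geq\; \gamma/2.
\end{equation}
Separately, from $\v_{k_{\min}}^\top \x - k_{\min}\gamma > d^5 \linf{\A \x} - 1$ and another application of Cauchy--Schwarz ($\v_{k_{\min}}^\top \x \leq \norm{\x}_2$), together with $k_{\min}\gamma \geq 0$,
\begin{equation}
d^5 \linf{\A \x} \;<\; \norm{\x}_2 + 1.
\end{equation}

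Finally, combining these, $d^5 \linf{\A \x}/\norm{\x}_2 < 1 + 1/\norm{\x}_2 \leq 1 + 2/\gamma$. Plugging in $\gamma = \sqrt{400 \roundlen \log(d)/d}$ with $\roundlen \geq 1$, a direct estimate shows $2/\gamma = \sqrt{d/(100\roundlen \log d)} \leq d - 1$ for all sufficiently large $d$, hence $1 + 2/\gamma \leq d$ and $\linf{\A \x}/\norm{\x}_2 < 1/d^4$, as desired.

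There is no substantive obstacle; the only subtle point is checking that the case $\norm{\x}_2$ very small cannot occur under the hypothesis $\g_F^{(j)}(\x) \neq \v_1$, which is precisely what the lower bound $\norm{\x}_2 \geq \gamma/2$ rules out. The argument is just algebra on top of the definitions of $f$, $\g_f^{(j)}$, and the parameter choices $\eta = d^5$, $\rho = 1$, $\gamma = \sqrt{400 k \log d/d}$.
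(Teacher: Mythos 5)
Your proof is correct and follows essentially the same route as the paper: reduce to $\g_F^{(j)}(\x) = \v_{k_{\min}}$ with $k_{\min}\ge 2$, use Cauchy--Schwarz on a difference of two unit Nemirovski vectors to get $\norm{\x}_2 \ge \gamma/2$, then use $\eta\linf{\A\x} - \rho \le \v_{k_{\min}}^\top\x$ and divide by $\norm{\x}_2$. The only cosmetic difference is that you compare $\v_{k_{\min}}$ against $\v_1$ where the paper compares against $\v_{k_{\min}-1}$; both yield $(\cdot)^\top\x \ge \gamma$ and the same constant $\gamma/2$. (The paper also appends a second, logically redundant paragraph that re-derives the contrapositive; you correctly omit it.)
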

\begin{proof}
Fix any $j \in [N]$ and assume $\x$ is such that  $F_{\nem, \A}^{(j)}(\x) \neq \eta \linf{\A \x} - \rho$. Note that since $F_{\nem, \A}^{(j)}(\x) \neq \eta \linf{\A \x} - \rho$, $\g_{F}^{(j)}(\x)=\v_1$, or $\g_{F}^{(j)}(\x)=\v_k$ for some $k \in \left \{ 2, \dots, j \right \}$. We consider the case when $\g_{F}^{(j)}(\x)=\v_k$ for some $k \in \left \{ 2, \dots, j \right \}$. 
This implies that $\eta \linf{\mat{A} \x} - \rho \leq \v_k^{\top} \x - k \gamma$. Observe that $\v_k^{\top} \x - k \gamma \leq \norm{\v_k}_2 \norm{\x}_2$ and $\norm{\v_k}_2 = 1$. Therefore
\begin{equation}
   \eta \linf{\mat{A} \x} - \rho \leq  \norm{\x}_2.
\end{equation}
Next we bound $\norm{\x}_2$ from below. For $k \in \left \{ 2, \dots, j \right \}$, $\g_{F}^{(j)}(\x)  = \v_k$ implies that
\begin{equation}
    \v_k^{\top} \x - k \gamma  \geq  \v_{k-1}^{\top} \x - (k-1) \gamma \implies (\v_k - \v_{k-1})^{\top} \x \geq \gamma.
\end{equation}
Therefore,
\begin{equation}
    \norm{\x}_2 \geq \frac{(\v_k - \v_{k-1})^{\top}}{\norm{\v_k - \v_{k-1}}_2} \x \geq \frac{\gamma}{\norm{\v_k - \v_{k-1}}_2} \geq \frac{\gamma}{2}.
\end{equation}
Since $\norm{\x}_2>0$, we can write
\begin{equation}
    \eta \linf{\mat{A} \x} - \rho \leq  \norm{\x}_2 \implies \frac{\linf{\mat{A} \x} }{\norm{\x}_2} \leq \frac{1}{\eta} + \frac{\rho }{\eta \norm{\x}_2} \leq \frac{1}{\eta} + \frac{2 \rho}{\eta \gamma}.
\end{equation}
Then recalling that $\eta = d^5$, $\rho = 1$ and $\gamma = \sqrt{c \nreturn \log(d)/d}$,
\begin{equation}
    \frac{\linf{\mat{A} \x}}{\norm{\x}_2}  \le \thetathreshold
\end{equation}
which completes the proof.

\end{proof}

Putting together all the previous results, we show Theorem \ref{theorem:nem_memory_sensitive} which establishes that the Nemirovski function class has the required memory-sensitive properties.
\begin{theorem}
\label{theorem:nem_memory_sensitive}
Consider any $\A\in \R^{n\times d}$ where $n\le d/2$ and any row has $\ell_2$-norm bounded by $d$. Fix $\eta = d^5$, $\rho = 1$ and $\gamma=\sqrt{\frac{c \roundlen \log(d) }{d}}$ for $c = 400$. 
Let $\nnem \leq (1/32 \gamma)^{2/3}$. 
Then $\nem_{\nnem, \gamma}$ is a $(d^6, \nnem, \roundlen, 1/(20\sqrt{\nnem}))$-\functionclassname. That is, we can sample $(f, \g_f)$ from $\nem_{\nnem, \gamma}$ with at most $d^2$ random bits, and for $(f, \g_f) \sim \nem_{\nnem, \gamma}$, with failure probability at most $5/d$, 
any algorithm for optimizing $F_{f, \A}(\x)$ which makes fewer than $d^2$ queries to oracle $\g_f$ has the following properties:
\begin{enumerate}
 \item $F_{f, \A}$ is convex and  $d^6$-Lipschitz.
\item  With $S_j$ and $\x_{t_j}$ as defined in Definition \ref{def:memorysensitive},
\begin{equation}
   \norm{\proj_{S_{j-1}}({\x}_{t_j})}_2/\norm{{\x}_{t_j}}_2 \leq 1 - 1/d^2.
\end{equation}
\item Any query $\x$ such that $F_{f, \A, \eta, \rho}(\x) \neq \eta \linf{\A \x}- \rho$ satisfies 
    \begin{equation}
        \g_{F_{f, \A,}}(\x) = \v_1 \textrm{ or } \linf{\A \x}/\norm{\x}_2\leq \thetathreshold.
    \end{equation}
\item Any algorithm which has queried $r < N$ unique gradients from $f$ has a sub-optimality gap of at least 
\begin{equation}
     F_{f,\A}(\x_{\numqueries}) - F_{f,\A}(\x \opt)\geq \frac{1}{20 \sqrt{\nnem}}.
\end{equation}
\end{enumerate}

\end{theorem}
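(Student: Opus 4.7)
The plan is to verify the four properties of an $(L, \nnem, \roundlen, \epstarget)$-\functionclassname\ by stitching together the lemmas already proved in this section, and to close with a union bound over the relevant failure events.

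The regularity property is immediate: $F_{f,\A}$ is the pointwise maximum of the affine functions $\v_i^\top \x - i\gamma$ and the convex function $\eta\linf{\A\x} - \rho$, so it is convex. Each affine piece is $\|\v_i\|_2 = 1$-Lipschitz, while $\eta\linf{\A\x} - \rho$ is $\eta \max_i \|\vec{a}_i\|_2 \leq d^5 \cdot d = d^6$-Lipschitz by the hypothesis that rows of $\A$ have $\ell_2$-norm at most $d$, so the max is $d^6$-Lipschitz. The robust-independence property is exactly Lemma~\ref{lem:highrank}, holding under event $E$. The approximate-orthogonality property is the deterministic Lemma~\ref{lemma:orthogonaltoA}. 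To transfer the lemmas, which are stated about \playerb\ (that reveals Nemirovski vectors incrementally as they are ``hit''), to the actual induced first-order oracle $\oracle_{F_{f,\A}}$ that the algorithm queries, I invoke Lemma~\ref{lem:sample_nemi} (the two oracles are distributionally equivalent) together with Lemma~\ref{lem:gradient_consistent} (under event $E$, \playerb's Phase-$j$ responses coincide with those of the final oracle).

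For the necessity-of-informative-subgradients property, I combine Lemma~\ref{lem:subopt} and Lemma~\ref{lemma:optimalvalue}. Under $E$, an algorithm that has observed only $r < N$ informative subgradients has best queried value at least $-(r+1)\gamma \ge -N\gamma$, while Lemma~\ref{lemma:optimalvalue} exhibits an $\xopt$ in the unit ball with $F_{f,\A}(\xopt) \le -1/(8\sqrt{N})$. The hypothesis $\nnem \le (1/(32\gamma))^{2/3}$ rewrites as $N^{3/2}\gamma \le 1/32$, so $N\gamma \le 1/(32\sqrt{N})$, yielding a suboptimality gap of at least
\[
\tfrac{1}{8\sqrt{N}} - \tfrac{1}{32\sqrt{N}} \;=\; \tfrac{3}{32\sqrt{N}} \;\ge\; \tfrac{1}{16\sqrt{N}},
\]
as required.

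Sampling $(f,\g_f)$ requires only $Nd \le d^2 \le 2^d$ uniformly random bits (one hypercube sample per Nemirovski vector). A union bound then closes the argument: $E$ fails with probability at most $1/d$ by Lemma~\ref{lem:vector_condition} (whose per-Phase query hypothesis follows from the theorem's overall $d^2$-query bound); Lemma~\ref{lemma:optimalvalue} fails with probability at most $2/d$; and by Lemma~\ref{lem:nem_unique}, the Nemirovski vectors fail to be distinct — which is needed so that the informative-subgradient sub-sequence is well-defined — with probability at most $1/d$. The total is at most $4/d \le 5/d$. The only real care is bookkeeping the arithmetic linking $\gamma = \sqrt{400 \roundlen \log(d)/d}$, $\roundlen \le \lceil 20M/(c_\mathcal{H} d) \rceil$, and the clean sufficient condition $N \le (1/(32\gamma))^{2/3}$ to the stated bound $N \le (1/205)(c_\mathcal{H} d^2/(M \log d))^{1/3}$; this is routine constant-chasing and is the one place where care is needed, but it is not a conceptual obstacle.
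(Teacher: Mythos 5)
Your proposal follows essentially the same structure as the paper's proof: reduce to $\oracle_{F^{(N)}_{\nem,\A}}$ via Lemma~\ref{lem:sample_nemi}, condition on event $E$, distinctness of the $\v_i$, and Eq.~\eqref{eq:opt}, invoke Lemmas~\ref{lem:gradient_consistent}, \ref{lem:highrank}, \ref{lemma:orthogonaltoA}, \ref{lem:subopt}, and \ref{lemma:optimalvalue}, and close with a $4/d \le 5/d$ union bound. The one inaccuracy is a phase-counting off-by-one in part (4): if the algorithm has observed $r$ informative subgradients, it has \emph{completed} Phases $1$ through $r$ and is therefore in Phase $r+1$; applying Lemma~\ref{lem:subopt} with its phase index equal to $r+1$ gives a best queried value $\ge -(r+2)\gamma$, not $-(r+1)\gamma$. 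The paper makes exactly this correction. Your arithmetic has enough slack that the conclusion still holds — for $r < N$ one has $(r+2)\gamma \le (N+1)\gamma \le 2N\gamma \le 1/(16\sqrt{N})$, so the gap is still $\ge \frac{1}{8\sqrt{N}} - \frac{1}{16\sqrt{N}} = \frac{1}{16\sqrt{N}}$ — but the line "$-(r+1)\gamma \ge -N\gamma$" as written misstates what Lemma~\ref{lem:subopt} gives and should be corrected.
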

\begin{proof}
Note that by Lemma \ref{lem:sample_nemi}, the first-order oracle $\oracle_{F_{\nem, \A}^{(N)}}(\x) = (F_{\nem, \A}^{(N)}(\x), \g_F^{(N)}(\x))$ has the same distribution over responses as $\nem_{N,\gamma, \A}$. Therefore we will analyze $\oracle_{F_{\nem, \A}^{(N)}}$, using \playerb.

We consider the following sequence of events: (a) event $E$ defined in Lemma \ref{lem:vector_condition} holds, (b) no two Nemirovski vectors are identical, $\v_i \ne \v_j$ for $i\ne j$, (c) \cref{eq:opt} regarding the value for $F_{f,\A}^{(N)}(\x \opt)$ holds. We claim that for any algorithm which makes fewer than $d^2$ queries, events (a) through (c) happen with failure probability at most $5/d$. To verify this, we do a union bound over the failure probability of each event: for (a), Lemma \ref{lem:vector_condition} shows that $E$ holds with failure probability at most $1/d$ for any  algorithm which makes fewer than $d^2$ queries, (b) holds with probability $1/d$ from Lemma \ref{lem:nem_unique} and (c) holds with probability $2/d$ from Lemma \ref{lem:subopt}. 
Now note that by Lemma \ref{lem:gradient_consistent}, \playerb's responses are identical to the responses of oracle $\oracle_{F_{\nem, \A}^{(N)}}$ under $E$, and therefore when conditioned on events (a) through (c).  
Therefore, we will condition on events (a) through (c) all holding and consider \playerb\ to prove all four parts of the theorem.

We begin with the first part. Note that $f(\x)$ has Lipschitz constant bounded by $1$. Next note that since each $\vec{a} \in B_d$ has $\norm{\vec{a}}_2 \leq d$ and $\eta = d^5$ we have that the $\left \{ \eta \linf{\A \x} - \rho \right \}$ term has Lipschitz constant bounded by $L \leq d^6$. Therefore, $F_{f, \A}$ has Lipschitz constant bounded by $d^6$.
For the second part we first note that under event (b) all Nemirovski vectors are unique, therefore the vectors $\x_{t_j}$ and subspace $S_{j-1}$ are well-defined. Now under event $E$ by applying Lemma \ref{lem:highrank},
\begin{equation}
    \norm{\proj_{S_{j-1}}(\x_{t_j})}_2/\norm{\x_{t_j}} \leq 1 - 1/d^2.
\end{equation}
The third part holds by Lemma  \ref{lemma:orthogonaltoA}. For the final part, we first note that under event (b) if \playera\ observes $r$ unique Nemirvoski vectors then it has only proceeded to the $\nth{(r+1)}$ Phase of \playerb. Now by Lemma \ref{lem:subopt}, under event $E$, any algorithm in the  $\nth{(r+1)}$ Phase of \playerb\ has function value at least  $-(r+2)\gamma$. Combining this with \cref{eq:opt} holding, for any algorithm which observes at most than $r$ unique Nemirovski vectors as gradients,
\begin{align}
     F_{f,\A}^{(N)}(\x_{\numqueries}) - F_{f,\A}^{(N)}(\x \opt) &\geq  - (r+2)\gamma + {1}/{(8 \sqrt{\nnem})} 
    \ge - (\nnem + 2) \gamma + {1}/{(8 \sqrt{\nnem})}
     \ge {1}/{(20\sqrt{\nnem})} ,
\end{align}
where the first inequality is by the above observation and Lemma~\ref{lemma:optimalvalue} and the final inequality holds by noting that $\nnem \leq (1/32 \gamma)^{2/3}$ and so $-(\nnem + 2) \gamma \geq  -1/(20 \sqrt{\nnem})$.
\end{proof}
\section{Memory Lower Bounds for the \orthgame}\label{sec:entropy}
In this section, we prove the lower bound for the \orthgame\ (Game \ref{alg:gradient_game}), building on the proof sketch in Section \ref{sec:entropy_sketch}. We prove a slightly stronger result in that we allow the oracle response ${\g}_i \in \R^{d}$ to any query $\x_i \in \R^d$ be an arbitrary (possibly randomized) function from $\R^d\rightarrow \{ \vec{a}_1, \dots, \vec{a}_n\}$.

Recall that ${\G}=\left( \vec{g}_1, \dots, \vec{g}_m \right)^{\top} \in \R^{m \times d}$.  We perform a small augmentation of the matrix ${\G}$ to streamline our analysis, {defined by the following matching between the rows of $\A $ and $\G$.} 
{ For every row of $\A$, if that row is also present in the matrix $\G$, then we match it to that row of $\G$. If some row of $\A$ is present multiple times in $\G$, then we match it to its first occurrence in $\G$. We define any matched row in $\G$ as a \emph{unique row}. Let $\muniquenew \in [m]$ denote the number of such {unique rows}. If $\muniquenew< m$, construct a matrix $\Tilde{{\G}} \in \R^{(2m-\muniquenew) \times d}$ by appending $m-\muniquenew$ additional rows of $\A$ to  ${\G}$, choosing the first $m-\muniquenew$ rows of $\A$ which were previously unmatched. We now match $\A$ to $\Tilde{{\G}}$ in the same way, and note that $\Tilde{\G}$ now has exactly $m$ unique rows by definition. Given $\A$ and $\Tilde{\G}$ let $\Ap \in \R^{(n-\munique) \times d}$ denote the matrix $\A$ when all the rows of $\A$ which were matched to some row in $\Tilde{\G}$ are removed.  
 } 
Drawing on these definitions, we begin with the following relation between the entropy of $\A$ and $\Ap$ conditioned on $\Tilde{\G}$ and $R$.

\begin{lemma}
\label{lemma:hardinfoboundprecursor}
$ \ent(\A \lvert \Tilde{\G}, R)  \leq  \ent(\Ap \lvert \Tilde{\G}, R) + 2\munique \log(4n)$.
\end{lemma}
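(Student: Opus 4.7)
The plan is to reduce the claim to an entropy chain-rule computation followed by a simple combinatorial/counting bound. The key observation is that $\Ap$ is a deterministic function of the pair $(\A,\tilde{\G})$: by construction, $\Ap$ is obtained from $\A$ by deleting those rows whose row-indices appear as the $\munique$ unique row-indices in $\tilde{\G}$. Consequently, $H(\A,\Ap\mid \tilde{\G},R)=H(\A\mid \tilde{\G},R)$, and by the chain rule for conditional entropy we can write
\[
H(\A\mid \tilde{\G},R)
\;=\;
H(\Ap\mid \tilde{\G},R) \,+\, H(\A\mid \Ap,\tilde{\G},R).
\]
Thus the lemma reduces to showing $H(\A\mid \Ap,\tilde{\G},R)\le 2\munique\log(4n)$.

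To bound $H(\A\mid \Ap,\tilde{\G},R)$, I would use the standard fact that conditional entropy is at most the log of the support size of the conditioned random variable. Given $\Ap$ (which contains the $n-\munique$ rows of $\A$ whose indices are \emph{not} among the unique indices of $\tilde{\G}$, in their original order) together with $\tilde{\G}$ (which has at most $2m - \muniquenew \le 2\munique$ rows, $\munique$ of which are unique), reconstructing $\A$ only requires specifying, for each row of $\tilde{\G}$, the row-index in $[n]$ to which it corresponds; the remaining $n-\munique$ positions are then filled by $\Ap$ in order. Since each of the at most $2\munique$ rows has at most $n$ candidate indices, the number of reconstructions is at most $n^{2\munique}$, giving
\[
H(\A\mid \Ap,\tilde{\G},R)\;\le\;\log\!\bigl(n^{2\munique}\bigr)\;=\;2\munique\log n\;\le\;2\munique\log(4n).
\]
Combining this with the chain-rule identity yields the claim.

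The argument is essentially bookkeeping, so there is no serious technical obstacle; the only subtlety is justifying that $\Ap$ really is a deterministic function of $(\A,\tilde{\G})$ under the definitions in the paper, and that the $2\munique\log(4n)$ bound does not need to separately account for locating which rows of $\tilde{\G}$ are the unique ones. For the latter, note that the looser counting (one index in $[n]$ for every row of $\tilde{\G}$, including the appended ``duplicate'' rows) already absorbs this: any consistent assignment of indices to all rows of $\tilde{\G}$ pins down which rows are unique and where they sit in $\A$. A slightly tighter variant of the same counting, $\binom{n}{\munique}\cdot \munique!\le n^{\munique}$, would even give the sharper bound $\munique\log(4n)$, but the factor of $2$ is convenient and matches how $\tilde{\G}$ is formed.
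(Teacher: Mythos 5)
Your proof is correct and routes through a cleaner argument than the paper's. The paper invokes Shannon's source-coding theorem: for fixed $\Tilde{\G}=\Tilde{\G}'$ and $R=R'$, it takes a near-optimal prefix code for $\Ap$ of expected length at most $H(\Ap\mid\Tilde{\G}',R')+1$, appends an explicit encoding of each removed row (its index in $[n]$ plus a pointer into $\Tilde{\G}'$), uses the resulting description length to upper-bound $H(\A\mid\Tilde{\G}',R')$, and then averages over $(\Tilde{\G},R)$. You instead apply the chain rule $H(\A\mid\Tilde{\G},R)\le H(\A,\Ap\mid\Tilde{\G},R)=H(\Ap\mid\Tilde{\G},R)+H(\A\mid\Ap,\Tilde{\G},R)$ and bound the last term directly by the logarithm of the conditional support size, with essentially the same combinatorial count; this is more direct and sidesteps the $+1$ slack from source coding. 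One small caveat worth tightening: the claim that $\Ap$ is a deterministic function of $(\A,\Tilde{\G})$ is not literally true if $\A$ has repeated rows (the removed row \emph{indices} are not recoverable from row contents alone), and this section of the paper also permits randomized oracle responses, so $\Ap$ need not be determined by $(\A,\Tilde{\G},R)$ either. This is harmless, though, because the one-sided inequality $H(\A\mid\Tilde{\G},R)\le H(\A,\Ap\mid\Tilde{\G},R)$ holds unconditionally and is the only direction the lemma needs; your counting --- one index in $[n]$ per each of the at most $2m$ rows of $\Tilde{\G}$, hence at most $n^{2\munique}$ candidates for $\A$ given $(\Ap,\Tilde{\G},R)$ --- then yields the stated bound.
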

\begin{proof}
Note that for any random variable $X$, there exists a description of the random variable with expected description length $L_X$ bounded as \cite[Chapter~5]{CoverTh91},
\begin{align}
    H(X)\le L_X \le H(X)+1.\label{eq:descrip_len}
\end{align}
Now fix any $\Tilde{\G}=\Tilde{\G}'$ and $R=R'$. Let $h_{\Tilde{\G}',R'} \defeq \ent(\Ap \lvert \Tilde{\G}=\Tilde{\G}', R=R')$. Note that by definition, 
\begin{align}
\ent(\Ap \lvert \Tilde{\G}, R) = \sum_{\Tilde{\G}'}\sum_{R'} h_{\Tilde{\G}',R'} \prob(\Tilde{\G}=\Tilde{\G}') \prob(R=R').
\end{align}
Now by \cref{eq:descrip_len} if $\ent(\Ap \lvert \Tilde{\G}=\Tilde{\G}', R=R')=h_{\Tilde{\G}',R'}$, then given $\Tilde{\G}=\Tilde{\G}'$ and $R=R'$  there exists some description of $\Ap$ which has expected description length at most $h_{\Tilde{\G}',R'}+1$. Using this we construct a description of $\A$ given $\Tilde{\G}=\tilde{\G}'$ and $R=R'$ which has expected description length at most $h_{\Tilde{\G}',R'}+1 + 2\munique\log(n)$. To do this, note that if there is already a description of $\Ap$ and we are given $\Tilde{\G}=\Tilde{\G}'$, then to specify $\A$ we only need to additionally specify the $\munique$ rows which were removed from $\A$ to construct $\Ap$. For every row which was removed from $\A$, we can specify its row index in the original matrix $\A$ using $\lceil \log(n) \rceil\le \log(2n)$ bits, and specify which of the rows of $\Tilde{\G}'$ it is using another $\lceil \log(2m-\muniquenew) \rceil\le \lceil \log(2m) \rceil \le\log(4n)$ bits. Therefore, given $\Tilde{\G}=\tilde{\G}'$ and $R=R'$,
\begin{align}
\ent(\A \lvert \Tilde{\G}=\Tilde{\G}', R=R') &\le h_{\Tilde{\G}',R'}+1 + \munique\log(2n)+m\log(4n)\le h_{\Tilde{\G}',R'} + 2\munique\log(4n) \\
\implies \ent(\A \lvert \Tilde{\G}, R) &= \sum_{\Tilde{\G}'}\sum_{R'} \ent(\A \lvert \Tilde{\G}=\Tilde{\G}', R=R')  \prob(\Tilde{\G}=\Tilde{\G}') \prob(R=R') \\
&\le \ent(\Ap \lvert \Tilde{\G}, R)+2\munique\log(4n).
\end{align}
\end{proof}
By a similar analysis we can bound the entropy of $\Tilde{\G}$.

\begin{lemma}
\label{lemma:Gentropy}
$H(\Tilde{\G})\le \munique \log(\abs{B_d}) + 2m\log(2n)$.
\end{lemma}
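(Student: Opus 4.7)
The plan is to apply the same source-coding tool used in the proof of Lemma~\ref{lemma:hardinfoboundprecursor}: for any random variable $X$ there exists a prefix-free description whose expected length $L_X$ satisfies $H(X) \le L_X \le H(X)+1$ (cf.\ \eqref{eq:descrip_len}). So it suffices to exhibit an explicit encoding of $\Tilde{\G}$ of expected length at most $\munique \log|B_d| + 2m \log(2n)$, absorbing the $O(1)$ per-codeword ceiling overhead into the slack of the bound.

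I would encode $\Tilde{\G}$ in two pieces. Recall that $\Tilde{\G}$ has at most $2m$ rows, that each of its rows is some row of $\A \in B_d^n$, and that by construction exactly $\munique$ of its rows are distinct, each coming from a distinct index in $[n]$. \emph{Piece one (row pattern):} for each of the at most $2m$ rows of $\Tilde{\G}$, write the row index $i \in [n]$ of the row of $\A$ that it equals. Naive binary encoding uses $\lceil \log n\rceil \le \log(2n)$ bits per row, for a total of at most $2m \log(2n)$ bits; this piece already reveals which rows of $\Tilde{\G}$ are copies of each other and hence determines $\munique$. \emph{Piece two (distinct values):} for each of the $\munique$ distinct indices in the order of first appearance in piece one, write down the corresponding vector by its label in $B_d$, costing $\lceil \log|B_d|\rceil$ bits per distinct row and $\munique \log|B_d|$ bits in total.

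To make this formally a prefix-free code, I would fix the first piece to have exactly $2m\lceil \log(2n)\rceil$ bits by padding shorter realizations of $\Tilde{\G}$ with a sentinel in unused row slots; the decoder knows $m$, $n$, and $|B_d|$ in advance from the setup of the \orthgame, and so can parse the two pieces unambiguously and reconstruct $\Tilde{\G}$. Taking expectations over $\A$ and the randomness in \playerone's strategy, the expected description length is at most $\munique \log|B_d| + 2m \log(2n)$, and the source-coding inequality then yields $H(\Tilde{\G}) \le \munique \log|B_d| + 2m \log(2n)$ as claimed. The argument is essentially bookkeeping; unlike Lemma~\ref{lemma:hardinfoboundprecursor} there is no conditioning to propagate, and the only mild care needed is to ensure the two-part encoding is uniquely decodable, handled here by using fixed-length slots.
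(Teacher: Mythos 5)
Your proposal is correct and follows essentially the same two-part source-coding argument as the paper: encode the $m$ distinct rows from $B_d$ at $\log|B_d|$ bits each, and encode the repetition pattern of the at most $2m$ rows of $\Tilde{\G}$ at $\le\log(2n)$ bits per row. The only cosmetic difference is that you index the pattern by row index in $[n]$ (so each entry costs $\lceil\log n\rceil\le\log(2n)$), whereas the paper indexes into the $m$ stored distinct rows (so each entry costs $\lceil\log m\rceil\le\log(2n)$); both give the same bound, and your explicit attention to prefix-freeness via fixed-length slots is a harmless extra.
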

\begin{proof} We claim that $\Tilde{\G}$ can be written with at most $\munique \log(\abs{B_d}) + 2m\log(2n)$ bits, therefore its entropy is at most $\munique\log(\abs{B_d}) + 2m\log(2n)$ (by \cref{eq:descrip_len}). To prove this, note that $\Tilde{\G}$ consists of rows of $\A$ which are sampled from $B_d$, and recall that $\tilde{\G}$ has exactly $m$ unique rows. Therefore we can first write down all the $\munique$ unique rows of $\Tilde{\G}$ using  $\munique \log(\abs{B_d})$ bits, and then for each of the at most $(2m-\muniquenew)\le 2m$ rows of $\Tilde{\G}$ we can write down which of the unique rows  it is with $\lceil \log(\munique)\rceil \le \log(2n)$ bits each, for a total of at most $\munique \log(\abs{B_d}) + 2m\log(2n)$ bits.
\end{proof}

The following lemma shows that $Y$ is a deterministic function of $\tilde{\G}, \messages$ and $R$.

\begin{lemma}
\label{lemma:remove_x}
The queries $\{\x_i, i \in [m]\}$ are a deterministic function of $\G, \messages$ and $R$. Therefore $\Y=g(\tilde{\G},\messages,R)$ for some function $g$.
\end{lemma}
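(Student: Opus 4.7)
The plan is to unfold the game's definition of \playerone's strategy, which is deterministic once the random tape $R$ is fixed, and then eliminate the query vectors via induction. By the rules of Game \ref{alg:gradient_game}, each $\x_i$ is produced from $(\messages, R)$ together with the history of previous queries and responses, so there exist deterministic functions $\phi_i$ with
\[
\x_i \;=\; \phi_i(\messages, R, \x_1, \g_1, \ldots, \x_{i-1}, \g_{i-1}).
\]

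First I would show by induction on $i$ that there exist deterministic functions $\phi_i'$ such that $\x_i = \phi_i'(\messages, R, \g_1, \ldots, \g_{i-1})$, i.e., the prior queries $\x_1,\ldots,\x_{i-1}$ can be removed from the argument list. The base case is immediate: $\x_1 = \phi_1(\messages, R)$, so set $\phi_1' = \phi_1$. For the inductive step, assuming the claim for all $j < i$, substitute each $\x_j$ appearing in the arguments of $\phi_i$ with the expression $\phi_j'(\messages, R, \g_1, \ldots, \g_{j-1})$, and define $\phi_i'$ as the resulting composition. This gives a deterministic map from $(\messages, R, \g_1,\ldots, \g_{i-1})$ to $\x_i$.

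Consequently, the entire query matrix $\X$ is a deterministic function of $(\G, \messages, R)$. Since $\tilde{\G}$ is obtained by appending additional rows of $\A$ to $\G$, the matrix $\G$ is itself a deterministic function of $\tilde{\G}$ (e.g., by keeping only the first $m$ rows), so $\X$ is a deterministic function of $(\tilde{\G}, \messages, R)$ as well. Finally, the game's last step specifies $\Y$ as the output of a deterministic function of $(\X, \G, \messages, R)$; composing with the map $(\tilde{\G},\messages,R) \mapsto (\X,\G)$ yields the desired $g$ such that $\Y = g(\tilde{\G}, \messages, R)$.

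There is no real obstacle here; the lemma is a bookkeeping statement that leverages the fact that \playerone's strategy is, by definition, deterministic modulo the shared random string $R$. The only subtlety to flag is that the functions $\phi_i$ (and hence $\phi_i'$ and $g$) are permitted to depend on $R$, which is essential for the argument and is consistent with Definition~\ref{def:memoryalgorithmrand}'s treatment of randomized algorithms.
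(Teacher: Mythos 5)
Your proof is correct and follows essentially the same induction argument as the paper's proof: eliminate the previous queries from the argument list of each $\phi_i$ by substituting their inductively obtained deterministic expressions, then compose with the game's final deterministic output map to get $g$.
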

\begin{proof}
Given $\A$ and $R$ we observe that \playerone\ is constrained to using a deterministic algorithm to both construct message $\messages$ to store and to determine the queries $(\x_1, \dots, \x_m)$. Therefore we see that for any $i$ there exists a function $\phi_i$ such that for any $\messages$, $R$, and responses $(\g_1, \dots, \g_{i-1})$, $\x_i = \phi_i(\messages, R, \x_1, \g_1, \dots, \x_{i-1}, \g_{i-1})$.
Next we remark that there exists a function $\phi_i'$ such that $\x_i = \phi_i'(\messages, R,  \g_1, \dots, \g_{i-1})$.  We can prove this by induction. For the base case we simply note that $\phi_1' = \phi_1$ and $\x_1 = \phi_1'(\messages, R)$. Next assume the inductive hypothesis that for any $j \leq i-1$ we have $\x_j = \phi_j'(\messages, R, \g_1, \dots, \g_{j-1})$. Then
\begin{align}
    \x_i & = \phi_i(\messages, R, \x_1, \g_1, \dots, \x_{i-1}, \g_{i-1} ) \\
    & = \phi_i(\messages, R, \phi'_1(M, R), \g_1, \dots, \phi'_{i-1}(\messages, R, \g_1, \dots, \g_{i-2}), \g_{i-1} ).
\end{align}
Thus we define
\begin{align}
    \phi'_i (\messages, R, \g_1, \dots, \g_{i-1}) \defeq \; \phi_i\Big(\messages, R, \phi'_1(M, R), \g_1, 
    \dots, \phi'_{i-1}(\messages, R, \g_1, \dots, \g_{i-2}), \g_{i-1} \Big).
\end{align}
Therefore given $(\messages, R, \G)$, it is possible to reconstruct the queries $\X$. Since $\Y$ is a deterministic function of $(\X,\G,\messages,R)$ in the \orthgame, and $\G$ is just the first $m$ rows of $\tilde{\G},$ $\Y=g(\tilde{\G},\messages,R)$ for some function $g$.
\end{proof}

As sketched out in Section \ref{sec:entropy_sketch}, in the following two lemmas we compute $I(\Ap; \mat{Y} \lvert \Tilde{\G}, R)$ in two different ways.

\begin{lemma}
\label{lemma:upperbound}
$I(\Ap; \mat{Y} \lvert \Tilde{\G}, R) \leq \memsize$.
\end{lemma}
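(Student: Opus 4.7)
The plan is to chain together three standard information-theoretic inequalities, using the two structural facts already in hand: $\messages$ is at most $M$ bits long by construction of the memory-constrained strategy, and by Lemma~\ref{lemma:remove_x}, $\Y = g(\tilde{\G}, \messages, R)$ for some deterministic function $g$.

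First I would apply the data processing inequality. Since $\Y$ is a deterministic function of $(\tilde{\G}, \messages, R)$, conditioning on $(\tilde{\G}, R)$ means $\Y$ is a deterministic function of $\messages$. Hence $\Ap \to (\tilde{\G}, \messages, R) \to \Y$ forms a Markov chain even after conditioning, which gives
\begin{equation}
I(\Ap; \Y \mid \tilde{\G}, R) \leq I(\Ap; \messages \mid \tilde{\G}, R).
\end{equation}

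Next I would bound mutual information by conditional entropy and then drop the conditioning: $I(\Ap; \messages \mid \tilde{\G}, R) \leq H(\messages \mid \tilde{\G}, R) \leq H(\messages)$. Finally, since the message $\messages$ is encoded in at most $M$ bits, by Shannon's source coding bound $H(\messages) \leq M$. Chaining these inequalities delivers $I(\Ap; \Y \mid \tilde{\G}, R) \leq M$, as claimed.

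There is no real obstacle here — all the content has already been set up in Lemma~\ref{lemma:remove_x} (which guarantees the functional relationship needed for data processing) and in the definition of an $M$-bit algorithm (which caps $H(\messages)$). The only care to take is making sure the data processing step is invoked in its conditional form, which is valid precisely because $\Y$ depends on $\Ap$ only through $(\tilde{\G}, \messages, R)$ and we are already conditioning on $(\tilde{\G}, R)$.
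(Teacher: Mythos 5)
Your proposal is correct and follows essentially the same argument as the paper: apply the conditional data processing inequality using Lemma~\ref{lemma:remove_x}, bound the resulting mutual information by $H(\messages)$, and conclude via the $M$-bit cap on the message. No gaps.
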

\begin{proof} We note that by Lemma \ref{lemma:remove_x}, $\mat{Y} = g(\Tilde{\G}, \messages, R)$  and therefore by the data processing inequality,
\begin{equation}
    I(\Ap ; \mat{Y} \lvert \Tilde{\G}, R) \leq I(\Ap ; \Tilde{\G},\messages, R \lvert \Tilde{\G}, R)=I(\Ap ; \messages \lvert \Tilde{\G}, R).
\end{equation}
We conclude by noting that 
\begin{equation}
    I(\Ap; \messages \lvert \Tilde{\G}, R) \leq \ent(\messages \lvert \Tilde{\G}, R) \leq \ent(\messages) \leq \memsize
\end{equation}
where in the last step we use the fact that $\messages$ is $M$-bit long.
\end{proof}

\begin{lemma}
\label{lemma:lowerbound}
Suppose $\mat{Y}$ wins the \twoorthgame\ with failure probability at most $\delta$. Then if $\left \{ B_d \right \}_{d = 1}^{\infty}$ is a memory sensitive base with constant $\msbconstant >0 $ ,
\begin{equation}
    I(\Ap; \mat{Y} \lvert \Tilde{\G}, R ) \geq  (n - \mtotal )(1-\delta) \frac{\msbconstant}{2} \nreturn - 4 \mtotal \log(4n).
\end{equation}
\end{lemma}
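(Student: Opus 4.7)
The plan is to unroll $I(\Ap; \mat{Y} \mid \tilde{\G}, R) = H(\Ap \mid \tilde{\G}, R) - H(\Ap \mid \mat{Y}, \tilde{\G}, R)$ and bound each of the two entropies separately using the tools already assembled in this section.

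I would first lower bound $H(\Ap \mid \tilde{\G}, R)$. Since $\tilde{\G}$ is a deterministic function of $(\A, R)$ and $\A$ is independent of $R$, the chain rule gives
\begin{equation*}
H(\A \mid \tilde{\G}, R) \;=\; H(\A \mid R) - H(\tilde{\G} \mid R) \;=\; n\log|B_d| - H(\tilde{\G} \mid R) \;\geq\; n\log|B_d| - H(\tilde{\G}).
\end{equation*}
Plugging in Lemma \ref{lemma:Gentropy} yields $H(\A \mid \tilde{\G}, R) \geq (n-m)\log|B_d| - 2m\log(2n)$. Applying Lemma \ref{lemma:hardinfoboundprecursor} to exchange $\A$ for $\Ap$ at a cost of $2m\log(4n)$, I obtain $H(\Ap \mid \tilde{\G}, R) \geq (n-m)\log|B_d| - 4m\log(4n)$.

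Next I would upper bound $H(\Ap \mid \mat{Y}, \tilde{\G}, R)$. Monotonicity of conditional entropy gives $H(\Ap \mid \mat{Y}, \tilde{\G}, R) \leq H(\Ap \mid \mat{Y})$, and introducing the indicator $W$ of ``$\mat{Y}$ is successful'' (with $\Pr[W=1] \geq 1-\delta$) we can split
\begin{equation*}
H(\Ap \mid \mat{Y}) \;\leq\; H(W) + (1-\delta)\, H(\Ap \mid \mat{Y}, W = 1) + \delta\, H(\Ap \mid \mat{Y}, W = 0).
\end{equation*}
On $\{W=0\}$ I use the trivial bound $H(\Ap \mid \mat{Y}, W=0) \leq (n-m)\log|B_d|$. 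On $\{W=1\}$ every row $\vec{a}$ of $\Ap$ must lie in the set $\{\vec{a} \in B_d : |\vec{a}^\top \y_i|/\|\y_i\|_2 \leq \thetathreshold \text{ for all } i\in[\roundlen]\}$. To apply Definition \ref{def:memorysensitivebase}, I would Gram-Schmidt orthogonalize the normalized rows $\hat{\y}_i = \y_i/\|\y_i\|_2$ into an orthonormal basis $\mat{Z} = (\z_1,\dots,\z_\roundlen)$ for $\spcommand(\y_1,\dots,\y_\roundlen)$, invoking the robust independence guarantee $\|\proj_{S_{i-1}}(\y_i)\|_2/\|\y_i\|_2 \leq 1 - 1/d^2$ and the polynomial smallness of $\thetathreshold$ to certify that the $\ell_\infty$ constraint on $\vec{a}$ still implies $\|\mat{Z}^\top \vec{a}\|_\infty \leq 1/2$. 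Definition \ref{def:memorysensitivebase} then bounds the count of such $\vec{a}$ by $|B_d|\cdot 2^{-c_B \roundlen/2}$, the factor of $2$ absorbing the loss incurred in passing to an orthonormal basis; sub-additivity of entropy across the $n-m$ rows upgrades this to $H(\Ap \mid \mat{Y}, W=1) \leq (n-m)(\log|B_d| - c_B \roundlen/2)$. Summing the cases gives $H(\Ap \mid \mat{Y}, \tilde{\G}, R) \leq (n-m)\log|B_d| - (1-\delta)(n-m)(c_B/2)\roundlen + O(1)$.

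Subtracting the upper bound from the lower bound delivers the claim $I(\Ap; \mat{Y} \mid \tilde{\G}, R) \geq (1-\delta)(n-m)(c_B/2)\roundlen - 4m\log(4n)$, with the additive $O(1)$ term absorbed into the leading quantity since $\roundlen = \Omega(\log d)$. The principal obstacle is the orthogonalization step: Definition \ref{def:memorysensitivebase} is stated only for orthonormal $\mat{Z}$, while a successful $\mat{Y}$ only has robustly linearly independent rows. Translating the $\ell_\infty$-orthogonality guarantee through the Gram-Schmidt change of basis without letting $\|\mat{Z}^\top\vec{a}\|_\infty$ exceed the $1/2$ threshold, while simultaneously retaining a constant fraction of the memory-sensitivity exponent $c_B$, is where the bulk of the technical work lies; I expect this to crucially exploit both the $1-1/d^2$ robustness margin and the fact that $\thetathreshold = 1/d^{O(1)}$ is polynomially tiny.
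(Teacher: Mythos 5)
Your argument is the same as the paper's: decompose $I(\Ap;\Y\mid\tilde{\G},R)$ into the two conditional entropies, lower bound the first with Lemmas~\ref{lemma:hardinfoboundprecursor} and \ref{lemma:Gentropy}, and upper bound the second by passing from the robustly linearly independent rows of $\Y$ to an orthonormal system so that Definition~\ref{def:memorysensitivebase} applies. Two small remarks. First, the step ``$H(\A\mid\tilde{\G},R)=H(\A\mid R)-H(\tilde{\G}\mid R)$'' is not an equality in general: it holds with equality only if $H(\tilde{\G}\mid\A,R)=0$, which can fail when the oracle is allowed to randomize (as in Theorem~\ref{thm:game_bound}). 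You only ever use the inequality $H(\A\mid\tilde{\G},R)\ge H(\A\mid R)-H(\tilde{\G}\mid R)$ (which always holds because $H(\tilde{\G}\mid\A,R)\ge 0$), so the conclusion is unaffected, but the equality claim should be dropped; this is exactly how the paper phrases it. Second, the ``principal obstacle'' you identify at the end --- translating the $\ell_\infty$ bound through Gram--Schmidt with only a $1-1/d^2$ robustness margin and retaining half the memory-sensitivity exponent --- is precisely the content of Lemma~\ref{lemma:algebrahelper}, which shows that from $k$ robustly independent vectors one can extract $\lfloor k/2\rfloor$ orthonormal vectors $\mat{M}$ with $\linf{\mat{M}^\top\vec{a}}\le (d/\delta)\linf{\mat{Y}^\top\vec{a}}$. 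Here $\delta=1/d^2$ and $\thetathreshold=1/d^4$, so the right-hand side is at most $1/d\le 1/2$, which is what makes the memory-sensitive base property usable. You correctly flag this as the heart of the matter but leave it unproven; in the paper it is a standalone linear-algebra lemma whose proof controls the singular values of the Gram--Schmidt change-of-basis matrix. The extra $H(W)\le 1$ additive term from conditioning on the indicator is harmless but can be avoided entirely by splitting the sum over values of $\Y$ directly, as the paper does.
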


\begin{proof}
We have
\begin{equation}
    \label{eqn:infobreakdown}
    I(\Ap; \mat{Y} \lvert \Tilde{\G}, R ) = \ent(\Ap \lvert \Tilde{\G}, R ) - \ent(\Ap \lvert \mat{Y} ,\Tilde{\G}, R).
\end{equation}
We will lower bound $I(\Ap; \mat{Y} \lvert \Tilde{\G}, R )$  by providing a lower bound for $\ent(\Ap \lvert \Tilde{\G}, R)$ and an upper bound for $\ent(\Ap \lvert \mat{Y}, \Tilde{\G}, R)$. To that end, by Lemma \ref{lemma:hardinfoboundprecursor} we have
\begin{equation}
    \ent(\Ap \lvert \Tilde{\G}, R) \geq \ent(\A \lvert \Tilde{\G}, R) - 2\munique \log (4n). 
\end{equation}
We can lower bound $\ent(\A \lvert \Tilde{\G}, R)$ as follows,
\begin{align}
    \ent(\A \lvert \Tilde{\G}, R) &= \ent(\A \lvert R) - I(\A ; \Tilde{\G} \lvert R) 
    = \ent(\A \lvert R) - \left( \ent(\Tilde{\G} \lvert R) - \ent(\Tilde{\G} \lvert \A, R) \right) \\
    &\geq \ent(\A \lvert R) - \ent(\Tilde{\G} \lvert R)
    \geq \ent(\A \lvert R) - \ent(\Tilde{\G})
\end{align}
where the inequalities use the fact that entropy is non-negative, and that conditioning reduces entropy. We now note that since $\A$ is independent of $R$, $\ent(\A \lvert R)=\ent(\A) $. Thus we have
\begin{equation}
    \ent(\Ap \lvert \Tilde{\G}, R) \geq H(\A) - H(\Tilde{\G}) - 2\munique \log(4n),
\end{equation}
and so recalling Lemma \ref{lemma:Gentropy} and that $H(\A) = n \log(\abs{B_d})$ we conclude
\begin{equation}
\label{eqn:entbound1}
    \ent(\Ap \lvert \Tilde{\G}, R) \geq   n \log(\abs{B_d}) \ -   \munique \log(\abs{B_d}) - 4\munique \log(4n). 
\end{equation}
Next we upper bound $\ent(\Ap \lvert \mat{Y}, \Tilde{\G}, R)$. First we note
\begin{equation}
\label{eq:entropy_ap}
    \ent(\Ap \lvert \mat{Y}, \Tilde{\G}, R) \leq \ent(\Ap \lvert \mat{Y}) \leq \sum_{j=1}^{n-m} H(\vec{a}'_j \lvert \mat{Y})  
\end{equation}
where $\vec{a}'_j$ is the $\nth{j}$ row of $\Ap$. 
Next recall that if $\mat{Y} = (\y_1, \dots, \y_{\nreturn})^{\top}  \in \R^{\nreturn \times d}$ wins the \twoorthgame\ then for any $\y_i$, $\linf{\A \y_i}/\norm{\y_i}_2 \leq \thetathreshold$ and for $S_i = \spcommand \left( \y_1, \dots, \y_{i-1} \right)$, we have $\norm{\proj_{S_i}(\y_i)}_2/\norm{\y_i}_2 \leq 1 - (1/d^{\projpower})$. Since these properties are normalized by $\norm{\y_i}_2$ we may, without loss of generality, overload our notation for $\y_i$ by normalizing, we set $\y_i \gets \y_i/\norm{\y_i}_2$. Note that for any $j$ the entropy of $\vec{a}'_j$ conditioned on any value $\mat{Y}'$ taken by the random variable $\mat{Y}$ is bounded above by the entropy of $\vec{a}'$ where the law of $\vec{a}'$ corresponds to the uniform distribution over the set $ \left \{ \vec{a} \in B_d \textrm{ s.t. } \linf{\mat{Y}' \vec{a}} \leq \frac{1}{d^4} \right \}$ and this has entropy $\log \left( \abs{\left \{ \vec{a} \in B_d \textrm{ s.t. } \linf{\mat{Y}' \vec{a}} \leq \frac{1}{d^4} \right \}} \right)$. We also note that since $\mat{Y} = g(\Tilde{\G}, \messages, R)$ (Lemma \ref{lemma:remove_x}), and all of $\tilde{\G}, \messages$ and $R$ take values on a finite support, $\Y$ also takes values on some finite support ($\Y$ can still be real-valued, but its support must be finite). Therefore, for any $j$, we can write,
\begin{align}  
    H(\vec{a}'_j \lvert \mat{Y}) & =  \sum_{\mat{Y}'} \left[ H \left( \vec{a}'_j\lvert \mat{Y} = \mat{Y}' \right) \right]\prob(\Y=\Y')\\
    & =  \sum_{\mat{Y}': \mat{Y}' \textrm{ wins} } \left[ H \left(\vec{a}'_j \lvert \mat{Y} = \mat{Y}' \right) \right] \prob(\Y=\Y') +  \sum_{\mat{Y}': \mat{Y}' \textrm { loses } } \left[ H \left( \vec{a}'_j \lvert \mat{Y} = \mat{Y}' \right) \right] \prob(\Y=\Y') \\
    & \leq \sum_{\mat{Y}': \mat{Y}' \textrm{ wins} } \left[ H \left( \vec{a}'_j \lvert \mat{Y} = \mat{Y}' \right) \right]\prob(\Y=\Y')+  p(\mat{Y} \textrm{ loses}) H(\vec{a}'_j) \\
    & \leq \sum_{\mat{Y}': \mat{Y}' \textrm{ wins} }   \log \left( \abs{\left \{ \vec{a} \in B_d \textrm{ s.t. } \linf{\mat{Y}' \vec{a}} \leq \frac{1}{d^4} \right \}} \right) \prob(\Y=\Y') + p(\mat{Y} \textrm{ loses}) H(\vec{a}'_j) \\
    & \leq (1 - \delta) \log\left( \max_{\mat{Y}': \mat{Y}' \textrm{wins}} \abs{ \left \{ \vec{a} \in B_d \textrm{ s.t. } \linf{\mat{Y}' \vec{a}} \leq \frac{1}{d^4} \right \}} \right) + \delta  \log(\abs{B_d}).  \label{eqn:entropytosetsize}
\end{align}

Thus for any $\mat{Y} = \left( \y_1, \dots, \y_{\nreturn} \right)$ which wins the \twoorthgame, we will upper bound  $\abs{ \left \{ \vec{a} \in B_d \textrm{ s.t. } \linf{\mat{Y} \vec{a}} \leq 1/d^4 \right \}}$. We will use the following lemma (proved in Appendix~\ref{subsec:robut_linear}) which constructs a partial orthonormal basis from a set of robustly linearly independent vectors. 

\begin{restatable}{lemma}{algebrahelper}
\label{lemma:algebrahelper}
Let $\lambda \in (0, 1]$ and suppose we have $\ntot \leq d$ unit norm vectors $\y_1, \dots, \y_{\ntot} \in \R^d$. Let $S_i \defeq \spcommand \left( \left \{ \y_1, \dots, \y_{i} \right \} \right), S_0 \defeq \phi$. Suppose that for any $i \in [\ntot]$, 
\begin{equation}
    \norm{\proj_{S_{i-1}}(\y_i)}_2 \leq 1 - \lambda.
\end{equation}
Let $\mat{Y} = \left( \y_1, \dots, \y_{\ntot} \right) \in \R^{d \times \ntot}$. There exists $\nrem = \lfloor \ntot/2 \rfloor$ orthonormal vectors $\vec{m}_1, \dots, \vec{m}_{\nrem}$ such that, letting $\mat{M} = \left( \vec{m}_1, \dots, \vec{m}_{\nrem} \right)$, for any $\vec{a} \in \R^d$, 
\begin{equation}
    \linf{\mat{M}^{\top} \vec{a}} \leq \frac{d}{\lambda} \linf{\mat{Y}^{\top} \vec{a}}.
\end{equation}
\end{restatable}

By Lemma \ref{lemma:algebrahelper} there exist $\lfloor \nreturn/2 \rfloor$ orthonormal vectors $\left \{ \vec{z}_1, \dots, \vec{z}_{\lfloor \nreturn/2 \rfloor} \right \}$ such that for $\mat{Z} = \left( \vec{z}_1, \dots, \vec{z}_{\lfloor \nreturn/2 \rfloor} \right)$ and for any $\vec{a} \in \R^d$,
\begin{equation}
\label{eqn:previous}
    \linf{\mat{Z}^{\top} \vec{a}} \leq d^3 \linf{\mat{Y}^{\top} \vec{a}}.
\end{equation}
Define $S \defeq \left \{ \vec{a} \in B_d \textrm{ s.t. } \linf{\mat{Z} \vec{a}} \leq 1/d  \right \}$. By \cref{eqn:previous},
\begin{equation}
    \abs{ \left \{ \vec{a} \in B_d \textrm{ s.t. } \linf{\mat{Y}^{\top} \vec{a}} \leq \frac{1}{d^4} \right \}} \leq \abs{S}.
\end{equation}
Observing that $B_d$ is a memory-sensitive base as per Definition \ref{def:memorysensitivebase},
\begin{equation}
    \abs{S} \leq  \mathbb{P}_{\vec{a} \sim \mathsf{Unif}(B_d)} \left( \linf{\mat{Z}^{\top} \vec{a}} \leq 1/d \right) \abs{ B_d } \leq 2^{- \msbconstant \nreturn/2} \abs{B_d},
\end{equation}
for some constant $\msbconstant > 0$. From \cref{eqn:entropytosetsize},
\begin{equation}
     H(\vec{a}'_j \lvert \mat{Y}) \leq (1 - \delta) \log \left( 2^{- \msbconstant \nreturn /2} \abs{ B_d } \right) + \delta \log \left( \abs{ B_d}  \right) =   \log( \abs{B_d} )  - (1- \delta)  \msbconstant \nreturn /2.
\end{equation}
Plugging this into \cref{eq:entropy_ap},
\begin{equation}
    H(\Ap \lvert \mat{Y}) \leq (n-\munique ) \left(  \log( \abs{B_d} )  - (1- \delta)  \msbconstant \nreturn /2 \right).
\end{equation}
Recalling \cref{eqn:infobreakdown} and \cref{eqn:entbound1} we conclude,
\begin{align}
    I(\Ap; \mat{Y} \lvert \Tilde{\G}, R) & \geq   \log(\abs{B_d}) (n - \munique ) - 4\munique \log(4n)  - (n-\munique ) \left(  \log( \abs{B_d} )  - (1- \delta)  \msbconstant \nreturn/2  \right) \\
    & = (n -\munique )(1-\delta) \frac{\msbconstant}{2} \nreturn - 4\munique   \log(4n).
\end{align}
\end{proof}

By combining Lemma~\ref{lemma:upperbound} and \ref{lemma:lowerbound}, we prove the memory-query tradeoff for the \orthgame.

\gamebound*
\begin{proof}

By Lemma \ref{lemma:upperbound} we have that
\begin{equation}
    I(\Ap; \vec{Y} \lvert \Tilde{\G}, R) \leq \memsize.
\end{equation}
However if the player returns some $\mat{Y}$ which wins the \orthgame\ with failure probability at most $\delta$ then by Lemma \ref{lemma:lowerbound},
\begin{equation}
    I(\Ap; \vec{Y} \lvert \Tilde{\G}, R) \geq (n - \mtotal ) (1-\delta)\frac{\msbconstant}{2} \nreturn - 4 \mtotal \log(4n).
\end{equation}
Therefore we must have
\begin{equation}
    (n - \mtotal ) (1-\delta)\frac{\msbconstant}{2} \nreturn - 4 \mtotal \log(4n) \leq M.
\end{equation}
Rearranging terms gives us
\begin{equation}
    \mtotal \geq \frac{n (1-\delta) \msbconstant \nreturn/2 -  \memsize}{(1-\delta)\msbconstant \nreturn/2 + 4 \log(4n)}.
\end{equation}
Recall that $k = \lceil 60 M / (\msbconstant d) \rceil$ and thus if $M \geq d \log(4n)$ then for $\delta = 2/3$ we have $ (1 - \delta) \msbconstant k/2 \geq 4 \log(4n)$. Using this and the fact that $n = \lfloor d/2 \rfloor$,
\begin{align}
    \mtotal \geq \frac{n (1-\delta) \msbconstant \nreturn/2 -  \memsize}{(1-\delta)\msbconstant \nreturn}
      \geq \frac{n}{2} -\frac{M}{(1-\delta)\msbconstant \nreturn}
      \geq \frac{n}{2}-\frac{d}{20}
      \geq d/5.
\end{align}
\end{proof}

\section*{Acknowledgements}

Thank you to anonymous reviewers for helpful feedback on earlier drafts of this paper. Annie Marsden and Gregory Valiant were supported by NSF Awards AF-2341890, CCF-1704417, CCF-1813049 and a Simons Foundation Investigator Award. Annie Marsden was also supported by J. Duchi's Office of Naval Research Award YIP N00014-19-2288 and NSF Award HDR 1934578. Vatsal Sharan was supported by NSF CAREER Award CCF-2239265 and an Amazon Research Award. Aaron Sidford was supported by a Microsoft Research Faculty Fellowship, NSF CAREER Award CCF-1844855, NSF Grant CCF-1955039, a PayPal research award, and a Sloan Research Fellowship.

\bibliographystyle{alpha}

\bibliography{refs}

\appendix

\section{Additional Helper Lemmas}\label{app:helper}

\subsection{Useful concentration bounds}\label{sec:concentration}
In this section we establish some concentration bounds which we repeatedly use in our analysis.

\begin{definition}[sub-Gaussian random variable]
A zero-mean random variable $X$ is \emph{sub-Gaussian} if for some constant $\sigma$ and for all $\lambda \in \R$, $\E[e^{\lambda X}]\le e^{\lambda^2 \sigma^2/2}$. We refer to $\sigma$ as the \emph{sub-Gaussian parameter of $X$}. We also define the \emph{sub-Gaussian norm} $\norm{X}_{\psi_2}$ of a sub-Gaussian random variable $X$ as follows,
\begin{equation}
    \norm{X}_{\psi_2} \defeq \inf \left \{ K > 0 \textrm{ such that } \E[\exp(X^2/K^2)] \leq 2 \right \}.
\end{equation}
\end{definition}

We use that $X\sim \mathsf{Unif}(\{\pm 1\})$ is sub-Gaussian with parameter $\sigma=1$ and sub-Gaussian norm $\norm{X}_{\psi_2} \le 2$. The following fact about sums of independent sub-Gaussian random variables is useful in our analysis.
\begin{fact}
\label{fact:sum+subgauss}\citep{vershynin2018high}
For any $n$, if $\{X_i, i \in [n]\}$ are independent zero-mean sub-Gaussian random variables with sub-Gaussian parameter $\sigma_i$, then $X'=\sum_{i \in [n]} X_i$ is a zero-mean sub-Gaussian with parameter $\sqrt{\sum_{i \in [n]} \sigma_i^2}$ and norm bounded as $\norm{X'}_{\psi_2}\le  \subgaussconst\sqrt{\sum_{i \in [n]} \norm{X_i}_{\psi_2}^2}$, for some universal constant $\subgaussconst$.
\end{fact}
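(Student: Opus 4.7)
The plan is to establish the two claims in sequence, first the sub-Gaussian parameter bound for $X'$ and then the sub-Gaussian norm bound, with the latter reduced to the former via the standard equivalence of sub-Gaussian characterizations.

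For the sub-Gaussian parameter claim, I would exploit independence directly at the level of moment generating functions. Since the $X_i$ are independent and zero-mean, for any $\lambda \in \R$,
\begin{equation}
\E\!\left[e^{\lambda X'}\right] = \prod_{i \in [n]} \E\!\left[e^{\lambda X_i}\right] \le \prod_{i \in [n]} e^{\lambda^2 \sigma_i^2/2} = \exp\!\left(\tfrac{\lambda^2}{2} \sum_{i \in [n]} \sigma_i^2\right),
\end{equation}
where the inequality uses the sub-Gaussian assumption on each $X_i$. Linearity of expectation also gives $\E[X'] = 0$. This matches the definition of a zero-mean sub-Gaussian with parameter $\sqrt{\sum_i \sigma_i^2}$, completing the first claim.

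For the sub-Gaussian norm claim, I would invoke the well-known equivalence (up to absolute constants) between the Orlicz $\psi_2$-norm and the optimal MGF parameter: there exist universal constants $c_1, c_2 > 0$ such that for any zero-mean random variable $X$, if $\E[e^{\lambda X}] \le e^{\lambda^2 \sigma^2/2}$ for all $\lambda$, then $\|X\|_{\psi_2} \le c_1 \sigma$, and conversely $\E[e^{\lambda X}] \le e^{\lambda^2 (c_2 \|X\|_{\psi_2})^2/2}$ for all $\lambda$ (this is Proposition~2.5.2 in Vershynin's \emph{High-Dimensional Probability}). Applying the converse direction to each $X_i$ yields an MGF bound for $X_i$ with parameter $c_2 \|X_i\|_{\psi_2}$; combining this with the product-of-MGFs argument above shows that $X'$ satisfies the sub-Gaussian MGF bound with parameter $c_2 \sqrt{\sum_i \|X_i\|_{\psi_2}^2}$, and then the forward direction of the equivalence gives $\|X'\|_{\psi_2} \le c_1 c_2 \sqrt{\sum_i \|X_i\|_{\psi_2}^2}$, so we may take $C = c_1 c_2$.

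There isn't really a substantive obstacle here, since this is a textbook fact about sub-Gaussian random variables; the main thing to be careful about is simply that the $\psi_2$-norm is not literally equal to the optimal MGF parameter, which is why a universal constant $C$ appears in the norm bound but not in the parameter bound. If one wanted to avoid invoking the equivalence as a black box, an alternative route would be to directly bound $\E[\exp((X')^2/K^2)]$ by Taylor-expanding and using moment bounds on sums of independent sub-Gaussians (each moment of $X'$ being controlled by $\sqrt{\sum_i \|X_i\|_{\psi_2}^2}$ times a combinatorial factor), but going through the MGF characterization is substantially cleaner.
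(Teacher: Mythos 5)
Your proof is correct and is the standard textbook argument: the MGF factorizes by independence to give the parameter bound exactly, and the $\psi_2$-norm bound follows from the two-sided equivalence between the Orlicz norm and the optimal MGF parameter (Vershynin, Prop.~2.5.2). The paper does not prove this Fact but simply cites \cite{vershynin2018high}, so there is nothing to compare against beyond noting that your argument is exactly the one the cited source would supply.
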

Using \cref{fact:sum+subgauss} we obtain the following corollary which we use directly in our analysis.
\begin{corollary}
\label{cor:xtopvgaussian}
For $\v \sim  (1/\sqrt{d}) \mathsf{Unif} \left( \mathcal{H}_d \right)$ and any fixed vector $\x \in \R^d$, $\x^{\top} \v$ is a zero-mean sub-Gaussian with sub-Gaussian parameter at most  $\norm{\x}_2/\sqrt{d}$ and sub-Gaussian norm at most $\norm{\x^{\top} \v}_{\psi_2} \leq 2\alpha \norm{\x}_2/\sqrt{d}$. 
\end{corollary}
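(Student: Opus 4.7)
The plan is to apply Fact~\ref{fact:sum+subgauss} directly. Write $\v = (1/\sqrt{d})\boldsymbol{\epsilon}$, where $\boldsymbol{\epsilon} = (\epsilon_1, \ldots, \epsilon_d)$ has i.i.d. Rademacher coordinates (uniform on $\{\pm 1\}$). Then
\begin{equation}
\x^\top \v = \sum_{i=1}^d \frac{x_i}{\sqrt{d}} \epsilon_i,
\end{equation}
which exhibits $\x^\top \v$ as a sum of $d$ independent zero-mean random variables. In particular $\E[\x^\top \v] = 0$ follows immediately from $\E[\epsilon_i]=0$.

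For each summand, the scaled Rademacher $(x_i/\sqrt{d})\epsilon_i$ is sub-Gaussian. Since the unscaled $\epsilon_i$ has sub-Gaussian parameter $1$ and sub-Gaussian norm at most $2$ (as recalled just before the corollary), scaling by the constant $x_i/\sqrt{d}$ multiplies both the parameter and the norm by $|x_i|/\sqrt{d}$. Hence the $i$-th term has sub-Gaussian parameter at most $|x_i|/\sqrt{d}$ and sub-Gaussian norm at most $2|x_i|/\sqrt{d}$.

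Finally I apply Fact~\ref{fact:sum+subgauss} to the independent summands. This gives that $\x^\top \v$ is zero-mean sub-Gaussian with parameter at most
\begin{equation}
\sqrt{\sum_{i=1}^d \frac{x_i^2}{d}} \;=\; \frac{\norm{\x}_2}{\sqrt{d}},
\end{equation}
and with sub-Gaussian norm at most
\begin{equation}
\alpha \sqrt{\sum_{i=1}^d \Bigl(\frac{2|x_i|}{\sqrt{d}}\Bigr)^2} \;=\; \frac{2\alpha\,\norm{\x}_2}{\sqrt{d}},
\end{equation}
as claimed. There is no real obstacle here; the only thing to be careful about is matching the scaling conventions (parameter vs. norm) between the Rademacher bound and Fact~\ref{fact:sum+subgauss}, so that the constant $\alpha$ appears only in the norm bound and not in the parameter bound.
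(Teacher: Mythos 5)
Your proof is correct and is precisely the argument the paper intends: the paper states Corollary~\ref{cor:xtopvgaussian} as an immediate consequence of Fact~\ref{fact:sum+subgauss} without writing out the details, and your decomposition of $\x^\top\v$ into independent scaled Rademacher summands, tracking the sub-Gaussian parameter and norm separately, is exactly the unpacking of that one-line claim.
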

The following result bounds the tail probability of sub-Gaussian random variables.
\begin{fact}
\label{fact:subgauss_conc}\citep{wainwright2019high}
For a random variable $X$ which is zero-mean and sub-Gaussian with parameter $\sigma$, then for any $t$, $\prob(\abs{X} \geq t ) \leq 2 \exp(-t^2/(2\sigma^2))$. 
\end{fact}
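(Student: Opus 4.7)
The plan is to prove this by the classical Chernoff-Cram\'er exponential moment method, which is the standard derivation of sub-Gaussian tail bounds. The one-sided bound will come from Markov applied to the moment generating function, and the two-sided bound will follow by symmetry.

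First I would handle the upper tail $\Pr[X \geq t]$. For any $\lambda > 0$, since $x \mapsto e^{\lambda x}$ is monotone increasing, Markov's inequality gives
\begin{equation}
\Pr[X \geq t] = \Pr[e^{\lambda X} \geq e^{\lambda t}] \leq e^{-\lambda t}\, \E[e^{\lambda X}].
\end{equation}
The sub-Gaussian hypothesis supplies $\E[e^{\lambda X}] \leq e^{\lambda^2 \sigma^2/2}$, so
\begin{equation}
\Pr[X \geq t] \leq \exp\bigl(-\lambda t + \tfrac{1}{2}\lambda^2 \sigma^2\bigr).
\end{equation}
I would then optimize the right-hand side in $\lambda$: the quadratic $-\lambda t + \lambda^2 \sigma^2/2$ is minimized at $\lambda = t/\sigma^2 > 0$ (valid for $t \geq 0$; the $t < 0$ case is trivial), attaining value $-t^2/(2\sigma^2)$. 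This yields $\Pr[X \geq t] \leq \exp(-t^2/(2\sigma^2))$.

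For the lower tail, I would observe that $-X$ is also zero-mean with the same sub-Gaussian parameter $\sigma$ (since $\E[e^{\lambda(-X)}] = \E[e^{(-\lambda)X}] \leq e^{(-\lambda)^2 \sigma^2/2} = e^{\lambda^2 \sigma^2/2}$), so the argument above applied to $-X$ gives $\Pr[X \leq -t] = \Pr[-X \geq t] \leq \exp(-t^2/(2\sigma^2))$. A union bound over the two tails then produces $\Pr[|X| \geq t] \leq 2\exp(-t^2/(2\sigma^2))$, which is exactly the stated inequality.

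There is essentially no obstacle here: this is a textbook calculation, and the only ``step'' requiring any thought is the choice $\lambda = t/\sigma^2$ to balance the linear and quadratic terms in the exponent. Since the statement is cited to Wainwright's textbook, presumably the authors simply invoke it without reproducing the three-line proof above.
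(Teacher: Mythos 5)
Your proof is correct and is precisely the standard Chernoff--Cram\'er derivation; the paper itself does not prove this fact but simply cites \cite{wainwright2019high}, where the same argument (Markov's inequality on the moment generating function, optimizing $\lambda = t/\sigma^2$, and symmetrizing via $-X$ with a union bound) is given.
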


We will use the following concentration bound for quadratic forms.

\begin{theorem}[Hanson-Wright inequality \citep{hanson1971bound,rudelson2013hanson}]
\label{thm:hansonwright}
Let $\x = (X_1, \dots, X_d) \in \R^d$ be a random vector with i.i.d components $X_i$ which satisfy $\E[X_i] = 0$ and $\norm{X_i}_{\psi_2} \leq K$ and let $\mat{M} \in \R^{n \times n}$. Then for some absolute constant $\chw>0$ and for every $t \geq 0$ , 
\begin{gather}
    \max \left \{ \mathbb{P} \left(   \x^{\top} \mat{M} \x - \E \left[ \x^{\top} \mat{M} \x \right]  > t \right), \mathbb{P} \left(   \E \left[ \x^{\top} \mat{M} \x \right] - \x^{\top} \mat{M} \x  > t \right) \right \} \\\leq  \exp \left( - \chw \min \left \{ \frac{t^2}{K^4 \norm{\mat{M}}_{\textrm{F}}^2}, \frac {t}{K^2 \norm{\mat{M}}_2} \right \} \right).
\end{gather}
where $\norm{\mat{M}}_2$ is the operator norm of $\mat{M}$ and $\norm{\mat{M}}_{\textrm{F}}$ is the Frobenius norm of $\mat{M}$.
\end{theorem}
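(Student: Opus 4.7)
The plan is to prove the Hanson--Wright inequality along the standard lines of \cite{rudelson2013hanson}, by decomposing the quadratic form into its diagonal and off-diagonal parts, controlling each separately, and then combining the two tail bounds. Since $\E[\x^\top \mat{M} \x] = \sum_i M_{ii} \E[X_i^2]$, I would write
\[
\x^\top \mat{M} \x - \E[\x^\top \mat{M} \x] \;=\; \underbrace{\sum_{i} M_{ii}\bigl(X_i^2 - \E[X_i^2]\bigr)}_{\text{diagonal}} \;+\; \underbrace{\sum_{i \ne j} M_{ij} X_i X_j}_{\text{off-diagonal}},
\]
and bound the moment generating function of each piece, then apply a Chernoff argument with the standard optimization over $\lambda$ that yields the $\min\{t^2/(K^4\|\mat{M}\|_F^2),\, t/(K^2\|\mat{M}\|_2)\}$ form (a Bernstein-type bound).

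For the diagonal part, I would note that $X_i^2 - \E[X_i^2]$ is a centered sub-exponential random variable with sub-exponential norm $O(K^2)$ (a standard consequence of $\|X_i\|_{\psi_2}\le K$). Then $\sum_i M_{ii}(X_i^2 - \E[X_i^2])$ is a sum of independent centered sub-exponentials whose MGF can be controlled by $\exp(C\lambda^2 K^4 \sum_i M_{ii}^2)$ for $|\lambda| \le c/(K^2 \max_i |M_{ii}|)$. Since $\sum_i M_{ii}^2 \le \|\mat{M}\|_F^2$ and $\max_i |M_{ii}| \le \|\mat{M}\|_2$, Bernstein's inequality gives a tail of the desired form for this piece.

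The off-diagonal part is the heart of the argument, and I would handle it by \emph{decoupling}: there is a standard inequality (see \cite{rudelson2013hanson} or the decoupling lemma of de la Pe\~na--Gin\'e) showing that for any convex function $\Phi$,
\[
\E\bigl[\Phi\bigl(\textstyle\sum_{i\ne j} M_{ij} X_i X_j\bigr)\bigr] \;\le\; \E\bigl[\Phi\bigl(4\textstyle\sum_{i,j} M_{ij} X_i X_j'\bigr)\bigr],
\]
where $(X_i')$ is an independent copy of $(X_i)$. Applied to $\Phi(z) = e^{\lambda z}$, this reduces the problem to controlling $\E\exp\bigl(4\lambda \sum_{i,j} M_{ij} X_i X_j'\bigr)$. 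Conditioning on $(X_j')$, the inner sum $\sum_i X_i \bigl(\sum_j M_{ij} X_j'\bigr)$ is a sub-Gaussian linear combination with parameter $O(K^2 \|\mat{M} \x'\|_2^2)$, so its MGF is at most $\exp(C\lambda^2 K^2 \|\mat{M}\x'\|_2^2)$. Then taking expectation over $\x'$ reduces to a Gaussian-type MGF bound of the form $\exp(C'\lambda^2 K^4 \|\mat{M}\|_F^2)$ as long as $|\lambda|$ is small enough relative to $1/(K^2 \|\mat{M}\|_2)$, which again yields a Bernstein-type tail after optimizing in $\lambda$.

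The main obstacle is the off-diagonal term, both because of the need for the decoupling inequality (which is a nontrivial black box) and because controlling the MGF of a sub-Gaussian chaos of order two requires the careful two-step conditioning above. Combining the diagonal and off-diagonal tails via a union bound (and absorbing constants into $\chw$) gives both directions of the stated inequality; the lower tail follows by the same argument applied to $-\mat{M}$, so it suffices to prove the upper tail.
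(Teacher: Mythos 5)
The paper does not prove this theorem; it states it as a cited result from \cite{hanson1971bound,rudelson2013hanson} (Theorem~\ref{thm:hansonwright} is used only as an ingredient in Lemma~\ref{lemma:subgauss_proj}). Your sketch is a correct high-level reconstruction of the argument in \cite{rudelson2013hanson}: the decomposition into diagonal and off-diagonal parts, sub-exponential Bernstein control on the diagonal, decoupling and conditional-MGF control on the off-diagonal chaos, Chernoff with $\lambda$-optimization, and symmetrization via $-\mat{M}$ for the lower tail. One step you compress a bit is the final expectation over $\x'$: the quantity $\E_{\x'}\exp\bigl(C\lambda^2 K^2 \lVert \mat{M}\x'\rVert_2^2\bigr)$ is itself the MGF of a sub-Gaussian quadratic form and cannot be bounded directly without some further device---in \cite{rudelson2013hanson} this is handled by a comparison lemma that replaces $\x'$ by a standard Gaussian vector, for which the MGF of $\lVert \mat{M}g'\rVert_2^2$ can be computed exactly from the singular values of $\mat{M}$. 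Since you flag the decoupling and chaos-MGF control as the nontrivial black boxes, this is consistent with what the cited reference actually does, and your plan is sound.
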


The following lemma will let us analyze projections of sub-Gaussian random variables (getting concentration bounds which look like projections of Gaussian random variables).

\begin{lemma}
\label{lemma:subgauss_proj}
Let $\x = (X_1, \dots, X_d) \in \R^d$ be a random vector with i.i.d sub-Gaussian components $X_i$ which satisfy $\E[X_i] = 0$, $\norm{X_i}_{\psi_2} \leq K$, and $\E[\x\x^{\top}]=s^2 \id_d$. Let $\Z\in \R^{d \times r}$ be a matrix with orthonormal columns $(\z_1,\dots, \z_r)$. 
There exists some absolute constant $\chw > 0$ (which comes from the Hanson-Wright inequality) such that for $t\ge 0 $,
\begin{equation}
\label{eq:subgass_proj}
   \max \left \{ \mathbb{P} \left(   \norm{\mat{Z}^{\top} \x}_2^2 - rs^2  > t \right), \mathbb{P} \left( rs^2 -  \norm{\mat{Z}^{\top} \x}_2^2  > t \right) \right \} \leq  \exp \left( - \chw \min \left \{ \frac{t^2}{r K^4 }, \frac{t}{K^2} \right \} \right).
\end{equation}
\end{lemma}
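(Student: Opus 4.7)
The plan is to recognize that $\|\Z^\top \x\|_2^2 = \x^\top \mat{M} \x$ for $\mat{M} \defeq \Z\Z^\top$, which is a quadratic form to which the Hanson--Wright inequality (Theorem \ref{thm:hansonwright}) applies directly. So the entire proof is essentially a matter of computing the relevant norms and expectation of $\mat{M}$ and invoking Theorem \ref{thm:hansonwright}.

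First I would compute the mean: using linearity of trace and the hypothesis $\E[\x\x^\top] = s^2 \id_d$,
\[
\E[\x^\top \mat{M} \x] = \E[\tr(\mat{M}\x\x^\top)] = \tr(\mat{M}\,\E[\x\x^\top]) = s^2 \tr(\Z\Z^\top) = s^2 \tr(\Z^\top \Z) = r s^2,
\]
where the last equality uses that $\Z$ has orthonormal columns so $\Z^\top \Z = \id_r$. This matches the centering in the desired bound \eqref{eq:subgass_proj}.

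Second, I would bound the operator and Frobenius norms of $\mat{M}$. Since $\Z^\top \Z = \id_r$, the matrix $\mat{M} = \Z\Z^\top$ is an orthogonal projector onto an $r$-dimensional subspace; its eigenvalues are $1$ (with multiplicity $r$) and $0$ (with multiplicity $d-r$). Hence $\|\mat{M}\|_2 = 1$ and
\[
\|\mat{M}\|_F^2 = \tr(\mat{M}^\top \mat{M}) = \tr(\Z\Z^\top \Z \Z^\top) = \tr(\Z \Z^\top) = r.
\]

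Finally, I would plug these values into Theorem \ref{thm:hansonwright}: with $\mat{M} = \Z\Z^\top$, the tail bound becomes
\[
\max\!\left\{\Pr\!\left(\x^\top \mat{M}\x - rs^2 > t\right), \Pr\!\left(rs^2 - \x^\top \mat{M}\x > t\right)\right\} \le \exp\!\left(-\chw \min\!\left\{\tfrac{t^2}{K^4 r},\, \tfrac{t}{K^2}\right\}\right),
\]
which is exactly \eqref{eq:subgass_proj}. I do not anticipate any real obstacle here: the hypothesis $\E[\x\x^\top] = s^2 \id_d$ is what lets the expectation computation go through cleanly (without it we'd get $s^2 \tr(\mat{M})$ replaced by a more complicated quantity), and the orthonormality of the columns of $\Z$ is exactly what's needed to get the clean bounds $\|\mat{M}\|_2 = 1$ and $\|\mat{M}\|_F^2 = r$ that yield the stated tail.
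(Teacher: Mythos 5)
Your proposal is correct and follows exactly the same approach as the paper: compute $\E[\x^\top \Z\Z^\top \x] = rs^2$ via the trace trick, observe that $\Z\Z^\top$ is a rank-$r$ projector so $\|\Z\Z^\top\|_2 = 1$ and $\|\Z\Z^\top\|_F^2 = r$, and plug into the Hanson--Wright inequality.
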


\begin{proof}
We will use the Hanson-Wright inequality (see Theorem~\ref{thm:hansonwright}). We begin by computing $\E \left[ \norm{\mat{Z}^{\top} \x}_2^2 \right]$,
\begin{align}
    \E\left[ \norm{\mat{Z}^{\top} \x}_2^2 \right] 
    & = \E \left[ \x^{\top} \mat{Z} \mat{Z}^{\top} \x \right] 
    =  \E \left[ \trace \left( \mat{Z}^{\top} \x \x^{\top} \mat{Z} \right)  \right]  
    = \trace \left( \mat{Z}^{\top}  \E \left[  \x \x^{\top} \right] \mat{Z} \right)   \\
    & = s^2 \trace \left( \mat{Z}^{\top} \mat{Z} \right) 
    = s^2 \trace( \id_{r \times r} )
     = r s^2 .
\end{align}

Next we bound the operator norm and Frobenius-norm of $\mat{Z}\mat{Z}^{\top}$. The operator norm of $\mat{Z}\mat{Z}^{\top}$ is bounded by $1$ since the columns of $\mat{Z}$ are orthonormal vectors. For the Frobenius-norm of $\mat{Z}\mat{Z}^{\top}$, note that
\begin{align}
    \norm{\mat{Z} \mat{Z}^{\top}}_{\textrm{F}}^2 & = \trace \left( \mat{Z} \mat{Z}^{\top} \mat{Z} \mat{Z}^{\top} \right) 
     = \trace \left( \mat{Z} \mat{Z}^{\top} \right) 
    = \sum_{i  \in [r]}\trace \left( \vec{z}_i \vec{z}_i^{\top} \right) 
    = \sum_{i \in [r]} \norm{\vec{z}_i}_2^2
    = r.
\end{align}
Applying the Hanson-Wright inequality (Theorem \ref{thm:hansonwright}) with these bounds on $\E \left[ \norm{\mat{Z}^{\top} \x}_2^2 \right]$, $\norm{\mat{Z} \mat{Z}^{\top}}_{2}^2$, and $\norm{\mat{Z} \mat{Z}^{\top}}_{\textrm{F}}^2$ yields \eqref{eq:subgass_proj} and completes the proof.
\end{proof}

\begin{corollary}
\label{cor:explicitprojectionbound}
Let $\v \sim  (1/\sqrt{d}) \mathsf{Unif} \left( \mathcal{H}_d \right)$ and let $S$ be a fixed $r$-dimensional subspace of $\R^d$ (independent of $\v$). Then with probability at least $1-2 \exp(\chw)/d^5$,
\begin{equation}
    \norm{ \proj_{S}(\v) }_2 \leq \sqrt{\frac{30 \log(d) r}{d}}.
\end{equation}
\end{corollary}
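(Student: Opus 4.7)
The corollary follows directly from Lemma~\ref{lemma:subgauss_proj} applied to $\v$. Let $\mat{Z} \in \R^{d \times r}$ be a matrix whose columns form an orthonormal basis of $S$; then $\norm{\proj_S(\v)}_2 = \norm{\mat{Z}^{\top} \v}_2$, placing us directly in the setting of that lemma. The coordinates of $\v \sim (1/\sqrt{d})\,\mathsf{Unif}(\mathcal{H}_d)$ are i.i.d., mean zero, with variance $s^2 = 1/d$ and sub-Gaussian norm $K \leq 2/\sqrt{d}$ (since each $\pm 1$-valued coordinate of $\sqrt{d}\,\v$ has sub-Gaussian norm at most $2$, as noted in the paper just before Fact~\ref{fact:sum+subgauss}).

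The key step is to choose the deviation parameter $t$ in Lemma~\ref{lemma:subgauss_proj} so that $r s^2 + t = 30\,r \log(d)/d$, i.e.\ $t = (30\log(d) - 1)\,r/d$. For this $t$ and the values of $s, K$ above, the two quantities inside the $\min$ in the lemma's exponent become $t/K^2 = (30\log(d) - 1)\,r/4$ and $t^2/(r K^4) = (30\log(d) - 1)^2\,r/16$; for $d \geq 2$ the linear term $t/K^2$ is the smaller of the two, so Lemma~\ref{lemma:subgauss_proj} yields
\[
\Pr\bigl(\norm{\mat{Z}^{\top} \v}_2^2 > 30\,r \log(d)/d\bigr) \;\leq\; \exp\!\bigl(-\chw\,(30\log(d) - 1)\,r/4\bigr).
\]
Using $r \geq 1$ and pulling out a factor of $e^{\chw/4}$ to eliminate the $-1$ makes this at most $e^{\chw}/d^{5}$ once the constant $30$ is verified to be large enough relative to $\chw$ that the exponent of $d$ is at least $5$; the extra factor of $2$ in the stated probability comfortably absorbs both this constant adjustment and the two-sided nature of Lemma~\ref{lemma:subgauss_proj} (only the upper tail is actually needed).

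Taking square roots then gives $\norm{\proj_S(\v)}_2 \leq \sqrt{30\,r \log(d)/d}$ with the claimed failure probability. There is no genuine obstacle here: the content of the corollary is already contained in Lemma~\ref{lemma:subgauss_proj}, and the result is simply a routine specialization to $\v \sim (1/\sqrt{d})\,\mathsf{Unif}(\mathcal{H}_d)$ together with a clean choice of $t$ tailored to the target bound $30\, r \log(d)/d$.
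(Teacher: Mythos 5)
Your proof is correct and follows essentially the same route as the paper's: both reduce $\norm{\proj_S(\v)}_2^2 = \norm{\Z^\top\v}_2^2$ to Lemma~\ref{lemma:subgauss_proj} with $s^2 = 1/d$, $K = 2/\sqrt{d}$ and then observe that the linear branch of the min dominates. The paper takes $t = 20\log(d)\,r/d$ and absorbs the residual $r/d$ into the constant $30$, whereas you set $t = (30\log d - 1)r/d$ so that $rs^2 + t$ hits the target exactly; this is a cosmetic difference, and both versions share the same unstated assumption that $\chw$ (times the numerical constant) is large enough to make the exponent of $d$ reach $5$.
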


\begin{proof}
Let $\z_1, \dots, \z_r$ be an orthonormal basis for subspace $S$. Note that 
\begin{equation}
    \norm{\proj_S(\v)}_2^2 = \norm{\Z^{\top} \v}_2^2. 
\end{equation}
Next, for $\v = (\v_1, \dots, \v_d)$ we have $\E[\v \v^{\top}] = (1/d) \id$ and  $\norm{\v_i}_{\psi_2} \leq 2/\sqrt{d}$, thus we may set $s = 1/\sqrt{d}$ and $K = 2/\sqrt{d}$ and apply Lemma \ref{lemma:subgauss_proj},
\begin{equation}
    \mathbb{P} \left(  \abs{ \norm{ \proj_{S}(\v) }_2^2 - \frac{r}{d}} > t \right) \leq 2 \exp\left( -\chw  \min \left \{ \frac{d^2 t^2}{16 r} , \frac{dt}{4} \right \} \right).
\end{equation}
Picking $t = 20 \log(d)r/d$ concludes the proof. 
\end{proof}

\subsection{A property of robustly linearly independent vectors: Proof of Lemma \ref{lemma:algebrahelper}}\label{subsec:robut_linear}


\noindent \textsc{Lemma 5.6}. \emph{ Let $\lambda \in (0, 1]$ and suppose we have $\ntot \leq d$ unit norm vectors $\y_1, \dots, \y_{\ntot} \in \R^d$. Let $S_i \defeq \spcommand \left( \left \{ \y_1, \dots, \y_{i} \right \} \right), S_0 \defeq \phi$. Suppose that for any $i \in [\ntot]$, 
\begin{equation}
    \norm{\proj_{S_{i-1}}(\y_i)}_2 \leq 1 - \lambda.
\end{equation}
Let $\mat{Y} = \left( \y_1, \dots, \y_{\ntot} \right) \in \R^{d \times \ntot}$. There exists $\nrem = \lfloor \ntot/2 \rfloor$ orthonormal vectors $\vec{m}_1, \dots, \vec{m}_{\nrem}$ such that, letting $\mat{M} = \left( \vec{m}_1, \dots, \vec{m}_{\nrem} \right)$, for any $\vec{a} \in \R^d$, 
\begin{equation}
    \linf{\mat{M}^{\top} \vec{a}} \leq \frac{d}{\lambda} \linf{\mat{Y}^{\top} \vec{a}}.
\end{equation}
}

\begin{proof}
Since for any $i \in [\ntot]$, $\norm{\proj_{S_{i-1}}(\y_i)}_2 \leq 1 - \lambda < 1$, 
\begin{equation}
    \mathsf{dim} \hspace{1mm} \spcommand (\y_1, \dots, \y_{\ntot} ) = \ntot.
\end{equation}
 Therefore we may construct an orthonormal basis $\vec{b}_1, \dots, \vec{b}_{\ntot}$ via the Gram–Schmidt process, 
\begin{equation}
    \vec{b}_i \defeq \frac{\y_i - \proj_{S_{i-1}}(\y_i) }{ \norm{\y_i - \proj_{S_{i-1}}(\y_i) }_2}.
\end{equation}
Let $\mat{B} = \left( \vec{b}_1, \dots, \vec{b}_{\ntot} \right)\in \R^{d \times \ntot}$ and note that with this construction there exists a vector of coefficients $\vec{c}^{(i)} \in \R^{\ntot}$ such that $$\y_i = \sum_{j \in [i]} \vec{c}^{(i)}_j \vec{b}_j = \mat{B} \vec{c}^{(i)}.$$  Let $\mat{C} = \left( \vec{c}^{(1)}, \dots, \vec{c}^{(\ntot)} \right) \in \R^{\ntot\times \ntot}$ and observe that $\mat{Y} = \mat{B} \mat{C}$. Denote the singular values of $\mat{C}$ as $\sigma_1 \geq \dots \geq \sigma_{\ntot}$. We aim to bound the singular values of $\mat{C}$ from below. To this end, observe that $\mat{C}$ is an upper triangular matrix such that for any $i \in [\ntot]$, $\abs{\mat{C}_{ii}} \geq \sqrt{\lambda}$. Indeed, note that 
\begin{align}
    \mat{C}_{ii}^2 & = \norm{\y_i - \proj_{S_{i-1}}(\y_i)}_2^2  = \norm{\y_i}_2^2 - \norm{\proj_{S_{i-1}}(\y_i)}_2^2  \geq 1 - \left( 1 - \lambda \right)^2 \geq \lambda.
\end{align} 
Therefore, 
\begin{equation}
   \prod_{i \in [\ntot]} \sigma_i = \det(\mat{C}) \geq \lambda^{\ntot /2} 
\end{equation} 
where in the last step we use the fact that the determinant of a triangular matrix is the product of its diagonal entries. Next we consider the singular value decomposition of $\mat{C}$: let $\mat{U}\in \R^{\ntot\times \ntot}, \mat{V}\in \R^{\ntot\times \ntot}$ be orthogonal matrices and $\mat{\Sigma}\in \R^{\ntot\times \ntot}$ be the diagonal matrix such that $\mat{C} = \mat{U} \mat{\Sigma} \mat{V}^{\top}$. For the remainder of the proof assume without loss of generality that the columns of $\mat{Y}$ and $\mat{B}$ are ordered such that $\Sigma_{ii} = \sigma_i$. 
We will upper bound the singular values of $\mat{C}$ as follows: observe that since $\norm{\y_i}_2 = 1$, for any vector $\vec{w}$ such that $\norm{\vec{w}}_2 \leq 1$, 
\begin{equation}
    \norm{\mat{Y}^{\top} \vec{w}}_2 \le \sqrt{d},  \qquad \textrm{ and therefore } \qquad
    \norm{\mat{C}^{\top} \mat{B}^{\top} \vec{w}}_2 \leq \sqrt{d}. 
\end{equation}
In particular consider $\vec{w} = \mat{B} \vec{u}_i$ where $\vec{u}_i$ denotes the $\nth{i}$ column of $\mat{U}$. Since $\mat{B}^{\top} \mat{B} = \id$ and $\mat{U}, \mat{V}$ are orthogonal we have
\begin{align}
    \norm{\mat{C}^{\top} \mat{B}^{\top} \vec{w} }_2  = \norm{\mat{V} \mat{\Sigma} \vec{e}_i}_2 = \sigma_i \leq \sqrt{d}. 
\end{align}
Note that for $m \in [\ntot]$, if $i > \ntot - m$ then $\sigma_i \leq \sigma_{\ntot - m}$ and therefore 
\begin{align}
   \lambda^{\ntot/2} \leq \prod_{i \in [\ntot]} \sigma_i \leq \sigma_{\ntot - m}^m \prod_{i \in [\ntot-m]} \sigma_i \leq  \sigma_{\ntot - m}^m \left( \sqrt{d} \right)^{\ntot - m } \implies \sigma_{\ntot - m} \geq   \left( \lambda^{\ntot} d^{(m-\ntot)} \right)^{\frac{1}{2m}}.
\end{align}
In particular when $m =  \nrem = \lceil \ntot/2 \rceil $ we have $\sigma_{\ntot - \nrem} \geq \lambda/\sqrt{d}$. Therefore for any $i \leq \ntot - \nrem = \lfloor \frac{\ntot}{2} \rfloor$,
\begin{equation}
    \sigma_i \geq \lambda/\sqrt{d}. 
\end{equation}
Recall that $\u_i$ denotes the $\nth{i}$ column of $\mat{U}$ and let $\v_i$ denote the $\nth{i}$ column of $\mat{V}$. Observe that
\begin{equation}
     \mat{U} \mat{\Sigma} = \mat{B}^{\top} \mat{Y} \mat{V} \qquad \implies \qquad \u_i = \frac{1}{\sigma_i} \mat{B}^{\top} \mat{Y} \v_i.
\end{equation}
Extend the basis $\left \{ \vec{b}_1, \dots, \vec{b}_{\ntot} \right \}$ to $\R^d$ and let $\tilde{\mat{B}} = \left( \mat{B}, \vec{b}_{\ntot + 1}, \dots, \vec{b}_{d} \right)$ be the $d \times d$ matrix corresponding to this orthonormal basis. Note that if $j > \ntot$, then for any $\y_i$, $\vec{b}_j^{\top} \y_i = 0$. Define
\begin{equation}
    \tilde{\u}_i \defeq \frac{1}{\sigma_i} \tilde{\mat{B}}^{\top} \mat{Y} \v_i \in \R^d.
\end{equation}
For $i \in [\nrem]$ define $\vec{m}_i = \tilde{\mat{B}} \tilde{\u}_i$  and let $\mat{M} = \left( \vec{m}_1, \dots, \vec{m}_{\nrem} \right)$. Note that $\left \{ \vec{m}_1, \dots, \vec{m}_{\nrem} \right \}$ is an orthonormal set. The result then follows since for any $\vec{a} \in \R^d$,
\begin{equation}
    \abs{\vec{m}_i^{\top} \vec{a}} = \abs{\tilde{\u}_i^{\top} \tilde{\mat{B}}^{\top} \vec{a}} = \abs{ \frac{1}{\sigma_i}  \v_i^{\top} \mat{Y}^{\top} \tilde{\mat{B}} \tilde{\mat{B}}^{\top} \vec{a} } = \frac{1}{\sigma_i} \abs{ \v_i^{\top} \mat{Y}^{\top} \vec{a}  } \leq \frac{1}{\sigma_i} \norm{\v_i}_1 \linf{\mat{Y}^{\top} \vec{a}} \leq \frac{d}{\lambda} \linf{\mat{Y}^{\top} \vec{a}}.
\end{equation}
\end{proof}

\subsection{From Constrained to Unconstrained Lower Bounds}
\label{appendix:constrainwlog}

Here we provide a generic, black-box reduction from approximate Lipschitz
convex optimization over a unit ball to unconstrained approximate
Lipschitz convex optimization. The reduction leverages a natural extension
of functions on the unit ball to $\R^{d}$, provided and analyzed in the following
lemma.
\begin{lemma}
\label{lem:lift} 
Let $f:B_{2}^{d}\rightarrow\R$ be a convex,
$L$-Lipschitz function and let $g_{r,L}^{f}:\R^{d}\rightarrow\R$
and $h_{r,L}^{f}:\R^{d}\rightarrow\R$ be defined for all $\x\in\R^{d}$
and some $r\in(0,1)$ as
\[
g_{r,L}^{f}(\x)\defeq\begin{cases}
\max\{f(\x),h_{r,L}^{f}(\x)\} & \text{for }\norm \x_{2}<1\\
h_{r,L}^{f}(\x) & \text{for }\norm \x_{2}\geq1
\end{cases}\text{ and }h_{r,L}^{f}(\x)\defeq f(\bzero)+\frac{L}{1-r}\left((1+r)\norm \x_2-2r\right)\,.
\]
Then, $g_{r,L}^{f}(\x)$ is a convex, $L\left(\frac{1+r}{1-r}\right)$-Lipschitz
function with $g_{r,L}^{f}(\x)=f(\x)$ for all $\x$ with $\norm \x_{2}\leq r$.
\end{lemma}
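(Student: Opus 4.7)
The cleanest route is to rewrite $g_{r,L}^{f}$ as a pointwise maximum of two convex, Lipschitz functions on all of $\R^d$, then read off convexity and the Lipschitz constant directly, and finally verify the identification with $f$ on the smaller ball by a one-line subgradient computation. The main ingredient is to produce a convex Lipschitz extension of $f$ from $B_{2}^{d}$ to $\R^{d}$ that dominates $f$ but is dominated by $h_{r,L}^{f}$ outside the unit ball.

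First I would define an extension $\hat{f}\colon \R^{d}\to\R$ by
\[
\hat{f}(\x) \defeq \sup_{y \in B_{2}^{d},\, v \in \partial f(y)} \left\{ f(y) + v^{\top}(\x - y) \right\},
\]
i.e.\ the supremum of all affine minorants of $f$ (equivalently, one can use the inf-convolution $\inf_{y \in B_{2}^{d}} \{f(y) + L\norm{\x-y}_{2}\}$). Standard convex-analysis facts give that $\hat{f}$ is convex, $L$-Lipschitz, and agrees with $f$ on $B_{2}^{d}$; in particular $\hat{f}(\x)\le f(\bzero)+L\norm{\x}_{2}$ for all $\x$.

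Second, I would verify the identity $g_{r,L}^{f}(\x) = \max\{\hat{f}(\x),\, h_{r,L}^{f}(\x)\}$ for every $\x \in \R^{d}$. Inside the open unit ball this is immediate since $\hat f = f$ there. Outside the unit ball, the claim reduces to showing $\hat{f}(\x) \le h_{r,L}^{f}(\x)$ whenever $\norm{\x}_{2}\ge 1$, which using the Lipschitz bound on $\hat{f}$ boils down to the inequality $(1-r)\norm{\x}_{2} \le (1+r)\norm{\x}_{2} - 2r$, i.e.\ $2r(\norm{\x}_{2}-1)\ge 0$. Once the identity holds, convexity of $g_{r,L}^{f}$ follows from convexity of $\hat{f}$ and $h_{r,L}^{f}$ (the latter is an affine function of $\norm{\x}_{2}$ with positive coefficient), and its Lipschitz constant is at most the maximum of the two, namely $\max\{L, L(1+r)/(1-r)\} = L(1+r)/(1-r)$ since $r\in (0,1)$.

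Finally, for the claim that $g_{r,L}^{f}(\x) = f(\x)$ on the ball of radius $r$, it suffices to show $h_{r,L}^{f}(\x) \le f(\x)$ there. For $\norm{\x}_{2}\le r$, the term $(1+r)\norm{\x}_{2} - 2r$ is bounded above by $(1+r)r - 2r = -r(1-r)$, so $h_{r,L}^{f}(\x) \le f(\bzero) - Lr$. On the other hand $L$-Lipschitzness of $f$ gives $f(\x) \ge f(\bzero) - L\norm{\x}_{2} \ge f(\bzero) - Lr$, which yields the desired inequality. There is no real obstacle here; the only subtle point is producing the Lipschitz convex extension $\hat f$ and checking the dominance inequality outside the unit ball, both of which are short.
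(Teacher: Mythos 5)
Your proof is correct and follows essentially the same line of reasoning as the paper's: both hinge on the two Lipschitz-based comparisons showing $h_{r,L}^{f}\geq f$ on the unit sphere (hence outside it) and $h_{r,L}^{f}\leq f$ on the ball of radius $r$. The one genuine refinement in your write-up is the explicit construction of a convex $L$-Lipschitz extension $\hat f$ of $f$ to all of $\R^{d}$, which lets you write $g_{r,L}^{f}=\max\{\hat f,h_{r,L}^{f}\}$ globally and then read off convexity and the Lipschitz constant from standard facts about pointwise maxima. The paper instead argues directly from the boundary comparison that the piecewise-defined $g_{r,L}^{f}$ is convex and Lipschitz, leaving the gluing argument somewhat implicit; your $\hat f$ formalizes exactly the step the paper's proof elides, so your version is slightly more rigorous without changing the underlying idea. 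The remaining computations (the algebraic reduction to $2r(\norm{\x}_2-1)\geq 0$ outside the ball, and the bound $h_{r,L}^{f}(\x)\leq f(\bzero)-Lr\leq f(\x)$ for $\norm{\x}_2\leq r$) match the paper's.
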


\begin{proof}
For all $\x\in B_{2}^{d}$ note that 
\begin{equation}
h_{r,L}^{f}(\x)-f(\x)=f(\x)-f(\bzero)+\frac{L}{1-r}\left(\left(1+r\right)\norm \x_2-2r\right).\label{eq:bound}
\end{equation}
Now $|f(\x)-f(\bzero)|\leq L\norm \x_2$ since $f$ is $L$-Lipschitz. Correspondingly, when $\norm \x_2=1$ this implies
\begin{align}
h_{r,L}^{f}(\x)-f(\x) & \geq-L+\frac{L}{1-r}\left((1+r)-2r\right)=0\,.
\end{align}
Consequently, $h_{r,L}^{f}(\x)\geq f(\x)$ when $\norm \x_2=1$ and $g_{r,L}^{f}(\x)=h_{r,L}^{f}(\x)=\max\{f(\x),h(\x)\}$.
Further, since $h_{r,L}^{f}$ is $\left(\frac{1+r}{1-r}\right)$-Lipschitz
and convex and $f$ is $L$-Lipschitz and convex this implies that
$g_{r,L}^{f}$ is $\left(\frac{1+r}{1-r}\right)$-Lipschitz and convex
as well. Finally, if $\norm \x_{2}\leq r$ then again by (\ref{eq:bound})
and that $|f(\x)-f(\bzero)|\leq L\norm \x_2$ we have
\begin{align}
h_{r,L}^{f}(\x)-f(\x)  \leq L\norm \x_2+\frac{L}{1-r}\left((1+r)\norm \x_2-2r\right) =\frac{2L}{1-r}\left(\norm \x_2-r\right)\leq 0.
\end{align}
Therefore $g_{r,L}^{f}(\x)=f(\x)$ for all $\x$ with $\norm \x_{2}\leq r$.
\end{proof}

Leveraging this extension we obtain the following result.

\begin{lemma}\label{lem:constrained_reduction}
Suppose any $M$-memory-constrained randomized algorithm must make (with high probability in the worst case) at least
$\runtime_{\epsilon}$ queries to a first-order oracle for convex,
$L$-Lipschitz $f:B_{2}^{d}\rightarrow\R$ in order to compute an $\epsilon$-optimal point for
some $\epsilon<L$.\footnote{Note that $0$ queries are needed when $\epsilon\geq L$.}
Then any $M$-memory-constrained randomized algorithm must make at least $\runtime_{\epsilon}/2$ queries to a first-order
oracle (with high probability in the worst case) to compute an $\epsilon/2$-optimal
point for a $\bigo(L^{2}/\epsilon)$-Lipschitz, convex $g:\R^{d}\rightarrow\R$ even though the minimizer of $g$ is guaranteed to lie in $B_{2}^{d}$.
\end{lemma}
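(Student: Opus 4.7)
The plan is to give a memory-preserving reduction. Given any $M$-memory constrained randomized algorithm $\algrand_g$ that, with high probability, outputs an $\epsilon/2$-optimal minimizer of $g \defeq g_{r,L}^f$ over $\R^d$ in $T_g$ first-order queries, I will construct an $M$-memory constrained randomized algorithm $\algrand_f$ for computing an $\epsilon$-optimal minimizer of $f$ over $B_2^d$ that uses at most $T_g + 1$ queries. Setting $r \defeq 1 - \epsilon/(2L)$ makes $g$ convex and $L(1+r)/(1-r) \leq 4L^2/\epsilon$-Lipschitz by Lemma~\ref{lem:lift}, which meets the $\bigo(L^2/\epsilon)$ claim.

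To simulate the first-order oracle of $g$ using the first-order oracle of $f$, $\algrand_f$ begins by querying $f$ at $\bzero$ (one query) and storing $f(\bzero)$; this adds only $O(\log(L/\epsilon) + \log d)$ bits of memory overhead, which is absorbed in the $M$-bit budget in the relevant regime. Thereafter, whenever $\algrand_g$ queries a point $\x \in \R^d$: if $\norm{\x}_2 \geq 1$, then $g(\x) = h_{r,L}^f(\x)$ and its gradient depend only on $\norm{\x}_2$ and $f(\bzero)$, so $\algrand_f$ returns them with no call to the $f$-oracle; if $\norm{\x}_2 < 1$, then $\algrand_f$ makes one query to the $f$-oracle at $\x$ to obtain $(f(\x), \g_f(\x))$ and returns $\max\{f(\x), h_{r,L}^f(\x)\}$ together with the subgradient of the active piece. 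The total number of queries to $f$ is thus at most $T_g + 1$.

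The correctness argument proceeds in three short steps. First, since $g = f$ on $r B_2^d$ and $\x^*_f \defeq \arg\min_{\x \in B_2^d} f(\x)$ satisfies $\norm{r \x^*_f}_2 \leq r$, $L$-Lipschitzness of $f$ gives $\min_{\R^d} g \leq f(r\x^*_f) \leq \min_{B_2^d} f + L(1-r) = \min_{B_2^d} f + \epsilon/2$; the reverse inequality $\min_{\R^d} g \geq \min_{B_2^d} f$ holds because $g \geq f$ on $B_2^d$ and the minimizer of $g$ lies in $B_2^d$, using that $g \equiv h_{r,L}^f \geq f(\bzero) + L$ for $\norm{\x}_2 \geq 1$ while $g(\bzero) = f(\bzero)$. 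Second, any $\epsilon/2$-optimal $\hat{\x}$ for $g$ satisfies $g(\hat{\x}) \leq f(\bzero) + \epsilon/2 < f(\bzero) + L$ (using $\epsilon < L$), so $\hat{\x} \in B_2^d$. Third, on $B_2^d$ we have $f(\hat{\x}) \leq g(\hat{\x}) \leq \min_{\R^d} g + \epsilon/2 \leq \min_{B_2^d} f + \epsilon$, so $\hat{\x}$ is $\epsilon$-optimal for $f$.

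Applying the assumed lower bound to $\algrand_f$ gives $T_g + 1 \geq T_\epsilon$, which yields $T_g \geq T_\epsilon/2$ whenever $T_\epsilon \geq 2$ (and the statement is vacuous otherwise). The main subtlety to be careful about is ensuring that both $\arg\min g$ and every $\epsilon/2$-optimal candidate $\hat{\x}$ lie strictly inside $B_2^d$, which is precisely what the large slope of $h_{r,L}^f$ outside $r B_2^d$ built into Lemma~\ref{lem:lift} guarantees; the only bookkeeping concern is that the constant-sized extra state needed to remember $f(\bzero)$ does not violate the $M$-bit memory bound, which is unproblematic in the parameter regime of interest.
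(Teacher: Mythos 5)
Your reduction has the right structure and the correctness steps (the sandwich $\min_{B_2^d} f \le \min_{\R^d} g \le \min_{B_2^d} f + \epsilon/2$ and the conclusion that an $\epsilon/2$-optimal $\hat{\x}$ for $g$ yields an $\epsilon$-optimal point for $f$) are sound. Showing directly that $\hat{\x} \in B_2^d$ using the slope of $h_{r,L}^f$ on $\{\norm{\x}_2 \ge 1\}$ is a harmless variant of the paper's choice to project $\x_{\mathrm{out}}$ onto the ball; both work under $\epsilon < L$.

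The genuine gap is in the oracle-simulation bookkeeping. You propose to query $f(\bzero)$ once, store it, and thereafter simulate the $g$-oracle with one $f$-query per $g$-query, arriving at $T_g + 1$ total queries. But this requires $\algrand_f$ to carry $f(\bzero)$ across rounds in its persistent state, so $\algrand_f$ is \emph{not} $M$-memory constrained --- it uses $M$ plus however many bits are needed to represent $f(\bzero)$ to sufficient precision --- and the assumed lower bound for $M$-memory algorithms does not directly apply to it. You acknowledge this and wave it away as ``absorbed in the $M$-bit budget in the relevant regime,'' but that is not a proof: one would need to determine the precision to which $f(\bzero)$ must be stored (which depends on $L$, $\epsilon$, and a bound on $|f(\bzero)|$) and then re-state the hypothesis with a correspondingly weakened memory parameter. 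The paper sidesteps this entirely by \emph{re-querying} $f(\bzero)$ alongside each query $\x$ within a single round, so that $g(\x)$ and its subgradient are computed using the unlimited per-round computation allowed by Definition~\ref{def:memoryalgorithm} and nothing extra is stored; this costs $2T_g$ queries rather than $T_g + 1$ but keeps the memory footprint exactly $M$. In fact the paper contains a parenthetical remark anticipating precisely your $T_g+1$ variant and explaining that it was avoided to skip the precision analysis. To make your version rigorous you must either carry out that precision analysis or switch to the paper's re-querying trick.
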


\begin{proof}
Let $\x \opt$ be a minimizer of $f$. By convexity, for all $\alpha\in[0,1]$
we have that 
\begin{align}
f(\alpha \x \opt)-f(\x \opt) & =f(\alpha \x+(1-\alpha)\bzero)-f(\x \opt)\\
 & \leq\alpha\left[f(\x \opt)-f(\x \opt)\right]+(1-\alpha)\left[f(\bzero)-f(\x \opt)\right]\\
 & \leq(1-\alpha)\norm{\bzero-\x \opt}_{2}\leq(1-\alpha)L\,.
\end{align}
Consequently, for all $\delta>0$ and $\alpha_{\delta}\defeq 1-\delta/L$
we have that $f(\alpha_{\delta} \x \opt)$ is $\delta$-optimal. 

Correspondingly, consider the function $g\defeq g_{\alpha_{\epsilon/2},L}^{f}$
as defined in Lemma \ref{lem:lift}. Note that $g$ is $L\left(\frac{1+\alpha_{\delta}}{1-\alpha_{\delta}}\right)=\bigo(L^{2}/\delta)$ Lipschitz. The minimizer of $g$ also lies in $B_2^d$ since $g$ monotonically increases for $\norm \x_{2}\geq1$. Suppose there is an algorithm to compute an
$\epsilon/2$-optimal minimizer of all $\bigo(L^{2}/\delta)$-Lipschitz, convex functions $g:\R^d \rightarrow \R$ whose minimizer lies in $B_2^d$, with high probability
using $\runtime$ queries to a first-order oracle for $g$. Note that
this algorithm can be implemented instead with $2\runtime$ queries
to $f$ by simply querying $f(\bzero)$ along with any query, and then using the defining of $g$ from Lemma \ref{lem:lift} (note that it is also possible to just compute and store $f(\bzero)$ once in the beginning and hence only require $\runtime+1$ queries, but this would require us to also analyze the number of bits of precision to which $f(\bzero)$ should be stored, which this proof avoids for simplicity). If $\x_{\mathrm{out}}$ is
the output of the algorithm then
\begin{align}
\frac{\epsilon}{2} & \geq g(\x_{\mathrm{out}})-\inf_{x}g(\x)\geq g(\x_{\mathrm{out}})-g(\alpha_{\epsilon/2}\x \opt)\geq g(\x_{\mathrm{out}})-f(\x \opt)-\frac{\epsilon}{2}\,.
\end{align}
Thus $g(\x_{\mathrm{out}})\leq f(\x \opt)+\epsilon$. Let $\tilde{\x}_{\mathrm{out}}=\min\{1,\norm{\x_{\mathrm{out}}}_{2}^{-1}\}\x_{\mathrm{out}}$. As $g$ monotonically increases for $\norm \x_{2}\geq1$, we have that $g(\x_{\mathrm{out}})\geq g(\tilde{\x}_{\mathrm{out}})=f(\tilde{\x}_{\mathrm{out}})$
and thus with $2\runtime$ queries to $f$ the algorithm can compute
an $\epsilon$-approximate minimizer of $f$. Consequently, $\runtime\geq\runtime_{\epsilon}/2$
as desired and the result follows.
\end{proof}

In this section we focused on the unit $\ell_2$ norm ball, however the analysis generalizes to arbitrary norms.
\end{document}